\newtheorem{thm}{Theorem} 
\newtheorem{remark}{Remark} 
\newtheorem{lma}[thm]{Lemma}
\newtheorem{prop}[thm]{Proposition}
\crefname{thm}{theorem}{theorems}
\crefname{lma}{lemma}{lemmas}
\crefname{cor}{corollary}{corollaries}
\crefname{prop}{proposition}{propositions}
\newcommand{\bump}{\hspace{1.5em}}
\newcommand\opn[1]{\operatorname{#1}}
\newcommand\wt[1]{{\widetilde{#1}}}
\newcommand\ux[2]{{{#1}}^{#2}}
\newcommand{\lb}{\lbrack}
\newcommand{\rb}{\rbrack}
\newcommand{\RR}{\mathbb{R}}
\newcommand{\EX}{\mathbb{E}}
\DeclareMathOperator*{\argmin}{arg\,min}
\title{Leader Stochastic Gradient Descent for Distributed Training of Deep Learning Models: Extension}
\author{%
  $\text{Yunfei Teng}^{*,1}$\\
  \texttt{yt1208@nyu.edu} \\
  \And
  $\text{Wenbo Gao}^{*,2}$\\
  \texttt{wg2279@columbia.edu }\\
  \And Francois Chalus\\
  \texttt{chalusf3@gmail.com }\\
  \And 
  $\text{Anna Choromanska}^{**}$\\
  \texttt{ac5455@nyu.edu}\\
  \And
  Donald Goldfarb\\
  \texttt{goldfarb@columbia.edu }\\
  \And
  Adrian Weller\\
  \texttt{aw665@cam.ac.uk}
}
\begin{document}
\maketitle
{\let\thefootnote\relax\footnotetext{{*,1}: Equal contribution. Algorithm development and implementation on deep models.}}
{\let\thefootnote\relax\footnotetext{{*,2}: Equal contribution. Theoretical analysis and implementation on matrix completion.}}
{\let\thefootnote\relax\footnotetext{{* *}: Senior lead.}}

\vspace{-0.2in}
\begin{abstract}
We consider distributed optimization under communication constraints for training deep learning models. We propose a new algorithm, whose parameter updates rely on two forces: a regular gradient step, and a corrective direction dictated by the currently best-performing worker (leader). Our method differs from the parameter-averaging scheme EASGD~\cite{EASGD} in a number of ways: (i) our objective formulation does not change the location of stationary points compared to the original optimization problem; (ii) we avoid convergence decelerations caused by pulling local workers descending to different local minima to each other (i.e. to the average of their parameters); (iii) our update by design breaks the curse of symmetry (the phenomenon of being trapped in poorly generalizing sub-optimal solutions in symmetric non-convex landscapes); and (iv) our approach is more communication efficient since it broadcasts only parameters of the leader rather than all workers. We provide theoretical analysis of the batch version of the proposed algorithm, which we call Leader Gradient Descent (LGD), and its stochastic variant (LSGD). Finally, we implement an asynchronous version of our algorithm and extend it to the multi-leader setting, where we form groups of workers, each represented by its own local leader (the best performer in a group), and update each worker with a corrective direction comprised of two attractive forces: one to the local, and one to the global leader (the best performer among all workers). The multi-leader setting is well-aligned with current hardware architecture, where local workers forming a group lie within a single computational node and different groups correspond to different nodes. For training convolutional neural networks, we empirically demonstrate that our approach compares favorably to  state-of-the-art baselines. \textbf{\textit{This work is a gentle extension of~\cite{DBLP:conf/nips/TengGCCGW19}}}. \textcolor{blue}{Finally, we developed a PyTorch-based comprehensive distributed training library for deep networks that can be found in~\url{https://github.com/yunfei-teng/LSGD}. The library contains several methods, among them LSGD.}
\end{abstract} \vspace{-0.2in}

\section{Introduction}
\label{sec:Intro}

As deep learning models and data sets grow in size, it becomes increasingly helpful to parallelize their training over a distributed computational environment. These models lie at the core of many modern machine-learning-based systems for image recognition~\cite{NIPS2012_4824}, speech recognition~\cite{DBLP:conf/icassp/Abdel-HamidMJP12}, natural language processing~\cite{DBLP:conf/emnlp/WestonCA14}, and more. This paper focuses on the parallelization of the data, not the model, and considers collective communication scheme~\cite{DBLP:journals/corr/WickramasingheL16} that is most commonly used nowadays. A typical approach to data parallelization in deep learning~\cite{B-NH2018arxiv,GAJKB2018SPAA} uses multiple workers that run variants of SGD~\cite{bottou-98x} on different data batches. Therefore, the effective batch size is increased by the number of workers. Communication ensures that all models are synchronized and critically relies on a scheme where each worker broadcasts its parameter gradients to all the remaining workers. This is the case for DOWNPOUR~\cite{DOWNPOUR} (its decentralized extension, with no central parameter server, based on the ring topology can be found in~\cite{pmlr-v80-lian18a}) or Horovod~\cite{sergeev2018horovod} methods. These techniques require frequent communication (after processing each batch) to avoid instability/divergence, and hence are communication expensive. Moreover, training with a large batch size usually hurts generalization~\cite{KMNST2017ICLR,JKABFBS2018ICLRW,SL2018ICLR} and convergence speed~\cite{MBB2018ICML,DBLP:journals/corr/abs-1708-03888}. LARS~\cite{LARS} proposes a layer-wise learning rate scaling method to overcome the difficulties of large batch training, but this method does not guarantee accelerations of convergence when the batch size is not large enough.

Another approach, called Elastic Averaging (Stochastic) Gradient Decent, EA(S)GD~\cite{EASGD}, introduces elastic forces linking the parameters of the local workers with central parameters computed as a moving average over time and space (i.e. over the parameters computed by local workers). This method allows less frequent communication as workers by design do not need to have the same parameters but are instead periodically pulled towards each other. The objective function of EASGD, however, has stationary points which are not stationary points of the underlying objective function (see Proposition~\ref{prop:EASGD} in the Supplement), thus optimizing it may lead to sub-optimal solutions for the original problem. Further, EASGD can be viewed as a parallel extension of the averaging SGD scheme~\cite{polyak1992acceleration} and as such it inherits the downsides of the averaging policy. On non-convex problems, when the iterates are converging to different local minima (that may potentially be globally optimal), the averaging term can drag the iterates in the wrong directions and significantly hurt the convergence speed of both local workers and the master. In symmetric regions of the optimization landscape, the elastic forces related with different workers may cancel each other out causing the master to be permanently stuck in between or at the maximum between different minima, and local workers to be stuck at the local minima or on the slopes above them. This can result in arbitrarily bad generalization error. We refer to this phenomenon as the ``curse of symmetry''. Landscape symmetries are common in a plethora of non-convex problems~\cite{LLAHLWZ2019IEEETRINFTH,ge2017no,DBLP:journals/focm/SunQW18,DBLP:journals/tit/SunQW17,ge2016matrix}, including deep learning~\cite{DBLP:journals/corr/BadrinarayananM15, DBLP:conf/aistats/ChoromanskaHMAL15,pmlr-v80-liang18a,K2016NIPS}.\\ 
\begin{wrapfigure}{r}{0.55\textwidth} 
\vspace{-0.18in}
  \begin{center}
    \includegraphics[width=0.55\textwidth]{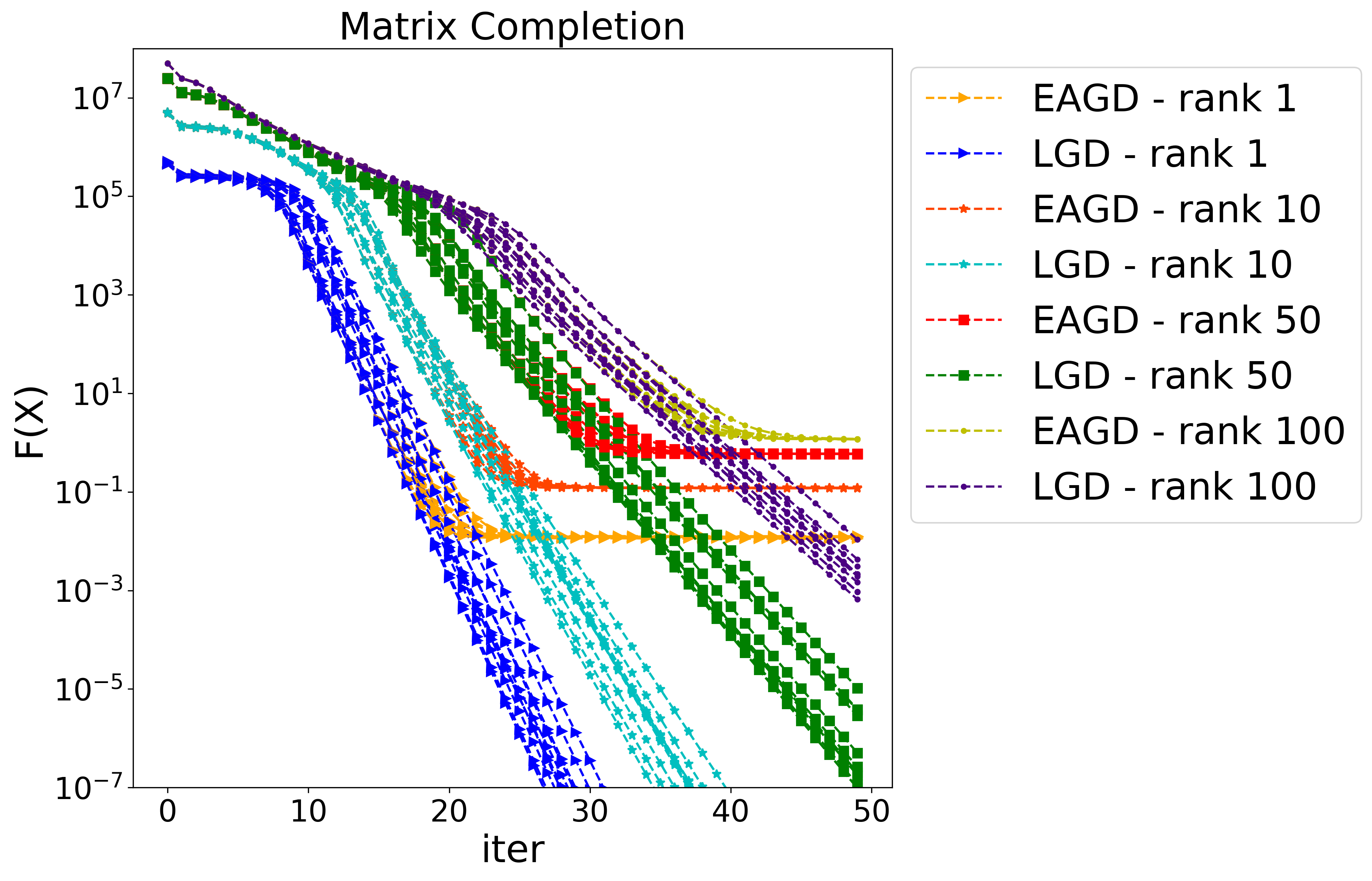}
  \end{center}
  \vspace{-0.19in}
  \caption{Low-rank matrix completion problems solved with EAGD and LGD. The dimension $d = 1000$ and four ranks $r \in \{1,10,50,100\}$ are used. The reported value for each algorithm is the value of the best worker ($8$ workers are used in total) at each step.}
  \label{fig:matrix_completion}
  \vspace{-0.15in}
\end{wrapfigure}
This paper revisits the EASGD update and modifies it in a simple, yet powerful way which overcomes the above mentioned shortcomings of the original technique. We propose to replace the elastic force relying on the average of the parameters of local workers by an attractive force linking the local workers and the current best performer among them (leader). Our approach reduces the communication overhead related with broadcasting parameters of all workers to each other, and instead requires broadcasting only the leader parameters. The proposed approach easily adapts to a typical hardware architecture comprising of multiple compute nodes where each node contains a group of workers and local communication, within a node, is significantly faster than communication between the nodes. We propose a multi-leader extension of our approach that adapts well to this hardware architecture and relies on forming groups of workers (one per compute node) which are attracted both to their local and global leader. To reduce the communication overhead, the correction force related with the global leader is applied less frequently than the one related with the local leader.

Finally, our L(S)GD approach, similarly to EA(S)GD, tends to explore wide valleys in the optimization landscape when the pulling force between workers and leaders is set to be small. This property often leads to improved generalization performance of the optimizer~\cite{DBLP:journals/corr/ChaudhariCSL16,DBLP:journals/corr/ChaudhariBZST17}.

The paper is organized as follows: Section~\ref{sec:Alg} introduces the L(S)GD approach, Section~\ref{sec:TA} provides theoretical analysis, Section~\ref{sec:Exp} contains empirical evaluation, and finally Section~\ref{sec:Con} concludes the paper. Theoretical proofs and additional theoretical and empirical results are contained in the Supplement. 

\section{Leader (Stochastic) Gradient Descent ``L(S)GD'' Algorithm}
\label{sec:Alg}

\subsection{Motivating example}

\Cref{fig:matrix_completion} illustrates how elastic averaging can impair convergence. To obtain the figure we applied EAGD (Elastic Averaging Gradient Decent) and LGD to the matrix completion problem of the form: $\min_X \left\{ \frac{1}{4}\|M - XX^T\|_F^2: X \in \RR^{d \times r} \right\}$. This problem is non-convex but is known to have the property that all local minimizers are global minimizers \cite{LLAHLWZ2019IEEETRINFTH}. For four choices of the rank $r$, we generated $10$ random instances of the matrix completion problem, and solved each with EAGD and LGD, initialized from the same starting points (we use $8$ workers). For each algorithm, we report the progress of the \emph{best} objective value at each iteration, over all workers. \Cref{fig:matrix_completion} shows the results across $10$ random experiments for each rank.

It is clear that EAGD slows down significantly as it approaches a minimizer. Typically, the center $\wt{X}$ of EAGD is close to the average of the workers, which is a poor solution for the matrix completion problem when the workers are approaching different local minimizers, even though all local minimizers are globally optimal. This induces a pull on each node \emph{away} from the minimizers, which makes it extremely difficult for EAGD to attain a solution of high accuracy. In comparison, LGD does not have this issue. Further details of this experiment, and other illustrative examples of the difference between EAGD and LGD, can be found in the Supplement. 

\subsection{Symmetry-breaking updates}

Next we explain the basic update of the L(S)GD algorithm. Consider first the single-leader setting and the problem of minimizing loss function $L$ in a parallel computing environment. The optimization problem is given as

\vspace{-0.25in}
\begin{equation}
    \min_{x^1, x^2, \dots, x^l}L(x^1, x^2, \dots, x^l) \coloneqq 
    \min_{x^1, x^2, \dots, x^l}\sum_{i=1}^l\mathbb{E}[f(x^{i};\xi^{i})]
    + \frac{\lambda}{2} \lvert\lvert x^{i} - \Tilde{x} \rvert\rvert^2,
    \label{eq:objective1}
\end{equation}
\vspace{-0.15in}

where $l$ is the number of workers, $x^1,x^2,\dots,x^l$ are the parameters of the workers and $\tilde{x}$ are the parameters of the leader. The best performing worker, i.e. $\Tilde{x}=\argmin\limits_{x^{1}, x^2, \dots, x^{l}}
    \mathbb{E}[f(x^{i};\xi^{i})]$), and $\xi^i$s are data samples drawn from some probability distribution $\mathcal{P}$. $\lambda$ is the hyperparameter that denotes the strength of the force pulling the workers to the leader. In the theoretical section we will refer to $\EX[f(x^{i};\xi^{i})]$ as simply $f(x^{i})$. This formulation can be further extended to the multi-leader setting. The optimization problem is modified to the following form

\vspace{-0.2in}
\begin{eqnarray}
    &&\min_{x^{1,1}, x^{1,2}, \dots, x^{n,l}}L(x^{1,1}, x^{1,2}, \dots, x^{n,l}) \nonumber\\ 
    &&\coloneqq \min_{x^{1,1}, x^{1,2}, ..., x^{n, l}} \sum_{j=1}^{n} \sum_{i=1}^{l}
    \mathbb{E}[f(x^{j, i};\xi^{j, i})]
    + \frac{\lambda}{2} \lvert\lvert x^{j, i} - \Tilde{x}^j \rvert\rvert^2 
    + \frac{\lambda_G}{2} \lvert\lvert x^{j, i} - \Tilde{x} \rvert\rvert^2,
    \label{eq:objective2}
\end{eqnarray}
\vspace{-0.15in}

where $n$ is the number of groups, $l$ is the number of workers in each group, $\tilde{x}^j$ is the local leader of the $j^{\text{th}}$ group (i.e. $\Tilde{x}^j=\argmin_{x^{j,1}, x^{j,2}, \dots, x^{j,l}}
    \mathbb{E}[f(x^{j,i};\xi^{j,i})]$), $\tilde{x}$ is the global leader (the best worker among local leaders, i.e. $\Tilde{x} = \argmin\limits_{x^{1,1}, x^{1,2}, \dots, x^{n,l}}
    \mathbb{E}[f(x^{j,i};\xi^{j,i})]$), $x^{j,1}, x^{j,2}, \dots, x^{j,l}$ are the parameters of the workers in the $j^{\text{th}}$ group, and $\xi^{j,i}$s are the data samples drawn from $\mathcal{P}$. $\lambda$ and $\lambda_G$ are the hyperparameters that denote the strength of the forces pulling the workers to their local and global leader respectively. 

The updates of the LSGD algorithm are captured below, where $t$ denotes iteration. The first update shown in Equation~\ref{eq:one} is obtained by taking the gradient descent step on the objective in Equation \ref{eq:objective2} with respect to variables $x^{j, i}$. The stochastic gradient of $\mathbb{E}[f(x^{i};\xi^{i})]$ with respect to $x^{j, i}$ is denoted as $g_t^{j, i}$ (in case of LGD the gradient is computed over all training examples) and $\eta$ is the learning rate.

\vspace{-0.1in}
\begin{equation}
    x_{t+1}^{j, i}=x_{t}^{j, i}-\eta g_t^{j, i}(x_t^{j, i})
    - \lambda (x_t^{j, i} - \Tilde{x}_t^j)
    - \lambda_G (x_t^{j, i} - \Tilde{x}_t)
    \label{eq:one}
\end{equation}
where $\Tilde{x}_{t+1}^j$ and $\Tilde{x}_{t+1}$ are the local and global leaders defined above.



\begin{algorithm}[t!] 
\caption{LSGD Algorithm (Asynchronous)} 
\label{alg:LSGD}
\begin{algorithmic}
\State \textbf{Input:} pulling coefficients $\lambda, \lambda_G$, searching scopes $\gamma, \gamma_G$, learning rate $\eta$, local/global communication periods $\tau, \tau_G$
\State \textbf{Initialize:} \\
\hspace{0.41in}Randomly initialize $x^{1, 1}, x^{1, 2}, ..., x^{n, l}$\\
\hspace{0.41in}Set iteration counters $t^{j,i} = 0$\\
\hspace{0.41in}Set $\Tilde{x}^j_0=\argmin\limits_{x^{j, 1}, ..., x^{j, l}} 
     \mathbb{E}[f(x^{j, i};\xi_{0}^{j, i})], \Tilde{x}_0=\argmin\limits_{x^{1, 1}, ..., x^{n, l}} 
     \mathbb{E}[f(x^{j, i};\xi_{0}^{j, i})]$;
\Repeat
    \ForAll{$j = 1,2,\dots,n$, $i = 1,2,\dots,l$} 
    \Comment{Do in parallel for each worker}
    \State Draw random sample $\xi^{j,i}_{t^{j,i}}$
    \State $x^{j,i} \xleftarrow{} x^{j,i} - \eta g_t^{j,i}(x^{j,i}) -\frac{\gamma}{\tau} (x^{j, i} - \Tilde{x}^j) - \frac{\gamma_G}{\tau_G} (x^{j, i} - \Tilde{x})$
    \State $t^{j,i} = t^{j,i} + 1$;
    \If{$nl\tau$ divides $(\sum\limits_{j=1}^{n}\sum\limits_{i=1}^{l} t^{j,i})$}
        \State $\Tilde{x}^j = \argmin_{x^{j, 1}, ..., x^{j, l}} 
            \mathbb{E}[f(x^{j, i};\xi^{j, i}_{t^{j,i}})]$.
        \Comment{Determine the local best workers}
        \State $x^{j, i}\xleftarrow{}x^{j, i}
               - \lambda (x^{j, i} - \Tilde{x}^j)$
        \Comment{Pull to the local best workers}
    \EndIf
    \If{$nl\tau_G$ divides  $(\sum\limits_{j=1}^{n}\sum\limits_{i=1}^{l} t^{j,i})$}
        \State $\Tilde{x}=\argmin_{x^{1, 1}, ..., x^{n, l}} 
            \mathbb{E}[f(x^{j, i};\xi^{j, i}_{t^{j,i}})]$.
        \Comment{Determine the global best worker}
        \State $x^{j, i}\xleftarrow{}x^{j, i}
               - \lambda_G (x^{j, i} - \Tilde{x})$
        \Comment{Pull to the global best worker}
    \EndIf
    \EndFor
\Until{termination}
\end{algorithmic}
\end{algorithm}
\setlength{\textfloatsep}{0.3cm}
\setlength{\floatsep}{0.3cm}

Equation~\ref{eq:one} describes the update of any given worker and is comprised of the regular gradient step and two corrective forces (in single-leader setting the third term disappears as $\lambda_G = 0$ then). These forces constitute the communication mechanism among the workers and pull all the workers towards the currently best local and global solution to ensure fast convergence. As opposed to EASGD, the updates performed by workers in LSGD break the curse of symmetry and avoid convergence decelerations that result from workers being pulled towards the average which is inherently influenced by poorly performing workers. In this paper, instead of pulling workers to their averaged parameters, we propose the mechanism of pulling the workers towards the leaders. The flavor of the update resembles a particle swarm optimization approach~\cite{488968}, which is not typically used in the context of stochastic gradient optimization for deep learning. Our method may therefore be viewed as a dedicated particle swarm optimization approach for training deep learning models in the stochastic setting and parallel computing environment.

Next we describe the LSGD algorithm in more detail. We rely on the collective communication scheme. In order to reduce the amount of communication between workers, it is desired to update the leaders less frequently than every iteration.
Also, in practice each worker can have a different speed. To prevent waiting for the slower workers and achieve communication efficiency, we implement the algorithm in the asynchronous operation mode. In this case, the communication period is determined based on the total number of iterations computed across all workers and the communication is performed every $nl\tau$ or $nl\tau_G$ iterations, where $\tau$ and $\tau_G$ denote local and global communication periods, respectively. In practice, we use $\tau_G > \tau$ since communication between workers lying in different groups is more expensive than between workers within one group, as explained above. When communication occurs, all workers are updated at the same time (i.e. pulled towards the leaders) in order to take advantage of the collective communication scheme. Between communications, workers run their own local SGD optimizer to find a better solution nearby the leaders.
The hyperparameters $\gamma$ and $\gamma_G$ control the overall searching scopes of the workers around their local leaders and global leader, respectively. In practice, to encourage the workers to stay inside their search ranges, we pull the workers towards the leaders as well.
The resulting LSGD method is very simple, and is depicted in Algorithm~\ref{alg:LSGD}. \textbf\textit{{Note that this algorithm is a slight modification of the algorithm from~\cite{DBLP:conf/nips/TengGCCGW19}}}. The modification is based on the observation that pulling the workers towards leaders could improve the performance even if the leaders become stale.

The next section provides a theoretical description of the single-leader batch (LGD) and stochastic (LSGD) variants of our approach.

\section{Theoretical Analysis}\label{sec:TA}

We assume without loss of generality that there is a single leader. The objective function with multiple leaders is given by $f(x) + \frac{\lambda_1}{2}\|x - z_1\|^2 + \ldots + \frac{\lambda_c}{2}\|x - z_c\|^2$, which is equivalent to $f(x) + \frac{\Lambda}{2}\|x - \wt{z}\|^2$ for $\Lambda = \sum_{i=1}^c \lambda_i$ and $\wt{z} = \frac{1}{\Lambda}\sum_{i=1}^c \lambda_i z_i$. Proofs for this section are deferred to the Supplement.

\subsection{Convergence Rates for Stochastic Strongly Convex Optimization}\label{sub:strongconvex}

We first show that LSGD obtains the same convergence rate as SGD for stochastic strongly convex problems \cite{BCN2018SIAMREV}. In \Cref{sub:improve} we discuss how and when LGD can obtain \emph{better} search directions than gradient descent. We discuss non-convex optimization in \Cref{sub:nonconvex}. Throughout \Cref{sub:strongconvex}, $f$ will typically satisfy:

\textbf{Assumption 1} $f$ is $M$-Lipschitz-differentiable and $m$-strongly convex, which is to say, the gradient $\nabla f$ satisfies $\|\nabla f(x) - \nabla f(y)\| \leq M\|x - y\|$, and $f$ satisfies $f(y) \geq f(x) + \nabla f(x)^T(y-x) + \frac{m}{2}\|y-x\|^2$.
We write $x^\ast$ for the unique minimizer of $f$, and $\kappa := \frac{M}{m}$ for the condition number of $f$.

\subsubsection{Convergence Rates}\label{subsub:rates}
The key technical result is that LSGD satisfies a similar one-step descent in expectation as SGD, with an additional term corresponding to the pull of the leader. To provide a unified analysis of `pure' LSGD as well as more practical variants where the leader is updated infrequently or with errors, we consider a general iteration $x_+ = x - \eta(\wt{g}(x) + \lambda(x-z))$, where $z$ is an arbitrary guiding point; that is, $z$ may not be the minimizer of $\ux{x}{1},\ldots,\ux{x}{p}$, nor even satisfy $f(z) \leq f(\ux{x}{i})$. Since the nodes operate independently except when updating $z$, we may analyze LSGD steps for each node individually, and we write $x = \ux{x}{i}$ for brevity.

\begin{thm}\label{standard_rate}
Let $f$ satisfy Assumption 1. Let $\wt{g}(x)$ be an unbiased estimator for $\nabla f(x)$ with $\opn{Var}(\wt{g}(x)) \leq \sigma^2 + \nu \|\nabla f(x)\|^2$, and let $z$ be any point. Suppose that $\eta,\lambda$ satisfy $\eta \leq (2M(\nu+1))^{-1}$ and $\eta\lambda \leq (2\kappa)^{-1}, \eta\sqrt{\lambda} \leq (\kappa\sqrt{2m})^{-1}$. Then the LSGD step satisfies
\begin{equation}\label{eq:lsgd_onestep_descent}
\EX f(x_+) - f(x^\ast) \leq (1-m\eta)(f(x) - f(x^\ast)) - \eta\lambda( f(x) - f(z)) + \frac{\eta^2 M}{2}\sigma^2.
\end{equation}
Note the presence of the new term $-\eta\lambda(f(x)-f(z))$ which speeds up convergence when $f(z) \leq f(x)$, i.e the leader is better than $x$. If the leader $z_k$ is always chosen so that $f(z_k) \leq f(x_k)$ at every step $k$, then $\limsup_{k \rightarrow \infty} \EX f(x_k) - f(x^\ast) \leq \frac{1}{2}\eta\kappa\sigma^2$. If $\eta$ decreases at the rate $\eta_k = \Theta(\frac{1}{k})$, then $\EX f(x_k) - f(x^\ast) \leq O(\frac{1}{k})$.
\end{thm}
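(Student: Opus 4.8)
The plan is to obtain the one-step inequality \eqref{eq:lsgd_onestep_descent} from the smoothness descent lemma, peel off three contributions (a gradient-norm term, a cross term coupling $\nabla f(x)$ with $x-z$, and a term in $\|x-z\|^2$) using strong convexity and the stepsize conditions, and then turn the one-step bound into the asymptotic claims via a scalar recursion.

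First I would apply the $M$-smoothness descent lemma to the update $x_+ = x - \eta(\wt{g}(x) + \lambda(x-z))$, take the expectation over $\wt{g}(x)$, and insert $\EX[\wt{g}(x)] = \nabla f(x)$ together with $\EX\|\wt{g}(x)\|^2 \le (\nu+1)\|\nabla f(x)\|^2 + \sigma^2$. Writing $c := \nabla f(x)^T(x-z)$ and $R := \|x-z\|^2$ and collecting terms, this yields
\[
\EX f(x_+) \le f(x) + \left(-\eta + \frac{M\eta^2(\nu+1)}{2}\right)\|\nabla f(x)\|^2 - \eta\lambda(1-M\eta)\,c + \frac{M\eta^2\lambda^2}{2}R + \frac{M\eta^2}{2}\sigma^2 .
\]
The condition $\eta \le (2M(\nu+1))^{-1}$ forces the gradient coefficient to be at most $-\tfrac34\eta$; I would spend $-\tfrac12\eta\|\nabla f(x)\|^2$ of it through the (strong-convexity) Polyak--\L{}ojasiewicz inequality $\|\nabla f(x)\|^2 \ge 2m(f(x)-f(x^\ast))$ to produce the target term $-m\eta(f(x)-f(x^\ast))$, keeping $-\tfrac14\eta\|\nabla f(x)\|^2$ in reserve.

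The cross term is where the real work lies, and I expect it to be the main obstacle. I would split $-\eta\lambda(1-M\eta)c = -\eta\lambda c + M\eta^2\lambda c$. Strong convexity in the form $c \ge (f(x)-f(z)) + \tfrac m2 R$ converts the first piece into exactly $-\eta\lambda(f(x)-f(z)) - \tfrac{m\eta\lambda}{2}R$, which is the leader term we want. The second, correction piece $M\eta^2\lambda c$ cannot be discarded because $f(x)-f(z)$ has indefinite sign (the guiding point $z$ need not be better than $x$); instead I would bound it by Young's inequality $c \le \tfrac{1}{2\beta}\|\nabla f(x)\|^2 + \tfrac\beta2 R$ with $\beta = 2M\eta\lambda$, so that its gradient-norm part equals $\tfrac14\eta\|\nabla f(x)\|^2$ and is cancelled by the reserve above. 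Gathering the three sources of $R$ leaves a coefficient $\tfrac{\eta\lambda}{2}\left(-m + M\eta\lambda + 2M^2\eta^2\lambda\right)$, which the conditions $\eta\lambda \le (2\kappa)^{-1}$ and $\eta \le (2M)^{-1}$ (giving $M\eta\lambda \le \tfrac m2$ and $2M^2\eta^2\lambda \le \tfrac m2$) make nonpositive; the stated bound $\eta\sqrt\lambda \le (\kappa\sqrt{2m})^{-1}$ is the same $R$-control in squared form and is in fact implied by the other two. Dropping the nonpositive $R$-coefficient then gives \eqref{eq:lsgd_onestep_descent}.

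Finally, set $a_k := \EX f(x_k) - f(x^\ast)$. When the leader satisfies $f(z_k) \le f(x_k)$ the term $-\eta\lambda(f(x_k)-f(z_k))$ is nonpositive and may be dropped, leaving $a_{k+1} \le (1-m\eta)a_k + \tfrac{M\eta^2}{2}\sigma^2$. Since $\eta \le (2M)^{-1}$ gives $0 < 1-m\eta < 1$, unrolling this geometric recursion yields $a_k \le (1-m\eta)^k a_0 + \tfrac{M\eta\sigma^2}{2m}$, so $\limsup_{k\to\infty} a_k \le \tfrac12\eta\kappa\sigma^2$. For $\eta_k = \Theta(1/k)$ all three stepsize conditions hold once $k$ is large, the same recursion holds with $\eta_k$ in place of $\eta$, and the classical diminishing-stepsize induction for SGD (balancing the $1-m\eta_k$ contraction against the $O(\eta_k^2)$ noise) gives $a_k \le O(1/k)$.
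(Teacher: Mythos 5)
Your proof is correct and follows essentially the same route as the paper's \Cref{main_descent}: the $M$-smoothness descent lemma, the bias--variance bound on $\EX\lb\wt{g}(x)^T\wt{g}(x)\rb$, strong convexity to convert $-\nabla f(x)^T(x-z)$ into $-(f(x)-f(z))-\frac{m}{2}\|x-z\|^2$, the stepsize conditions to make the residual terms nonpositive, and then the standard recursions of \cite{BCN2018SIAMREV} for the asymptotic claims. The only difference is bookkeeping of the second-order cross term $M\eta^2\lambda\,\nabla f(x)^T(x-z)$, which you absorb via a Young inequality with tuned parameter $\beta=2M\eta\lambda$ rather than the paper's Cauchy--Schwarz/AM--GM grouping; this lets you cancel it exactly against the reserved $\frac{\eta}{4}\|\nabla f(x)\|^2$, recover the stated contraction factor $1-m\eta$ cleanly, and correctly observe that the hypothesis $\eta\sqrt{\lambda}\le(\kappa\sqrt{2m})^{-1}$ is implied by the other two conditions.
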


The $O(\frac{1}{k})$ rate of LSGD matches that of comparable distributed methods. Both Hogwild~\cite{RRWN2011NIPS} and EASGD achieve a rate of $O(\frac{1}{k})$ on strongly convex objective functions. We note that published convergence rates are not available for many distributed algorithms (including DOWNPOUR~\cite{DOWNPOUR}).


\subsubsection{Communication Periods}\label{subsub:comm}
In practice, communication between distributed machines is costly. The LSGD algorithm has a \emph{communication period} $\tau$ for which the leader is only updated every $\tau$ iterations, so each node can run independently during that period. This $\tau$ is allowed to differ between nodes, and over time, which captures the asynchronous and multi-leader variants of LSGD. We write $x_{k,j}$ for the $j$-th step during the $k$-th period. It may occur that $f(z) > f(x_{k,j})$ for some $k,j$, that is, the current solution $x_{k,j}$ is now better than the last selected leader. In this case, the leader term $\lambda(x-z)$ may no longer be beneficial, and instead simply pulls $x$ toward $z$. There is no general way to determine how many steps are taken before this event. However, we can show that if $f(z) \geq f(x)$, then 
\begin{equation}\label{eq:stale_leader}
\EX f(x_+) \leq f(z) + \frac{1}{2}\eta^2 M \sigma^2,
\end{equation}
so the solution will not become \emph{worse} than a stale leader (up to gradient noise). As $\tau$ goes to infinity, LSGD converges to the minimizer of $\psi(x) = f(x) + \frac{\lambda}{2}\|x - z\|^2$, which is quantifiably better than $z$ as captured in \Cref{w_value}. Together, these facts show that LSGD is safe to use with long communication periods as long as the original leader is good.

\begin{thm}\label{w_value}
Let $f$ be $m$-strongly convex, and let $x^\ast$ be the minimizer of $f$. For fixed $\lambda,z$, define $\psi(x) = f(x) + \frac{\lambda}{2}\|x - z\|^2$. The minimizer $w$ of $\psi$ satisfies $
f(w) - f(x^\ast) \leq \frac{\lambda}{m+\lambda}(f(z) - f(x^\ast))$.
\end{thm}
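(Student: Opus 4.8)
The plan is to exploit the first-order optimality condition for the minimizer $w$ of $\psi$ together with the two one-sided inequalities that follow from $m$-strong convexity alone. I deliberately avoid any smoothness assumption on $f$, since none is available in this statement, so every bound on function values must be extracted from strong convexity rather than from a Lipschitz-gradient upper bound.

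First I would record the stationarity condition. Since $\psi$ is $(m+\lambda)$-strongly convex it has a unique minimizer $w$, characterized by $\nabla\psi(w) = \nabla f(w) + \lambda(w - z) = 0$, i.e. $\nabla f(w) = \lambda(z - w)$. This identity is the hinge of the argument: it lets me trade the gradient of $f$ at $w$ for the displacement $z - w$, so that both of the inequalities I produce next can be written in the single scalar $\|z-w\|^2$.

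Next I would produce those two inequalities. For the upper bound on $f(w) - f(x^\ast)$ I use the Polyak--{\L}ojasiewicz inequality $f(w) - f(x^\ast) \le \frac{1}{2m}\|\nabla f(w)\|^2$, which is itself a direct consequence of $m$-strong convexity (minimize the lower bound $f(x^\ast) \ge f(w) + \nabla f(w)^T(x^\ast - w) + \frac{m}{2}\|x^\ast - w\|^2$ over its free second argument). Substituting $\nabla f(w) = \lambda(z-w)$ gives $f(w) - f(x^\ast) \le \frac{\lambda^2}{2m}\|z - w\|^2$. For a matching lower bound on $f(z) - f(w)$, I apply strong convexity from $w$ toward $z$, namely $f(z) \ge f(w) + \nabla f(w)^T(z-w) + \frac{m}{2}\|z-w\|^2$, and substitute the same identity to obtain $f(z) - f(w) \ge (\lambda + \tfrac{m}{2})\|z-w\|^2$.

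Finally I would eliminate $\|z-w\|^2$. Writing $A = f(w) - f(x^\ast)$ and $B = f(z) - f(x^\ast)$, so that $f(z) - f(w) = B - A$, the two inequalities read $A \le \frac{\lambda^2}{2m}\|z-w\|^2$ and $\|z-w\|^2 \le \frac{2(B-A)}{2\lambda + m}$, hence $A \le \frac{\lambda^2}{m(2\lambda+m)}(B - A)$. Solving this linear inequality for $A$ and using $m(2\lambda+m) + \lambda^2 = (m+\lambda)^2$ yields $A \le \frac{\lambda^2}{(m+\lambda)^2}B$; since $\frac{\lambda}{m+\lambda} \le 1$ this already implies the claimed bound $A \le \frac{\lambda}{m+\lambda}B$, and it is in fact sharp, as the quadratic $f(x)=\frac{m}{2}\|x-x^\ast\|^2$ shows. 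I expect the only genuinely delicate point to be bookkeeping the directions of the two inequalities so that eliminating $\|z-w\|^2$ produces a contraction factor strictly below one; everything else is a routine consequence of strong convexity and the stationarity condition.
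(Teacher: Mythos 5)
Your proof is correct, and it in fact establishes a strictly stronger bound than the one stated. The overall skeleton coincides with the paper's: both arguments start from the stationarity identity $\nabla f(w) = \lambda(z-w)$, both invoke the Polyak--{\L}ojasiewicz inequality to get $f(w) - f(x^\ast) \leq \frac{1}{2m}\|\nabla f(w)\|^2 = \frac{\lambda^2}{2m}\|z-w\|^2$, and both finish by eliminating $\|z-w\|^2$ against a lower bound on $f(z)-f(w)$. The difference is in that lower bound. The paper uses only the minimality of $\psi$ at $w$, i.e.\ $\psi(w) \leq \psi(z) = f(z)$, which gives $f(z) - f(w) \geq \frac{\lambda}{2}\|w-z\|^2$ and leads to $f(w)-f(x^\ast) \leq \frac{\lambda}{m+\lambda}(f(z)-f(x^\ast))$. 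You instead apply $m$-strong convexity along the segment from $w$ to $z$ and reuse the stationarity identity on the inner-product term, obtaining the larger coefficient $\lambda + \frac{m}{2}$; after elimination and the algebraic identity $m(2\lambda+m)+\lambda^2 = (m+\lambda)^2$ this yields $f(w)-f(x^\ast) \leq \frac{\lambda^2}{(m+\lambda)^2}(f(z)-f(x^\ast))$, the square of the paper's contraction factor, and your sharpness claim checks out: for $f(x)=\frac{m}{2}\|x-x^\ast\|^2$ one has $w = \frac{m x^\ast + \lambda z}{m+\lambda}$ and the bound holds with equality. Both routes use nothing beyond strong convexity and the first-order condition, so your version is a clean strengthening rather than a trade-off.
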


The theoretical results here and in \Cref{subsub:rates} address two fundamental instances of the LSGD algorithm: the `synchronous' case where communication occurs each round, and the `infinitely asynchronous' case where communication periods are arbitrarily long. For unknown periods $\tau > 1$, it is difficult to demonstrate general quantifiable improvements beyond (\ref{eq:stale_leader}), but we note that (\ref{eq:lsgd_onestep_descent}), \Cref{w_value}, and the results on stochastic leader selection (\Cref{subsub:stochleader,sub:proofs_stochleader}) can be combined to analyze specific instances of the asynchronous LSGD.

In our experiments, we employ another method to alleviate the issue of stale leaders. To increase the importance of the leader when it is good, we perform a larger LSGD step on the first step after a leader update, and take smaller LSGD steps for the remainder of the communication period ($\frac{\gamma}{\tau} < \lambda$ and $\frac{\gamma_G}{\tau_G} < \lambda_G$).


\subsubsection{Stochastic Leader Selection}\label{subsub:stochleader}
Next, we consider the impact of selecting the leader with errors. In practice, it is often costly to evaluate $f(x)$, as in deep learning. Instead, we estimate the values $f(\ux{x}{i})$, and then select $z$ as the variable having the smallest estimate. Formally, suppose that we have an unbiased estimator $\wt{f}(x)$ of $f(x)$, with uniformly bounded variance. At each step, a single sample $y_1,\ldots,y_p$ is drawn from each estimator $\wt{f}(\ux{x}{1}), \ldots, \wt{f}(\ux{x}{p})$, and then $z = \{\ux{x}{i}: y_i = \min \{y_1,\ldots,y_p\}\}$. We refer to this as \emph{stochastic leader selection}. The stochastic leader satisfies $\EX f(z) \leq f(z_{true}) + 4\sqrt{p}\sigma_f$, where $z_{true}$ is the true leader (see supplementary materials). Thus, the error introduced by the stochastic leader contributes an additive error of at most $4\eta \lambda \sqrt{p}\sigma_f$. Since this is of order $\eta$ rather than $\eta^2$, we cannot guarantee convergence with $\eta_k = \Theta(\frac{1}{k})$\footnote{For intuition, note that $\sum_{n=1}^\infty \frac{1}{n}$ is divergent.} unless $\lambda_k$ is also decreasing. We have the following result:

\begin{thm}
Let $f$ satisfy Assumption 1, and let $\wt{g}(x)$ be as in \Cref{standard_rate}. Suppose we use stochastic leader selection with $\wt{f}(x)$ having $\opn{Var}(\wt{f}(x)) \leq \sigma_f^2$. If $\eta,\lambda$ are fixed so that $\eta \leq (2M(\nu+1))^{-1}$ and $\eta\lambda \leq (2\kappa)^{-1}, \eta\sqrt{\lambda} \leq (\kappa\sqrt{2m})^{-1}$, then $\limsup_{k \rightarrow \infty} \EX f(x_k) - f(x^\ast) \leq \frac{1}{2}\eta \kappa\sigma^2 + \frac{4}{m} \lambda \sqrt{p} \sigma_f$. If $\eta, \lambda$ decrease at the rate $\eta_k = \Theta(\frac{1}{k}), \lambda_k = \Theta(\frac{1}{k})$, then $\EX f(x_k) - f(x^\ast) \leq O(\frac{1}{k})$.
\end{thm}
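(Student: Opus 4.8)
The plan is to reduce the whole statement to the one-step descent inequality~(\ref{eq:lsgd_onestep_descent}) of \Cref{standard_rate}, treating the stochastically selected leader $z$ as the ``arbitrary guiding point'' already permitted there, and then to absorb the leader-selection error into the additive noise term before solving the resulting scalar recursion. First I would condition on the current iterate $x = x_k$ and note that~(\ref{eq:lsgd_onestep_descent}) holds for \emph{any} fixed $z$, with expectation taken only over the gradient noise $\wt{g}$. Since $z$ is drawn independently of $\wt{g}$, I may apply the inequality with the random $z$ and then take a further expectation over the leader selection, obtaining
\[
\EX f(x_+) - f(x^\ast) \leq (1-m\eta)\big(f(x) - f(x^\ast)\big) - \eta\lambda f(x) + \eta\lambda\, \EX f(z) + \tfrac{1}{2}\eta^2 M \sigma^2 .
\]
Now I invoke the stochastic-leader bound recorded in \Cref{subsub:stochleader}, namely $\EX f(z) \leq f(z_{true}) + 4\sqrt{p}\,\sigma_f$. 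Because $z_{true}$ is the true leader it satisfies $f(z_{true}) \leq f(x)$ for the node $x$ under analysis, so the two $\eta\lambda f(x)$ contributions cancel and I am left with the clean recursion
\[
\EX f(x_+) - f(x^\ast) \leq (1-m\eta)\big(f(x) - f(x^\ast)\big) + 4\eta\lambda\sqrt{p}\,\sigma_f + \tfrac{1}{2}\eta^2 M \sigma^2 .
\]
Taking total expectation and writing $a_k = \EX f(x_k) - f(x^\ast)$ gives $a_{k+1} \leq (1-m\eta)a_k + C$ with $C = 4\eta\lambda\sqrt{p}\,\sigma_f + \tfrac{1}{2}\eta^2 M\sigma^2$.

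For the fixed-stepsize claim I would unroll this geometric recursion: since the hypotheses $\eta \leq (2M(\nu+1))^{-1}$, $\eta\lambda \leq (2\kappa)^{-1}$, $\eta\sqrt{\lambda} \leq (\kappa\sqrt{2m})^{-1}$ (inherited from \Cref{standard_rate}) guarantee $0 < 1-m\eta < 1$, we get $\limsup_k a_k \leq C/(m\eta)$. Substituting $C$ and using $\kappa = M/m$ yields $C/(m\eta) = \tfrac{4}{m}\lambda\sqrt{p}\,\sigma_f + \tfrac{1}{2}\eta\kappa\sigma^2$, exactly the asserted bound. For the diminishing-stepsize claim I would instead keep $\eta_k,\lambda_k$ inside the recursion, giving $a_{k+1} \leq (1-m\eta_k)a_k + 4\eta_k\lambda_k\sqrt{p}\,\sigma_f + \tfrac{1}{2}\eta_k^2 M\sigma^2$. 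The crucial observation is that with $\eta_k = \Theta(1/k)$ and $\lambda_k = \Theta(1/k)$ \emph{both} additive terms are $\Theta(1/k^2)$, so the recursion has the standard shape $a_{k+1} \leq (1 - c/k)a_k + D/k^2$; the same induction that underlies the $O(1/k)$ claim of \Cref{standard_rate} then gives $a_k = O(1/k)$, provided the leading constant of $m\eta_k$ exceeds one.

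The main subtlety, and the reason the theorem forces $\lambda_k$ to decrease rather than just $\eta_k$, is that the leader-selection error enters at order $\eta\lambda$ rather than $\eta^2$. In the fixed-$\lambda$ regime it contributes the non-vanishing residual $\tfrac{4}{m}\lambda\sqrt{p}\,\sigma_f$ to the limsup, and if one took $\eta_k=\Theta(1/k)$ with $\lambda$ constant, this additive term would be $\Theta(1/k)$, whose harmonic partial sums diverge and destroy the $O(1/k)$ rate (precisely the $\sum 1/n$ remark in the footnote). The only genuinely delicate external input is the inequality $\EX f(z) \leq f(z_{true}) + 4\sqrt{p}\,\sigma_f$ itself; were it not already established in \Cref{subsub:stochleader}, the real work would lie in bounding $\EX[\max_i|\epsilon_i|]$ for the estimation errors $\epsilon_i = \wt{f}(\ux{x}{i}) - f(\ux{x}{i})$ and arguing that the argmin-selected value overshoots the true minimum by at most the spread of these noises. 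Given that bound, the remainder of the proof is the routine recursion bookkeeping sketched above.
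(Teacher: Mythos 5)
Your proposal is correct and follows essentially the same route as the paper: apply the one-step descent inequality with the stochastic leader as the guiding point, bound $\EX f(z) \leq f(z_{true}) + 4\sqrt{p}\,\sigma_f$ so the leader term contributes at most $4\eta\lambda\sqrt{p}\,\sigma_f$ of additive noise, and then solve the resulting scalar recursion (the paper delegates this last step to Theorems 4.6 and 4.7 of \cite{BCN2018SIAMREV}, which you instead unroll explicitly). Your observation that $\eta_k\lambda_k = \Theta(1/k^2)$ is what rescues the diminishing-stepsize rate is exactly the point the paper makes in its proof.
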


The communication period and the accuracy of stochastic leader selection are both methods of reducing the cost of updating the leader, and can be substitutes. When the communication period is long, it may be effective to estimate $f(\ux{x}{i})$ to higher accuracy, since this can be done independently.

\subsection{Non-convex Optimization: Stationary Points}\label{sub:nonconvex}

As mentioned above, EASGD has the flaw that the EASGD objective function can have stationary points such that none of $\ux{x}{1},\ldots,\ux{x}{p},\wt{x}$ is a stationary point of the underlying function $f$. LSGD does not have this issue.

\begin{thm}
Let $\Omega_i$ be the points $(\ux{x}{1},\ldots,\ux{x}{p})$ where $\ux{x}{i}$ is the unique minimizer among $(\ux{x}{1},\ldots,\ux{x}{p})$. 
If $x^\ast = (\ux{w}{1},\ldots,\ux{w}{p}) \in \Omega_i$ is a stationary point of the LSGD objective function, then $\nabla \ux{f}{i}(\ux{w}{i}) = 0$.
\end{thm}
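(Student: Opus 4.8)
The plan is to use the explicit form of the objective on $\Omega_i$, where the leader is known, and to read off the stationarity condition for the leading block. First I would recall that the single-leader LSGD objective is
\[
L(\ux{x}{1},\ldots,\ux{x}{p}) = \sum_{j=1}^p \ux{f}{j}(\ux{x}{j}) + \frac{\lambda}{2}\sum_{j=1}^p \|\ux{x}{j} - \wt{x}\|^2,
\]
where the leader $\wt{x}$ is the worker attaining the smallest objective value, and that this reference is held fixed during each gradient step exactly as in the algorithm. Since being the unique minimizer is an open condition, $x^\ast$ lies in the interior of $\Omega_i$, so throughout a neighborhood of $x^\ast$ the leader is the $i$-th worker; hence near $x^\ast$ we may take $\wt{x} = \ux{w}{i}$ and $L$ is differentiable there.

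Next I would write the block-stationarity conditions. With $\wt{x}$ held as the fixed leader reference, the gradient of $L$ in block $k$ is $\nabla \ux{f}{k}(\ux{x}{k}) + \lambda(\ux{x}{k} - \wt{x})$, and $x^\ast$ being a stationary point means each of these vanishes at $x^\ast$ with $\wt{x} = \ux{w}{i}$. The crucial step is to specialize this to the leading block $k = i$: because the leader coincides with worker $i$ on $\Omega_i$, its pull term is $\lambda(\ux{w}{i} - \wt{x}) = \lambda(\ux{w}{i} - \ux{w}{i}) = 0$, i.e.\ the leader is pulled toward itself and feels no elastic force. Stationarity then collapses to $\nabla \ux{f}{i}(\ux{w}{i}) = 0$, which is the claim.

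The main point to get right is not the algebra, which is immediate, but the bookkeeping around the stationarity notion: one must argue that on the open set $\Omega_i$ the leader assignment is locally constant, so that $L$ is smooth at $x^\ast$ and the leader may legitimately be treated as the fixed reference $\ux{w}{i}$ when forming the block gradients. This is precisely the mechanism that fails for EASGD, where the center is a free variable pinned to the \emph{average} of the workers rather than to a single worker, so no block's pull term is forced to vanish (cf.\ Proposition~\ref{prop:EASGD}). The remaining conditions $\nabla \ux{f}{k}(\ux{w}{k}) = \lambda(\ux{w}{i} - \ux{w}{k})$ for $k \ne i$ merely locate the followers relative to the leader and are not needed for the conclusion.
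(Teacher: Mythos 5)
Your proof is correct and follows essentially the same route as the paper's one-line argument, which simply observes that on $\Omega_i$ the partial derivative of $L$ in the leading block reduces to $\nabla \ux{f}{i}(\ux{x}{i})$ because the leader's pull term $\lambda(\ux{x}{i}-\wt{x})$ vanishes when $\wt{x}=\ux{x}{i}$. Your additional bookkeeping --- that $\Omega_i$ is open so the leader assignment is locally constant, and that $\wt{x}$ is treated as a fixed reference when forming block gradients --- is a faithful elaboration of what the paper leaves implicit.
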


Moreover, it can be shown that for the deterministic algorithm LGD with \emph{any choice of communication periods}, there will always be some variable $\ux{x}{i}$ such that $\liminf \|\nabla f(\ux{x}{i}_k)\| = 0$.

\begin{thm}
Assume that $f$ is bounded below and $M$-Lipschitz-differentiable, and that the LGD step sizes are selected so that $\eta_i < \frac{2}{M}$. Then for any choice of communication periods, it holds that for every $i$ such that $\ux{x}{i}$ is the leader infinitely often, $\liminf_k \|\nabla f(\ux{x}{i}_k)\| = 0$.
\end{thm}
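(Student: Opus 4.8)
The plan is to show that for any node $i$ that serves as the leader infinitely often, its gradient norm cannot stay bounded away from zero along the iterations. First I would set up the standard descent machinery for $M$-Lipschitz-differentiable functions: for a plain gradient step $x_+ = x - \eta \nabla f(x)$ with $\eta < \frac{2}{M}$, the descent lemma gives $f(x_+) \leq f(x) - \eta(1 - \frac{M\eta}{2})\|\nabla f(x)\|^2$, so each local SGD step (here LGD, hence exact gradients) decreases $f$ by at least a positive multiple of $\|\nabla f(x)\|^2$. The crucial additional observation is that the leader-pulling step is also non-increasing in $f$ for the node that is the leader: when $\ux{x}{i}$ is itself the leader, the guiding point $z$ equals $\ux{x}{i}$, so the correction term $-\lambda(\ux{x}{i} - z) = 0$ and that node's value is unchanged by the pull; more generally, I would invoke \Cref{w_value} (or the one-step bound (\ref{eq:stale_leader})) to argue that a pulled iterate cannot end up worse than the leader it was pulled toward, up to the noise term which vanishes in the deterministic LGD case.

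The key steps, in order, are as follows. I would track the sequence of objective values of the leader node, call it $f(\ux{x}{i}_k)$ restricted to the subsequence of iterations $k \in K$ at which $\ux{x}{i}$ is the leader. Along all iterations the value $f(\ux{x}{i}_k)$ is monotonically non-increasing: gradient steps decrease it by the descent lemma, and leader-pulls leave the leader's own value unchanged (the $z = \ux{x}{i}$ case). Since $f$ is bounded below, $f(\ux{x}{i}_k)$ converges, so the total decrease $\sum_k (f(\ux{x}{i}_k) - f(\ux{x}{i}_{k+1}))$ is finite. Combining this telescoping sum with the per-step descent lower bound $\eta(1 - \frac{M\eta}{2})\|\nabla f(\ux{x}{i}_k)\|^2$ accumulated over the gradient steps where $\ux{x}{i}$ is the leader forces $\sum_{k \in K} \|\nabla f(\ux{x}{i}_k)\|^2 < \infty$, whence $\liminf_{k} \|\nabla f(\ux{x}{i}_k)\| = 0$ along that subsequence, and hence $\liminf_k \|\nabla f(\ux{x}{i}_k)\| = 0$ overall.

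The main obstacle will be handling the pulling steps cleanly across the \emph{multi-leader} and \emph{asynchronous} settings, where a node that is currently the leader may still be subject to a pull (e.g.\ toward a global leader $\tilde{x}$ via $\lambda_G$, or its status as leader changes between a pull and the value evaluation due to asynchrony). The subtlety is that the claim ``the leader's value is unchanged by the pull'' holds exactly only when the guiding point coincides with the node, so I would need to argue that whenever $\ux{x}{i}$ is the designated leader the relevant correction term vanishes, and that at iterations where $\ux{x}{i}$ is pulled toward some other (better) point its value can only decrease, so monotonicity of $f(\ux{x}{i}_k)$ is preserved in every case. Care is also needed because $\ux{x}{i}$ being leader ``infinitely often'' does not mean it is leader at consecutive steps; I would therefore define the descent bookkeeping over \emph{all} steps in the life of node $i$ and simply extract the subsequence $K$, using the global monotonicity of that node's objective value rather than assuming consecutiveness. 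Once monotonicity and the bounded-below hypothesis are in place, the summability argument is routine.
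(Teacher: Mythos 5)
Your overall telescoping strategy is the right one, but there is a genuine gap at its核心 step: the claim that $f(\ux{x}{i}_k)$ is monotonically non-increasing along \emph{all} iterations of node $i$. This holds for the pure gradient steps and for pull steps where node $i$ is itself the leader (the correction term vanishes), but it fails precisely at the steps where $\ux{x}{i}$ is \emph{not} the leader and takes the combined step $\ux{x}{i} - \eta(\nabla f(\ux{x}{i}) + \lambda(\ux{x}{i}-z))$. Your justification --- ``its value can only decrease because it is pulled toward a better point'' --- is not valid for a merely bounded-below, Lipschitz-differentiable $f$: the tools you propose to invoke (\Cref{w_value} and the bound \eqref{eq:stale_leader}) both require strong convexity, which is not assumed in this theorem, and in a non-convex landscape a pull toward a lower point can strictly increase $f$ (indeed, the paper's own remark about non-leader workers being ``pulled away from local minima'' describes exactly this behavior as a feature). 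So global monotonicity of node $i$'s objective value cannot be established, and your summability argument, as stated, does not go through.

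The paper's proof repairs this by never tracking node $i$'s value through the steps where it is not the leader. Instead it establishes (\Cref{leader_descent}) that the sequence of \emph{leader} values $f(\wt{x}_k)$ is monotone across periods: during a period the current leader performs pure gradient descent (so its value decreases by \Cref{gd_descent}), and the next leader is by definition at least as good as that descended value. Consequently, if $\tau(k)$ and $\tau(k+1)$ are consecutive periods in which $\ux{x}{i}$ is the leader, then $f(\ux{x}{i}_{\tau(k+1)}) \leq f(\ux{x}{i}_{\tau(k),b(k)})$ regardless of what happened to $\ux{x}{i}$ in between, and the descent accumulated by \Cref{gd_descent} over the gradient steps of periods $\tau(1),\tau(2),\ldots$ telescopes against the monotone, bounded-below leader-value sequence. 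This yields $\sum_{k}\sum_{j} \|\nabla f(\ux{x}{i}_{\tau(k),j})\|^2 < \infty$ along the leader subsequence only, which is all that is needed for $\liminf_k \|\nabla f(\ux{x}{i}_k)\| = 0$. If you replace your ``global monotonicity of $f(\ux{x}{i}_k)$'' step with this leader-value monotonicity argument, the rest of your proposal is essentially the paper's proof.
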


\subsection{Search Direction Improvement from Leader Selection}\label{sub:improve}
In this section, we discuss how LGD can obtain better search directions than gradient descent. In general, it is difficult to determine when the LGD step will satisfy $f(x - \eta(\nabla f(x) + \lambda (x-z))) \leq f(x - \eta \nabla f(x))$, since this depends on the precise combination of $f, x,z, \eta, \lambda$, and moreover, the maximum allowable value of $\eta$ is different for LGD and gradient descent. Instead, we measure the goodness of a search direction by the angle it forms with the Newton direction $d_N(x) = -(\nabla^2 f(x))^{-1}\nabla f(x)$. The Newton method is locally quadratically convergent around local minimizers with non-singular Hessian, and converges in a single step for quadratic functions if $\eta = 1$. Hence, we consider it desirable to have search directions that are close to $d_N$. Let $\theta(u,v)$ denote the angle between $u,v$. Let $d_z = -(\nabla f(x) + \lambda (x-z))$ be the LGD direction with leader $z$, and $d_G(x) = -\nabla f(x)$. The \emph{angle improvement set} is the set of leaders $I_\theta(x, \lambda) = \{z: f(z) \leq f(x), \theta(d_z, d_N(x)) \leq \theta(d_G(x), d_N(x))\}$. The set of candidate leaders is $E = \{z: f(z) \leq f(x)\}$. We aim to show that a large subset of leaders in $E$ belong to $I_\theta(x,\lambda)$.

In this section, we consider the positive definite quadratic $f(x) = \frac{1}{2}x^TAx$ with condition number $\kappa$ and $d_G(x) = -Ax, d_N(x) = -x$. The first result shows that as $\lambda$ becomes sufficiently small, at least half of $E$ improves the angle. We use the $n$-dimensional volume $\opn{Vol}(\cdot)$ to measure the relative size of sets: an ellipsoid $E$ given by $E = \{x: x^TAx \leq 1\}$ has volume $\opn{Vol}(E) = \opn{det}(A)^{-1/2}\opn{Vol}(S_n)$, where $S_n$ is the unit ball.

\begin{thm}\label{shrinking_lambda}
Let $x$ be any point such that $\theta_x = \theta(d_G(x),d_N(x)) > 0$, and let $E = \{ z: f(z) \leq f(x)\}$. Then $\lim_{\lambda \rightarrow 0} \opn{Vol}(I_\theta(x,\lambda)) \geq \frac{1}{2} \opn{Vol}(E)$\footnote{Note that $I_\theta(x,\lambda_1) \supseteq I_\theta(x,\lambda_2)$ for $\lambda_1 \leq \lambda_2$, so the limit is well-defined.}.
\end{thm}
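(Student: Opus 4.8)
The plan is to reduce the volume statement to a first-order-in-$\lambda$ computation and then to a clean geometric fact about a half-space slicing a centrally symmetric ellipsoid. Since the footnote records that $I_\theta(x,\lambda)$ is monotone increasing as $\lambda \downarrow 0$, the limiting set is $\bigcup_{\lambda>0} I_\theta(x,\lambda)$, and a fixed candidate $z \in E$ belongs to it exactly when the angle condition $\theta(d_z, d_N(x)) \leq \theta_x$ holds for all sufficiently small $\lambda > 0$. Writing $c(\lambda) = \cos\theta(d_z(\lambda), -x)$ with $d_z(\lambda) = -(Ax + \lambda(x-z))$, we have $c(0) = \cos\theta_x$, so (because cosine is decreasing on $[0,\pi]$) the angle-improvement condition for small $\lambda$ is governed by the sign of $c'(0)$: if $c'(0) > 0$ then $z$ lies in the limiting set, and if $c'(0)<0$ it does not. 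First I would therefore show that, up to the measure-zero boundary $\{c'(0)=0\}$, the limiting set equals $\{z \in E : c'(0) \geq 0\}$, so that computing its volume reduces to understanding this sign condition.

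Second I would carry out the differentiation. Writing $c(\lambda) = N(\lambda)/D(\lambda)$ with numerator $N(\lambda) = (Ax+\lambda(x-z))^T x$ and denominator $D(\lambda) = \|Ax+\lambda(x-z)\|\,\|x\|$, the quotient rule gives that $\opn{sign}(c'(0))$ equals the sign of $N'(0)D(0)-N(0)D'(0)$. After clearing the positive factors $\|x\|$ and $\|Ax\|$, this reduces to the sign of
\[ S(z) = \big(\|x\|^2\|Ax\|^2 - (x^TAx)^2\big) + z^T\big(\,(x^TAx)A - \|Ax\|^2 I\,\big)x. \]
The crucial structural observation is that $S(z)$ is \emph{affine} in $z$, so $\{z : S(z) \geq 0\}$ is a half-space $H$; moreover its gradient vector $a_0 = ((x^TAx)A - \|Ax\|^2 I)x$ is nonzero precisely because $\theta_x>0$ forces $x$ not to be an eigenvector of $A$, so $H$ is a genuine (non-degenerate) half-space.

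Finally I would invoke the central symmetry of $E = \{z : \tfrac12 z^TAz \leq \tfrac12 x^TAx\}$, an ellipsoid centered at the origin. For any hyperplane $\{a^Tz = b\}$, the reflection $z \mapsto -z$ preserves $E$ and shows that the sliced volume $g(t) := \opn{Vol}(\{z \in E : a^Tz \leq t\})$ satisfies $g(0) = \tfrac12\opn{Vol}(E)$; since $g$ is nondecreasing, $\opn{Vol}(H \cap E) \geq \tfrac12\opn{Vol}(E)$ holds exactly when the offset of $H$ has the right sign, i.e. when $\|x\|^2\|Ax\|^2 - (x^TAx)^2 \geq 0$. But this is precisely the Cauchy--Schwarz inequality applied to $x$ and $Ax$, which is strict here because $\theta_x > 0$. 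Combining the three steps yields $\lim_{\lambda\to0}\opn{Vol}(I_\theta(x,\lambda)) = \opn{Vol}(H\cap E) \geq \tfrac12\opn{Vol}(E)$.

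I expect the main obstacle to be the first step rather than the algebra: carefully justifying that the decreasing-$\lambda$ limit of the sets $I_\theta(x,\lambda)$ is captured, up to a null set, by the single first-order condition $c'(0) \geq 0$ --- in particular handling the boundary case $c'(0)=0$ and confirming it contributes zero volume, and verifying that $c(\lambda) \geq c(0)$ for \emph{all} small $\lambda$ (not merely infinitesimally) follows from $c'(0)>0$ via a Taylor expansion with controlled remainder. The recognition that the improvement set is, to leading order, a half-space and that Cauchy--Schwarz pins its offset to exactly the ``contains-the-center'' threshold is the conceptual heart, but it is the limiting and measure-theoretic bookkeeping that requires the most care.
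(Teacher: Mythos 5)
Your proposal is correct and is essentially the paper's own argument: the paper proves this as Proposition~\ref{imp_small_force} by showing that membership in $I_\theta(x,\lambda)$ for small $\lambda$ is governed by the sign of $N_x^T(z-x)$ for the outward normal $N_x$ of the cone $\opn{cone}(-x,\theta_x)$ at $-\nabla f(x)$ (a condition obtained, via \Cref{cone_normal}, by exactly your differentiation of $\cos\theta$ at $\lambda=0$, and algebraically identical to your $S(z)>0$), and then concludes with the same central-symmetry half-space volume bound, where your Cauchy--Schwarz step appears as the observation $N_x^Tx>0$. The only difference is presentational: you carry out the first-order computation inline rather than packaging it as a cone-normal lemma.
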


Next, we consider when $\lambda$ is large. We show that points with large angle between $d_G(x), d_N(x)$ exist, which are most suitable for improvement by LGD. For $r \geq 2$, define $S_r = \{ x: \cos(\theta(d_G(x), d_N(x))) = \frac{r}{\sqrt{\kappa}} \}$. It can be shown that $S_r$ is nonempty for all $r \geq 2$. We show that for $x \in S_r$ for a certain range of $r$, $I_\theta(x,\lambda)$ is at least half of $E$ \emph{for any choice of $\lambda$}.

\begin{thm}\label{large_angle}
Let $R_\kappa = \{r: \frac{r}{\sqrt{\kappa}} + \frac{r^{3/2}}{\kappa^{1/4}} \leq 1\}$. If $x \in S_r$ for $r \in R_\kappa$, then for any $\lambda \geq 0$, $\opn{Vol}(I_\theta(x,\lambda)) \geq \frac{1}{2}\opn{Vol}(E)$.
\end{thm}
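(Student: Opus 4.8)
The plan is to reduce the statement, which quantifies over all $\lambda \ge 0$, to a single $\lambda$-free volume inequality, and then prove that inequality by fitting a half-ellipsoid inside a cone. Write the LGD direction as $d_z = -(\nabla f(x) + \lambda(x-z)) = d_G(x) + \lambda(z - x)$, and let $C = \{d : \theta(d, d_N(x)) \le \theta_x\}$ be the cone of directions making angle at most $\theta_x$ with the Newton direction, where $\theta_x = \theta(d_G(x), d_N(x))$. Since $x \in S_r$ forces $\cos\theta_x = r/\sqrt\kappa \in (0,1)$, we have $\theta_x \in (0, \pi/2)$, so $C$ is a convex cone, and $d_G(x) \in \partial C$ by definition of $\theta_x$. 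Because a convex cone is closed under addition and positive scaling, $d_z = d_G(x) + \lambda(z-x) \in C$ holds for every $\lambda \ge 0$ if and only if $z - x \in C$ (the ``only if'' follows by dividing by $\lambda$ and letting $\lambda \to \infty$, using that $C$ is closed). Hence $z \in I_\theta(x,\lambda)$ for all $\lambda$ exactly when $z \in E$ and $z - x \in C$, i.e. $\bigcap_{\lambda \ge 0} I_\theta(x,\lambda) = E \cap (x + C)$. Invoking the monotonicity $I_\theta(x,\lambda_1) \supseteq I_\theta(x,\lambda_2)$ for $\lambda_1 \le \lambda_2$ (already used in \Cref{shrinking_lambda}), the infimum over $\lambda$ of $\opn{Vol}(I_\theta(x,\lambda))$ is attained as $\lambda \to \infty$, so it suffices to prove the single inequality $\opn{Vol}(E \cap (x+C)) \ge \frac{1}{2}\opn{Vol}(E)$.

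For the geometric inequality I would exhibit a hyperplane through the origin, the center of $E$, whose associated half of $E$ lies entirely inside the cone $x + C$; since any central hyperplane bisects the centered ellipsoid $E$, such a containment immediately yields the factor $\frac12$. The natural candidate normal is the direction along which the angle first decreases, $\mathbf{v} = (x^TAx)\,Ax - \|Ax\|^2 x$ (the gradient in $z$ of $\cos\theta(d_z, d_N)$ at $\lambda = 0^+$, which is exactly the direction isolating the improving half in the proof of \Cref{shrinking_lambda}). A short computation gives $\mathbf{v}^T x = (\cos^2\theta_x - 1)\|Ax\|^2\|x\|^2 < 0$, so the central half-space $\{z : \mathbf{v}^T z \ge 0\}$ sits strictly inside the first-order improving half-space $\{z : \mathbf{v}^T(z-x) \ge 0\}$; the task is to upgrade this first-order statement to the full cone condition, i.e. to show that for $x \in S_r$ with $r \in R_\kappa$, every $z \in E$ with $\mathbf{v}^T z \ge 0$ satisfies $\theta(z - x, d_N(x)) \le \theta_x$, so that $E \cap \{\mathbf{v}^Tz \ge 0\} \subseteq x + C$.

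To verify this containment I would pass to the worst case on the relevant boundary: after the normalization $w = A^{1/2}z$ that turns $E$ into a ball centered at the origin with $x \mapsto \xi$, $\|\xi\|^2 = x^TAx$, the cone $x+C$ becomes a skewed cone with apex $\xi$ on the sphere and axis toward the center, and it suffices to bound the largest angle to that axis over the inner half-ball. The isotropic model is instructive: for a round ball the inner hemisphere (on the far side of the diametral plane perpendicular to the axis) lies inside such a cone exactly when $\theta_x \ge 45^\circ$, i.e. $\cos\theta_x \le 1/\sqrt2$, a dimension-free threshold. The anisotropy of $A$ stretches the perpendicular directions relative to the axis by a factor up to $\sqrt\kappa$, which both tilts the correct bisecting plane (hence the use of $\mathbf{v}$ rather than $\hat x$) and pushes some inner points outside the circular cone, shrinking the admissible window for $\theta_x$. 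Quantifying this requires pinning down where $x$ sits relative to the spectrum: the constraint $\cos\theta(x,Ax) = r/\sqrt\kappa$ confines the Rayleigh quotient $R = x^TAx/\|x\|^2$, and the extremal configuration reduces to the span of the top and bottom eigenvectors, where $\cos^2\theta(x,Ax) = R^2/(R(\kappa+1)-\kappa)$; this is also where the feasibility bound $r \ge 2$ originates, via the antieigenvalue $\frac{2\sqrt\kappa}{\kappa+1}$ (the minimal value of $\cos\theta(x,Ax)$). Feeding the extremal $R$ into the perpendicular-protrusion estimate should yield precisely the threshold defining $R_\kappa$, equivalently $\cos\theta_x + \sqrt{\kappa}\,(\cos\theta_x)^{3/2} \le 1$.

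The main obstacle is this last quantitative step: controlling, uniformly over the inner half of $E$, how far a point can protrude from the circular cone once the cone axis $\hat x$ is misaligned with the eigenbasis of $A$. Unlike the round case the worst point is not the ``equator,'' and its angle to the axis mixes the anisotropy factor $\sqrt\kappa$ with the opening $\theta_x$; extracting the sharp constant — the exact $\frac{r}{\sqrt\kappa} + \frac{r^{3/2}}{\kappa^{1/4}} \le 1$ rather than a loose $O(\cdot)$ surrogate — is where the difficulty lies, and it is what forces the reduction to the two-dimensional top/bottom eigenspace in which the trade-off can be solved exactly.
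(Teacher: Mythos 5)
Your reduction is sound and matches the skeleton of the paper's argument: since the cone $C=\opn{cone}(-x,\theta_x)$ is convex and contains $d_G(x)$ on its boundary, having $z-x\in C$ guarantees $d_G(x)+\lambda(z-x)\in C$ for every $\lambda\ge 0$, so it suffices to exhibit a hyperplane through the origin (the center of $E$) whose corresponding half of $E$ is contained in $x+C$. But the proof stops there. The containment of that half-ellipsoid in the cone is the entire content of the theorem --- it is the only place the hypothesis $r\in R_\kappa$ can enter --- and you explicitly leave it as a plan (``should yield precisely the threshold'') rather than a proof. So as written there is a genuine gap: the quantitative step you identify as ``where the difficulty lies'' is the theorem.

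The gap is made harder by your choice of bisecting hyperplane. You take the normal $\mathbf{v}=(x^TAx)Ax-\|Ax\|^2x$ inherited from the small-$\lambda$ analysis (\Cref{shrinking_lambda}), but that normal is adapted to the \emph{tangent} condition at $d_G(x)$, not to the cone's axis $-x$; verifying that $E\cap\{\mathbf{v}^Tz\ge 0\}\subseteq x+C$ then forces the messy worst-case analysis over a slice that is not orthogonal to the axis, which is exactly the step you could not close. The paper instead uses the hyperplane $\{z: z^Tx=0\}$ and the half $E_2=E\cap\{z:z^Tx\le 0\}$, which makes the containment a short computation: (i) a scaling argument (\Cref{scale_cone}, plus convexity of the segment from $x$ to $z$) reduces the claim to points $z\in E$ with $z^Tx=0$; (ii) a Lagrange-multiplier bound (\Cref{orth_big_angle}) gives $\|z\|\le\sqrt{\kappa\cos\theta_x}\,\|x\|=\sqrt{r}\,\kappa^{1/4}\|x\|$ on that slice; and (iii) since $x^T(x-z)=\|x\|^2$ and $\|x-z\|\le\|x\|+\|z\|$, the inequality $\cos\theta(z-x,-x)\ge r/\sqrt{\kappa}$ reduces to $1-\frac{r}{\sqrt{\kappa}}-\frac{r^{3/2}}{\kappa^{1/4}}\ge 0$, which is literally the definition of $R_\kappa$. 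If you want to salvage your route, the cleanest fix is to switch your hyperplane to $\{z^Tx=0\}$ and supply the analogue of steps (i)--(iii); with the normal $\mathbf{v}$ you would additionally have to prove a new extremal bound on $\{z\in E:\mathbf{v}^Tz=0\}$, and it is not evident that it produces the same constant.
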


Note that \Cref{shrinking_lambda,large_angle} apply only to \emph{convex} functions, or in the neighborhoods of local minimizers where the objective function is locally convex. In nonconvex landscapes, the Newton direction may point towards saddle points \cite{DPGCGB2014NIPS}, which is undesirable; however, since \Cref{shrinking_lambda,large_angle} do not apply in this situation, these results do not imply that LSGD has harmful behavior. For nonconvex problems, our intuition is that many candidate leaders lie in directions of \emph{negative curvature}, which would actually lead away from saddle points, but this is significantly harder to analyze since the set of candidates is unbounded a priori.
\vspace{-0.1in}

\section{Experimental Results}
\label{sec:Exp}

\subsection{Experimental setup}
\begin{wrapfigure}{r}{0.65\textwidth}
\vspace{-0.8in}
\begin{minipage}[t]{0.65\textwidth}
\hspace{-0.1in}\includegraphics[width=0.52\linewidth]{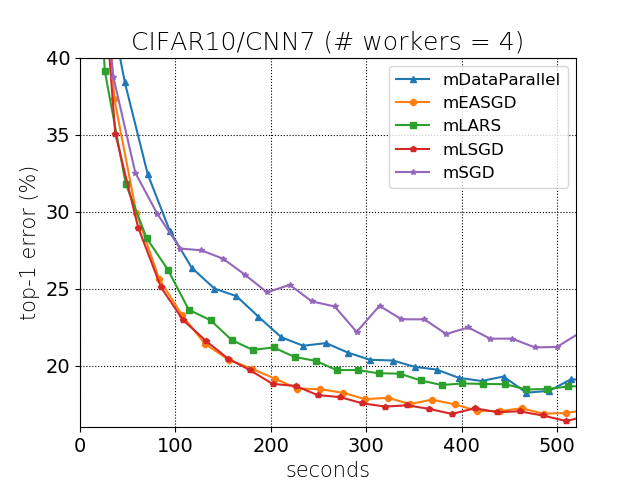}
\includegraphics[width=0.52\linewidth]{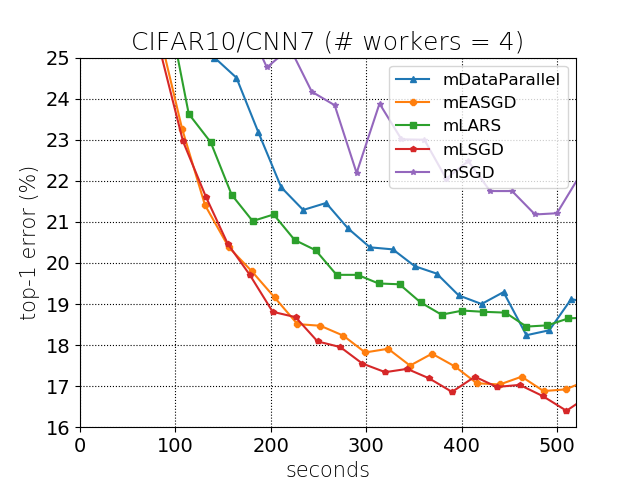}\\
\vspace{-0.1in}
\end{minipage}
\begin{minipage}[t]{0.65\textwidth}
\hspace{-0.1in}\includegraphics[width=0.52\linewidth]{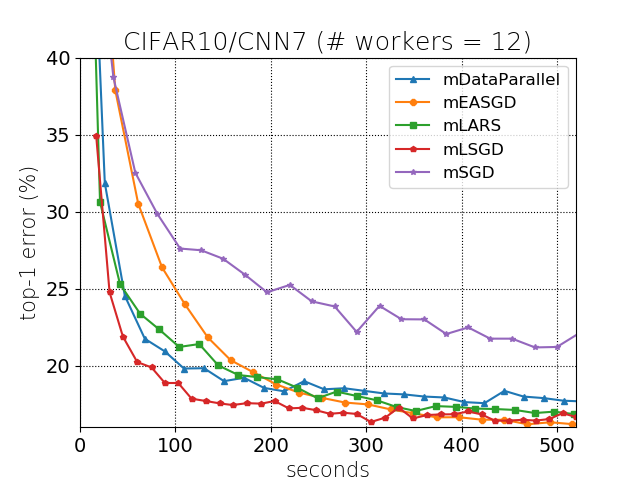}
\includegraphics[width=0.52\linewidth]{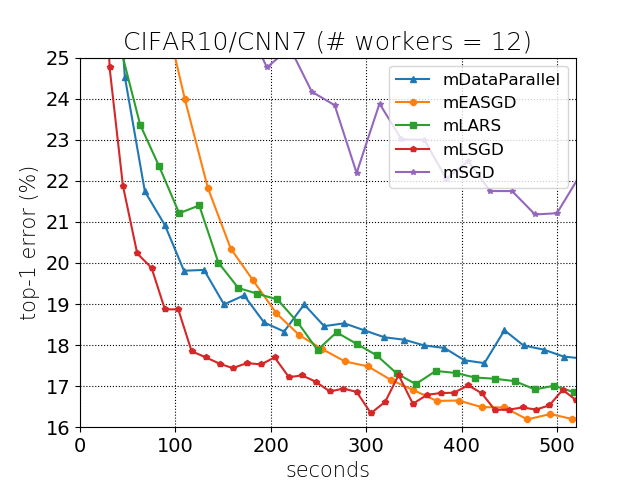}
\vspace{-0.165in}
\caption{CNN$7$ on CIFAR-$10$. Test error for the center variable versus wall-clock time (original plot on the left and zoomed on the right). Test loss is reported in Figure~\ref{fig:CNN7testloss} in the Supplement.}
\label{fig:CNN7}
\end{minipage}
\vspace{-0.15in}
\end{wrapfigure}
In this section we compare the performance of LSGD with state-of-the-art methods for parallel training of deep networks, such as DataParallel\footnote{We refer to PyTorch's official implementation of distributed data parallelism.} (vanilla distributed data parallelization method), LARS, and EASGD (its pseudo-codes can be found in~\cite{EASGD}), as well as sequential technique SGD. All methods use momentum. We use communication period equal to $1$ for DataParallel and LARS in all our experiments as this is the typical setting used for these methods ensuring stable convergence. The experiments were performed using the CIFAR-$10$ data set~\cite{CIFAR} on three benchmark architectures: $7$-layer CNN used in the original EASGD paper (see Section 5.1. in~\cite{EASGD}) that we refer to as CNN$7$, VGG$16$~\cite{Simonyan15}, and ResNet$20$~\cite{He2016DeepRL}; and ImageNet (ILSVRC $2012$) data set~\cite{imagenet_cvpr09} on ResNet$50$.

\begin{wrapfigure}{r}{0.65\textwidth}
\begin{minipage}[t]{0.65\textwidth}
\vspace{-0.1in}
\hspace{-0.1in}\includegraphics[width=0.52\linewidth]{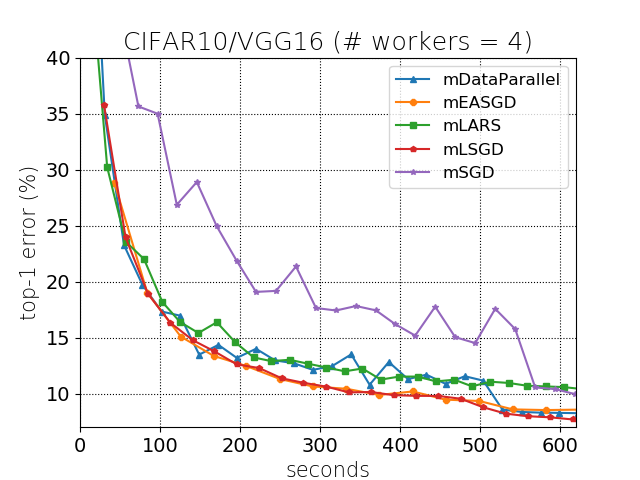}
\includegraphics[width=0.52\linewidth]{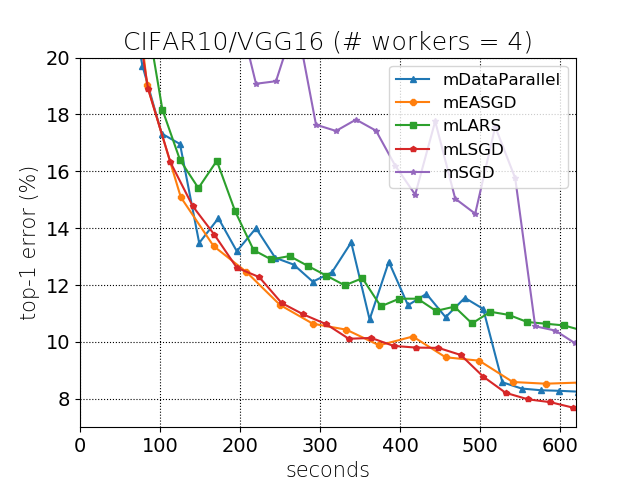}
\vspace{-0.1in}
\end{minipage}
\begin{minipage}[t]{0.65\textwidth}
\vspace{-0.05in}
\hspace{-0.1in}\includegraphics[width=0.52\linewidth]{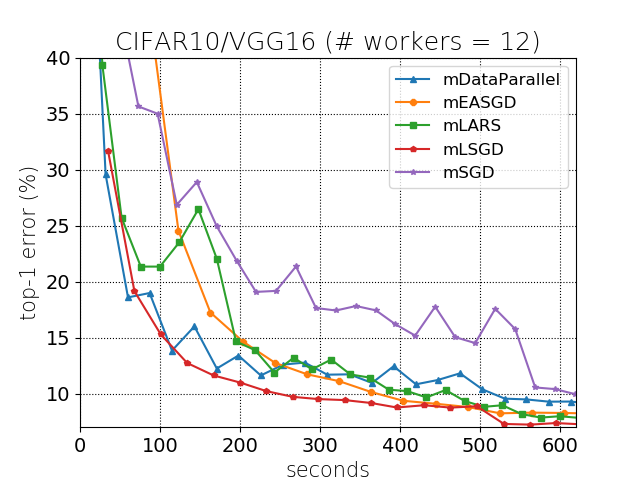}
\includegraphics[width=0.52\linewidth]{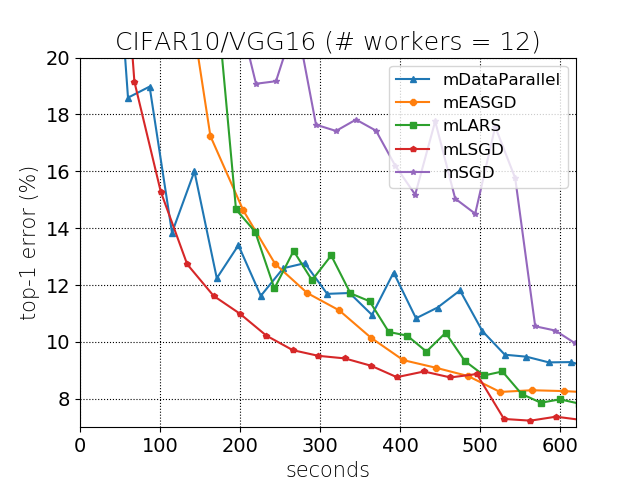}
\vspace{-0.18in}
\caption{VGG$16$ on CIFAR-$10$. Test error for the center variable versus wall-clock time (original plot on the left and zoomed on the right). Test loss is reported in Figure~\ref{fig:VGG16testloss} in the Supplement.}
\label{fig:VGG16}
\vspace{-0.16in}
\end{minipage}
\end{wrapfigure}

During training, we select the leader for the LSGD method based on the average of the training loss computed over the last $10$ (CIFAR-$10$) and $64$ (ImageNet) data batches. At testing, we report the performance of the center variable for EASGD and LSGD, where for LSGD the center variable is computed as the average of the parameters of all workers. [\textit{Remark}: Note that we use the leader's parameter to pull to at training and we report the averaged parameters at testing deliberately. It is demonstrated in our paper (e.g.: Figure \ref{fig:matrix_completion}) that pulling workers to the averaged parameters at training may slow down convergence and we address this problem. Note that after training, the parameters that workers obtained after convergence will likely lie in the same valley of the landscape (see~\cite{baldassi2016unreasonable}) and thus their average is expected to have better generalization ability (e.g.~\cite{DBLP:journals/corr/ChaudhariCSL16, izmailov2018averaging}), which is why we report the results for averaged parameters at testing.] Finally, for all methods we use Nesterov momentum of $0.9$ and weight decay with decay coefficient set to $10^{-4}$. In our experiments we use either $4$ workers (single-leader LSGD setting) or $12$ workers (multi-leader LSGD setting with $3$ groups of workers). For all methods, we report the optimal choice of hyperparameters leading to the smallest achievable test error under similar convergence rates.

\begin{wrapfigure}{r}{0.65\textwidth}
\vspace{-0.15in}
\begin{minipage}[t]{0.65\textwidth}
\hspace{-0.1in}\includegraphics[width=0.52\linewidth]{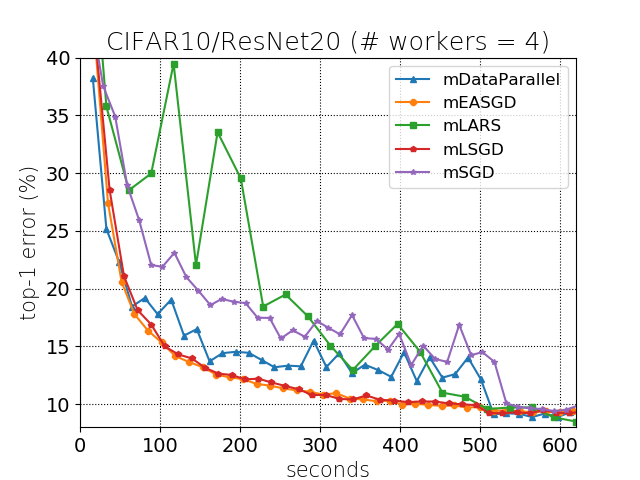}
\includegraphics[width=0.52\linewidth]{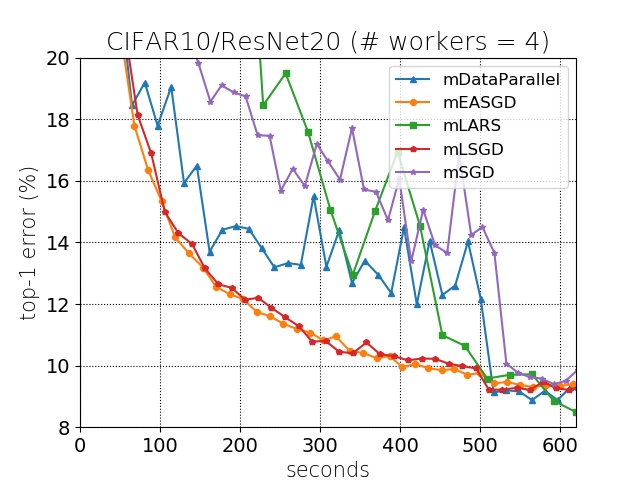}
\vspace{-0.1in}
\end{minipage}
\begin{minipage}[t]{0.65\textwidth}
\hspace{-0.1in}\includegraphics[width=0.52\linewidth]{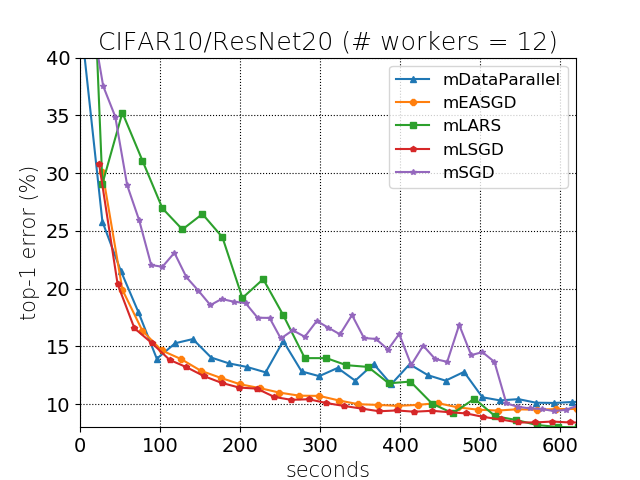}
\includegraphics[width=0.52\linewidth]{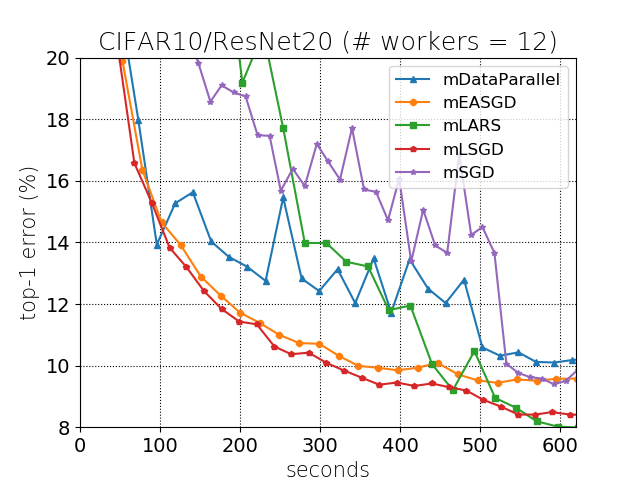}
\vspace{-0.15in}
\caption{ResNet$20$ on CIFAR-$10$. Test error for the center variable versus wall-clock time (original plot on the left and zoomed on the right). Test loss is reported in Figure~\ref{fig:ResNet20testloss} in the Supplement.}
\label{fig:ResNet20}
\end{minipage}
\end{wrapfigure}

We use GPU nodes interconnected with Ethernet. Each GPU node has four GTX 1080 GPU processors where each local worker corresponds to one GPU processor. We use CUDA Toolkit 10.0\footnote{https://developer.nvidia.com/cuda-zone} and NCCL 2\footnote{https://developer.nvidia.com/nccl}. We have developed a software package based on PyTorch for distributed training, which will be released (details are elaborated in Section~\ref{sec:package}).

Data processing and prefetching are discussed in the Supplement. The summary of the hyperparameters explored for each method are also provided in the Supplement. We use constant learning rate for CNN$7$ and learning rate drop (we divide the learning rate by $10$ when we observe saturation of the optimizer) for VGG$16$, ResNet$20$, and ResNet$50$.
\vspace{-0.25in}

\begin{wrapfigure}{r}{0.65\textwidth}
\begin{minipage}[t]{0.65\textwidth}
\vspace{-0.15in}
\hspace{-0.1in}\includegraphics[width=0.52\linewidth]{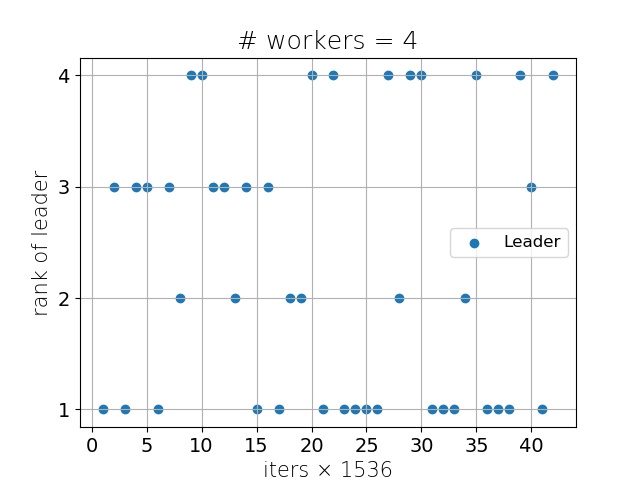}\includegraphics[width=0.52\linewidth]{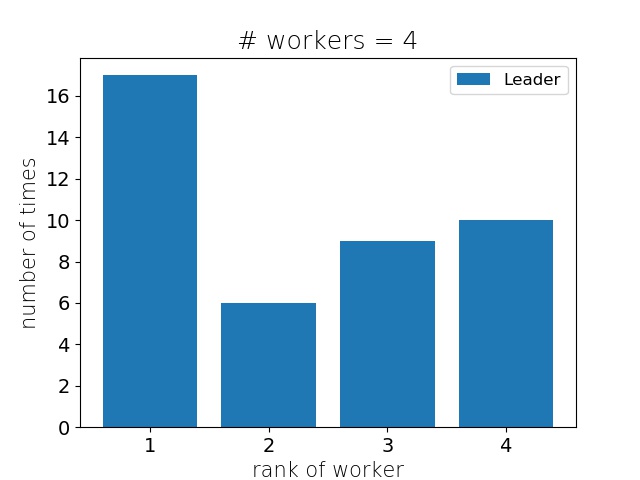} 
\vspace{-0.15in}
\caption{ResNet$20$ on CIFAR-$10$. The identity of the worker that is recognized as the leader (i.e. rank) versus iterations (on the left) and the number of times each worker was the leader (on the right).}
\label{fig:ResNet50rank}
\end{minipage}
\end{wrapfigure}

\vspace{0.25in}
\subsection{Experimental Results}
\label{sec:ExpA}

In Figure~\ref{fig:CNN7} we report results obtained with CNN$7$ on CIFAR-$10$. Our method consistently outperforms the competitors in terms of convergence speed (it is roughly $1.5$ times faster than EASGD for $12$ workers).


In Figure~\ref{fig:VGG16} we demonstrate results for VGG$16$ and CIFAR-$10$. LSGD obtains a much better solution than other methods. Furthermore, LSGD converges marginally faster than EASGD and it outperforms DataParallel and LARS significantly.



The experimental results obtained using ResNet$20$ and CIFAR-$10$ are shown in Figure~\ref{fig:ResNet20}. On $4$ workers we converge comparably fast to EASGD but recover better test error. For this experiment in Figure~\ref{fig:ResNet50rank} we show
the switching pattern between the leaders indicating that LSGD indeed takes advantage of all workers when exploring the landscape. On $12$ workers we converge roughly $1.5$ times faster than EASGD and obtain significantly smaller error. In CNN$7$, VGG$16$ and this experiment, LSGD is consistently better than DataParallel and LARS in the aspect of convergence speed, as expected.

\begin{remark}
We believe that these two facts together — (1) the schedule of leader switching recorded in the experiments shows frequent switching, and (2) the leader point itself is not pulled away from minima — suggest that the `pulling away' in LSGD is beneficial: non-leader workers that were pulled away from local minima later became the leader, and thus likely obtained an even better solution than they originally would have.
\end{remark}


\vspace{-0.1in}
Finally, in Figure~\ref{fig:ResNet50} we report the empirical results for ResNet$50$ run on ImageNet. In this experiment our algorithm behaves comparably to LARS but converges much faster than EASGD for $12$ workers. Also note that for ResNet50 on ImageNet, SGD is consistently worse than all reported methods (training on ImageNet with SGD on a single GTX1080 GPU until convergence usually takes about a week and gives slightly worse final performance), which is why the SGD curve was deliberately omitted (other methods converge in around two days).

\vspace{-0.55in}
\begin{wrapfigure}{r}{0.65\textwidth}
\begin{minipage}[t]{0.65\textwidth}
\vspace{-0.4in}
\hspace{-0.1in}\includegraphics[width=0.52\linewidth]{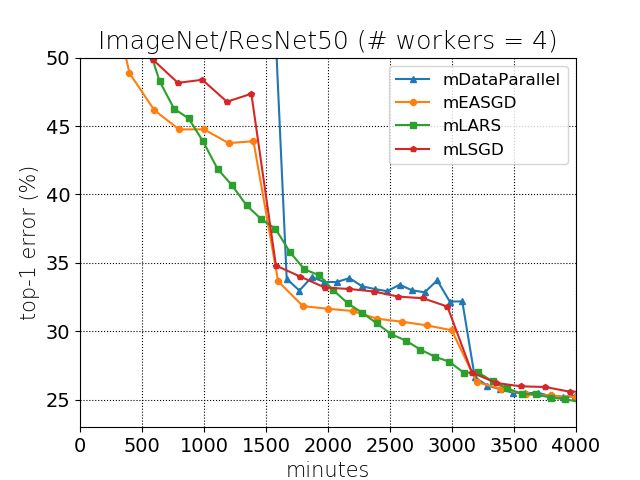}
\includegraphics[width=0.52\linewidth]{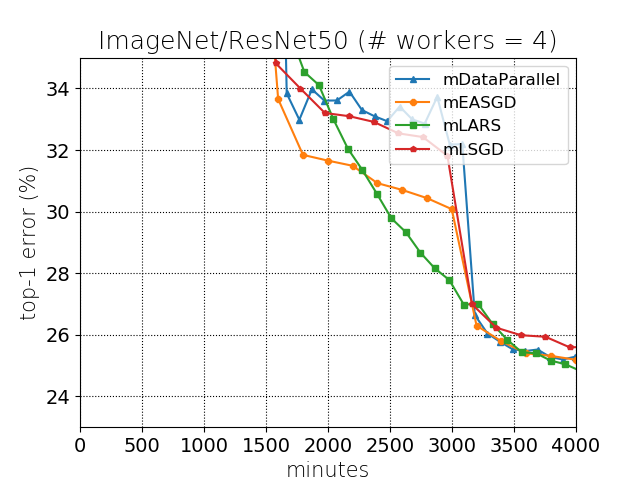}
\vspace{-0.1in}
\end{minipage}
\begin{minipage}[t]{0.65\textwidth}
\hspace{-0.1in}\includegraphics[width=0.52\linewidth]{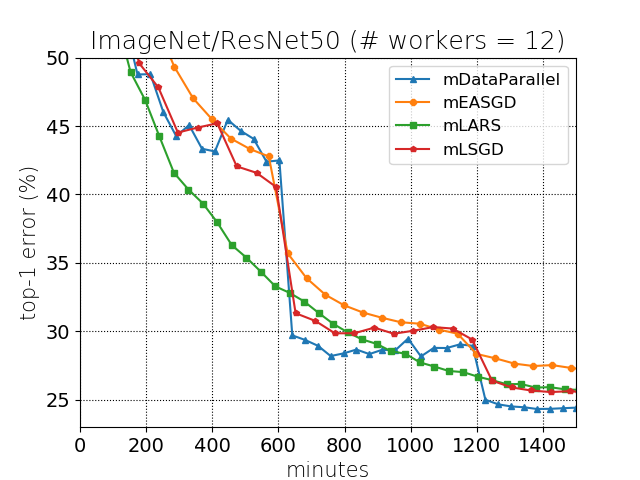}
\includegraphics[width=0.52\linewidth]{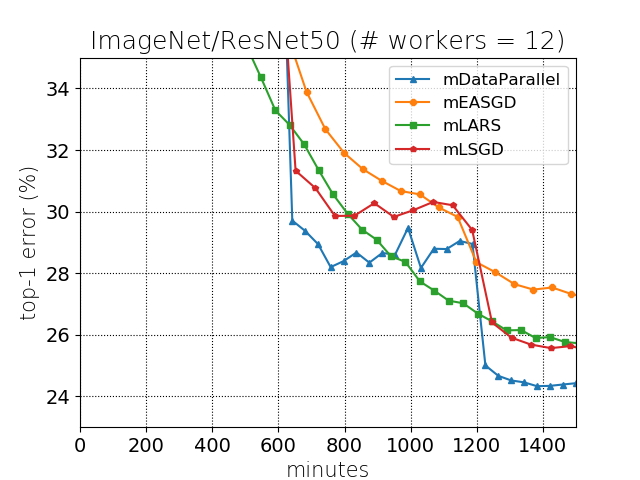}
\caption{ResNet$50$ on ImageNet. Test error for the center variable versus wall-clock time (original plot on the left and zoomed on the right). Test loss is reported in Figure \ref{fig:ResNet50testloss} in the Supplement.}
\label{fig:ResNet50}
\end{minipage}
\vspace{-0.45in}
\end{wrapfigure}


\vspace{0.4in}
\section{Conclusion}
\label{sec:Con}
\vspace{-0.15in}
In this paper we propose a new algorithm called LSGD for distributed optimization in non-convex settings. Our approach relies on pulling workers to the current best performer among them, rather than their average, at each iteration. We justify replacing the average by the leader both theoretically and through empirical demonstrations. We provide a thorough theoretical analysis, including proof of convergence, of our algorithm. Finally, we apply our approach to the matrix completion problem and training deep learning models and demonstrate that it is well-suited to these learning settings.

\vspace{-0.15in}
\section*{Acknowledgements}
WG and DG were supported in part by NSF Grant IIS-1838061. AW acknowledges support from the David MacKay Newton research fellowship at Darwin College, The Alan Turing Institute under EPSRC grant EP/N510129/1 \&  TU/B/000074, and the Leverhulme Trust via the CFI.

\newpage
\begin{small}
\bibliographystyle{unsrt}
\bibliography{LSGD}
\end{small}

\clearpage
\newpage

\noindent\rule{\textwidth}{4pt}
  \vskip 0.25in
  \vskip -\parskip%
\begin{center}
{\LARGE\bf Leader Stochastic Gradient Descent for Distributed Training of Deep Learning Models\\ (Supplementary Material) \par} 
\end{center}
\vskip 0.29in
  \vskip -\parskip
  \vskip 0.09in%
\noindent\rule{\textwidth}{1pt}

\begin{center}
\begin{large}
{\bf Abstract}  \\
\end{large}
\vskip 0.5ex
\end{center}

\begin{quote}
This Supplement presents additional details in support of the full article. These include the proofs of the theoretical statements from the main body of the paper and additional theoretical results. We also provide a toy illustrative example of the difference between LSGD and EASGD. Finally, the Supplement contains detailed description of the experimental setup and additional experiments and figures to provide further empirical support for the proposed methodology.
\end{quote}

\vskip 1ex

\section{LGD versus EAGD: Illustrative Example}
\label{sec:Il}

\begin{figure}[H]
\begin{subfigure}{.5\textwidth}
    \centering
    \includegraphics[width=.9\linewidth]{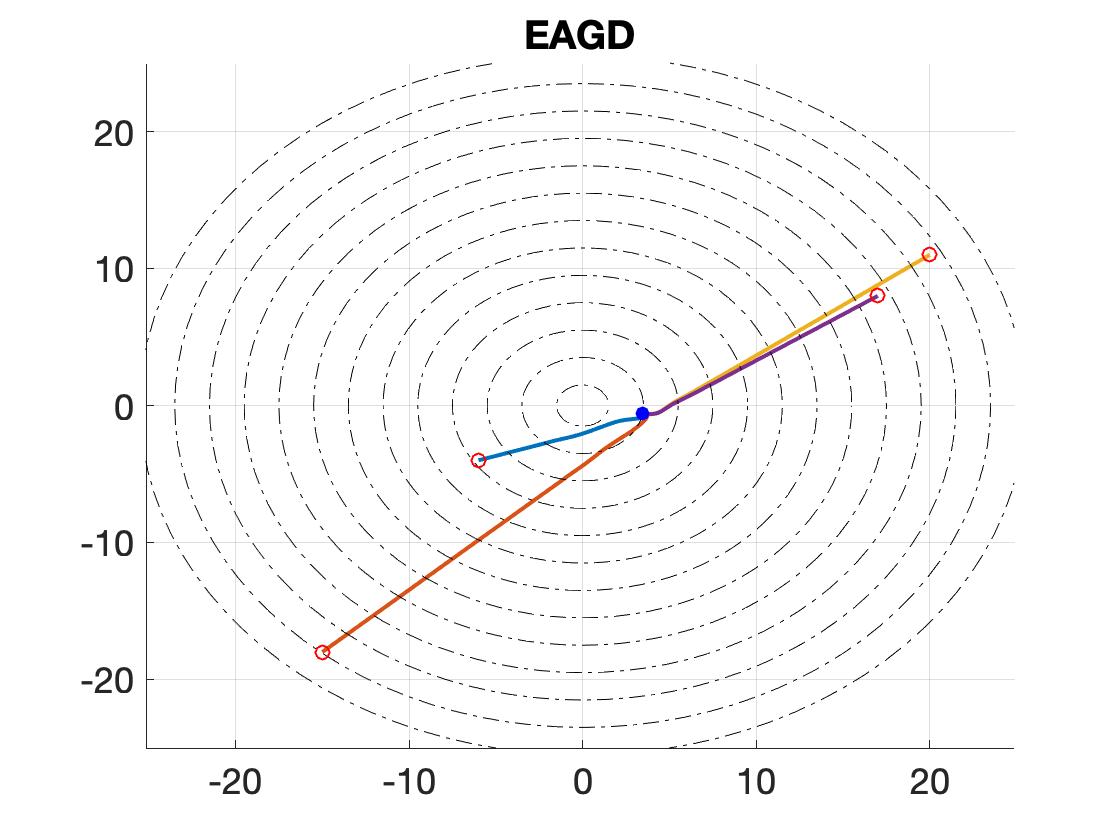}
\end{subfigure}%
\begin{subfigure}{.5\textwidth}
    \centering
    \includegraphics[width=.9\linewidth]{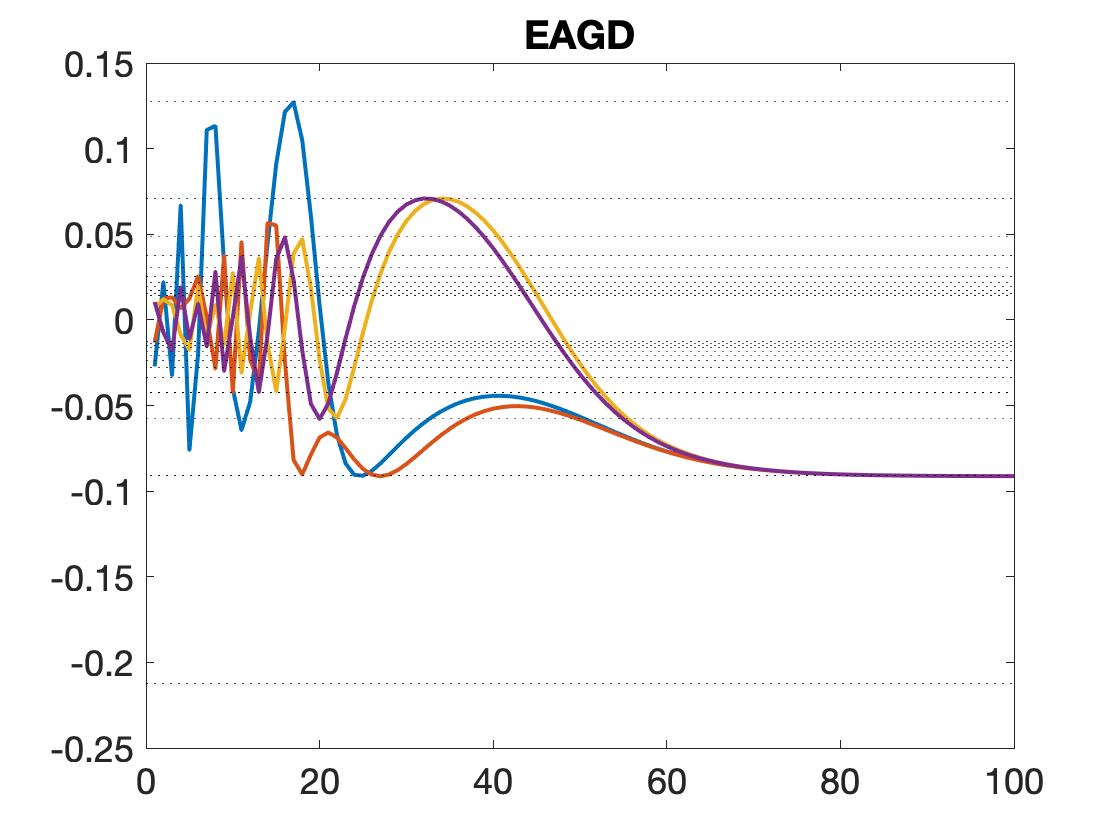}
\end{subfigure}

\begin{subfigure}{.5\textwidth}
    \centering
    \includegraphics[width=.9\linewidth]{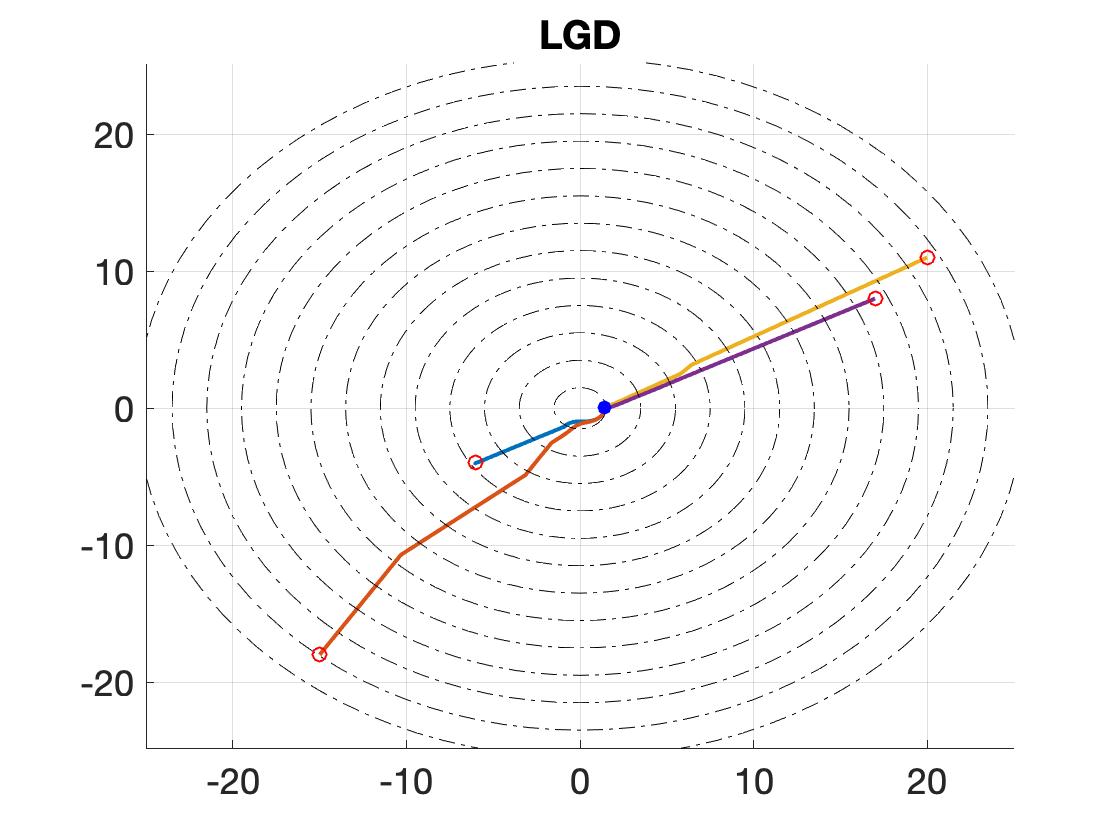}
\end{subfigure}%
\begin{subfigure}{.5\textwidth}
    \centering
    \includegraphics[width=.9\linewidth]{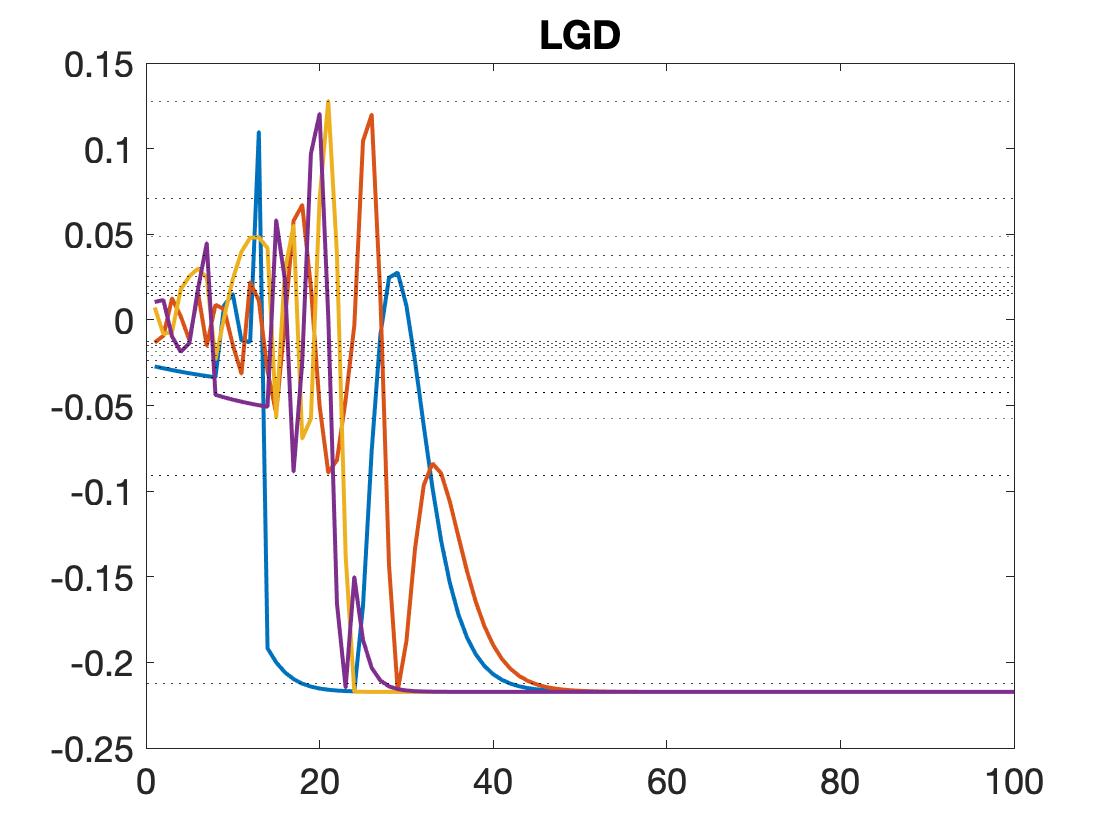}
\end{subfigure}
\caption{\textbf{Left}: Trajectories of variables $(\mathbf{x}, \mathbf{y})$ during optimization. The dashed lines represent the local minima. The red and blue circles are the start and end points of each trajectory, respectively. \textbf{Right}: The value of the objective function $L(x,y)$ for each worker during training.} 
\label{res:sinc}
\end{figure}

We consider the following non-convex optimization problem:\\
\[\min_{x,y} L(x,y), \:\:\:\text{where}\:\:\: L(x,y) = \frac{\sin(\sqrt{x^2 + y^2} \cdot \pi)}{\sqrt{x^2 + y^2} \cdot \pi}.
\]

Both methods use $4$ workers with initial points $(-6,-4)$, $(-15,-18)$, $(20,11)$ and $(17, 8)$. The communication period is set to $1$. The learning rate for both EAGD and LGD equals $0.1$. Furthermore, EAGD uses $\beta = 0.43$ and LGD uses $\lambda = 0.1$.

Table~\ref{res:sinc_t} captures optima obtained by different methods.

\begin{table}[htp!]
\centering
\begin{tabular}{||c|c||}
\hline
Optimizer  &$L(x,y)$ \\
\hline
EAGD  &-0.0912 \\ 
LGD  &\textbf{-0.2172} \\
\hline
\end{tabular}
\vspace{0.1in}
\caption{Optimum $L(x^*,y^*)$ recovered by EAGD and LGD.}
\label{res:sinc_t}
\end{table}

Figure~\ref{res:sinc} captures the optimization trajectories of EAGD and LGD algorithms. Clearly, EAGD suffers from the averaging policy, whereas LGD is able to recover a solution close to the \textit{global optimum}. 

\section{Proofs of Theoretical Results}
We provide omitted proofs from the main text.

\subsection{Definitions and Notation}
Recall that the objective function of Leader (Stochastic) Gradient Descent (L(S)GD) is defined as
\begin{equation}\label{eq:lsgd_obj}
\min_{\ux{x}{1},\ldots,\ux{x}{p}} L(\ux{x}{1},\ldots,\ux{x}{p}) := \sum_{i=1}^p f(\ux{x}{i}) + \frac{\lambda}{2}\|\ux{x}{i}- \wt{x}\|^2
\end{equation}
where $\wt{x} = \argmin \{ f(\ux{x}{1}),\ldots,f(\ux{x}{p})\}$. An L(S)GD step is a (stochastic) gradient step applied to $L$. Writing $z = \wt{x}$ at a particular $(\ux{x}{1},\ldots,\ux{x}{n})$, the update in the variable $\ux{x}{i}$ is
$$\ux{x}{i}_+ = \ux{x}{i} - \eta(\nabla f(\ux{x}{i}) + \lambda (\ux{x}{i} - z))$$
Observe that this reduces to a (S)GD step for the variable which is the leader.

Practical variants of the algorithm do not communicate the updated leader at every iteration. Thus, in our analysis, we will generally take $z$ to be an arbitrary guiding point, which is not necessarily the minimizer of $\ux{x}{1},\ldots,\ux{x}{p}$, nor even satisfy $f(z) \leq f(\ux{x}{i})$ for all $i$. The required properties of $z$ will be specified on a result-by-result basis.

When discussing the optimization landscape of LSGD, the term `LSGD objective function' will refer to \eqref{eq:lsgd_obj} with $\wt{x}$ defined as the argmin.

\emph{Communication periods} are sequences of steps where the leader is not updated. We introduce the notation $x_{k,j}$ for the $j$-th step in the $k$-th period, where the leader $z$ is updated only at the beginning of each period. We write $b_i(k)$ for the number of steps that $\ux{x}{i}$ takes during the $k$-th period. The standard LSGD defined above has $b_i(k) = 1$ for all $i,k$, in which case $\ux{x}{i}_{k,1} = \ux{x}{i}_k$. In addition, let $\wt{x}_k = \opn{argmin} \{f(\ux{x}{1}_{k,1}), \ldots, f(\ux{x}{p}_{k,1})\}$, the leader for the $k$-th period.

\subsection{Stationary Points of EASGD}
The EASGD \cite{EASGD} objective function is defined as
\begin{equation}
\min_{\ux{x}{1},\ldots,\ux{x}{p}, \wt{x}} L(\ux{x}{1},\ldots,\ux{x}{p}, \wt{x}) := \sum_{i=1}^p f(\ux{x}{i}) + \frac{\lambda}{2}\|\ux{x}{i}- \wt{x}\|^2.
\end{equation}
Observe that unlike LSGD, $\wt{x}$ is a decision variable of EASGD. A stationary point of EASGD is a point such that $\nabla L(x^1,\ldots,x^p, \wt{x}) = 0$.

\begin{prop}
There exists a Lipschitz differentiable function $f: \RR \rightarrow \RR$ such that for every $0 < \lambda \leq 1$, there exists a point $(x_\lambda,y_\lambda,0)$ which is a stationary point of EASGD with parameter $\lambda$, but none of $\{x_\lambda,y_\lambda, 0\}$ is a stationary point of $f$.
\label{prop:EASGD}
\end{prop}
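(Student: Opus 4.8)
The plan is to instantiate the EASGD objective with $p=2$ workers $x,y$ and center $\wt{x}$, so that $L(x,y,\wt{x}) = f(x) + f(y) + \frac{\lambda}{2}(x-\wt{x})^2 + \frac{\lambda}{2}(y-\wt{x})^2$. Setting $\nabla L = 0$ yields the three stationarity equations $f'(x) + \lambda(x-\wt{x}) = 0$, $f'(y) + \lambda(y-\wt{x}) = 0$, and $\lambda(x + y - 2\wt{x}) = 0$; the last forces the center to be the average, $\wt{x} = \frac{x+y}{2}$. Since I want a stationary point of the prescribed form $(x_\lambda, y_\lambda, 0)$, the averaging condition immediately forces $y_\lambda = -x_\lambda$. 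Writing $x_\lambda = a$, the first two equations collapse to $f'(a) = -\lambda a$ and $f'(-a) = \lambda a$, while the requirement that none of $\{a,-a,0\}$ be a stationary point of $f$ becomes simply $f'(a)\neq 0$, $f'(-a)\neq 0$, and $f'(0)\neq 0$. Note that $f'(a) = -\lambda a$ and $f'(-a) = \lambda a$ are automatically nonzero as soon as $a \neq 0$ and $\lambda > 0$, so the whole task reduces to arranging the correct values of $f'$ at a pair $\pm a_\lambda$ together with $f'(0)\neq 0$.

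It therefore suffices to construct a single Lipschitz function $g = f'$ with $g(0)\neq 0$ such that, for every $\lambda \in (0,1]$, there is some $a_\lambda \neq 0$ with $g(a_\lambda) = -\lambda a_\lambda$ and $g(-a_\lambda) = \lambda a_\lambda$; then $f(x) = \int_0^x g(t)\,dt$ is automatically $C^{1,1}$, i.e.\ Lipschitz-differentiable. The key idea is to encode the entire one-parameter family of leaders inside a bounded window. On $[1,2]$ I set $g(a) = -a(2-a)$, so that $g(a) = -\lambda a$ with $\lambda = 2-a$; as $a$ ranges over $[1,2)$ this $\lambda$ sweeps all of $(0,1]$, giving $a_\lambda = 2-\lambda$. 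I then impose the pairing (odd-reflection) relation $g(-a) = -g(a)$ on $[-2,-1]$, which produces exactly $g(-a_\lambda) = \lambda a_\lambda$, and I extend $g\equiv 0$ outside $[-2,2]$, which is consistent since $g(\pm 2) = 0$.

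It remains to define $g$ on $(-1,1)$ so as to join $g(-1) = 1$ to $g(1) = -1$ while keeping $g(0)\neq 0$. This is the only genuinely delicate point, and it is where the naive attempts fail: an even $f$, a quadratic, or a globally odd $g$ would all force $f'(0) = 0$ and thereby make $0$ a stationary point of $f$, defeating the construction. The resolution is to decouple the behavior near the origin from the symmetric window $[1,2]\cup[-2,-1]$ that generates the stationary points: I take $g\equiv 1$ on $[-1,0]$ and $g(x) = 1-2x$ on $[0,1]$, which is continuous, matches the endpoint values, and gives $g(0) = 1 \neq 0$. Finally I would verify the three routine facts: (i) $g$ is piecewise linear/quadratic with slopes bounded by $2$, hence $2$-Lipschitz, so $f$ is Lipschitz-differentiable; (ii) at $(a_\lambda,-a_\lambda,0)$ all three EASGD stationarity equations hold by the construction of $g$; and (iii) $f'(a_\lambda) = -\lambda a_\lambda \neq 0$, $f'(-a_\lambda) = \lambda a_\lambda \neq 0$, and $f'(0) = 1 \neq 0$, so indeed none of $x_\lambda, y_\lambda, 0$ is a stationary point of $f$.
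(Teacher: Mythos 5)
Your proof is correct, and it follows the same overall reduction as the paper: take $p=2$ workers, observe that the center equation forces $\wt{x}$ to be the average so that a stationary point of the form $(x_\lambda,y_\lambda,0)$ must have $y_\lambda=-x_\lambda$, and then reduce everything to finding $f$ with $f'(a)=-\lambda a$, $f'(-a)=\lambda a$, and $f'(0)\neq 0$. Where you differ is in the construction of $f$, and your route is more elementary and more explicit. The paper glues exponential tails $e^{\pm x+1}$ to a sixth-degree interpolating polynomial on $[-1,1]$, which requires (i) asserting the solvability of a $7\times 7$ linear system to get the $C^2$ matching conditions plus $p'(0)=1$, and (ii) a monotonicity/sign-change argument to show that $f'(x)+\lambda x=0$ has a root $y_\lambda\geq 1$ for each $\lambda\in(0,1]$. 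You instead build $g=f'$ directly as an explicit piecewise function, encode the entire family of solutions in the window $[1,2]$ via $g(a)=-a(2-a)$ so that $a_\lambda=2-\lambda$ is available in closed form, and enforce the antisymmetry $g(-a)=-g(a)$ only on $[-2,-1]$ while keeping $g(0)=1$ near the origin. This eliminates both the linear-system existence claim and the intermediate-value argument; the price is that your $f$ is only $C^{1,1}$ rather than $C^2$, but since the statement only demands Lipschitz differentiability (and your slopes are bounded by $2$), that is exactly enough. All three verification steps you list check out: $g$ is continuous at the break points $0,\pm 1,\pm 2$, the three stationarity equations hold at $(2-\lambda,\,\lambda-2,\,0)$, and $f'(\pm a_\lambda)=\mp\lambda a_\lambda\neq 0$ with $f'(0)=1\neq 0$.
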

\begin{proof}
Define $f(x)$ by
$$f(x) = \left\{ \begin{array}{cl} e^{x+1} & \text{ if } x < -1\\
p(x) & \text{ if } -1 \leq x \leq 1 \\
e^{-x+1} & \text{ if } x > 1 \end{array} \right.$$
where $p(x) = a_6x^6 + \ldots + a_1x + a_0$ is a sixth-degree polynomial. For $f$ to be Lipschitz differentiable, we will select $p(x)$ to make $f$ twice continuously differentiable, with bounded second derivative. To make $f$ twice continuously differentiable, we must have $p(1) = 1, p'(1) = -1, p''(1) = 1$ and $p(-1) = -1, p'(-1) = 1, p''(-1) = -1$. Since we aim to have $f'(0) \neq 0$, we also will require $f'(0) = p'(0) = 1$. The existence of $p$ is equivalent to the solvability of a linear system, which is easily checked to be invertible. Thus, we deduce that such a function $f$ exists.

It remains to show that for any $0 < \lambda \leq 1$, there exists a stationary point $(x,y, 0)$ of EASGD. Set $x = -y$. The first-order condition yields $f'(x) + \lambda x = 0$. Since $\lambda \leq 1$, we have $\lambda(1) + f'(1) \leq 0$. For $x \geq 1$, $f'(x) = -e^{-x+1}$ is an increasing function, so $f'(x) + \lambda x$ is increasing, and we deduce that there exists a solution $y_\lambda \geq 1$ with $\lambda y_\lambda+ f'(y_\lambda) = 0$. By symmetry, $-y_\lambda \leq -1$ satisfies $f'(-y_\lambda) + \lambda(-y_\lambda) = 0$, since $f'(x) = e^{x+1}$ for $x \leq -1$. Hence, $(-y_\lambda, y_\lambda, 0)$ is a stationary point of EASGD, but none of $\{-y_\lambda, y_\lambda, 0\}$ are stationary points of $f$.
\end{proof}

\subsection{Technical Preliminaries}
Recall the statement of Assumption 1:
\begin{description}
\item[Assumption 1] $f$ is $M$-Lipschitz-differentiable and $m$-strongly convex, which is to say, the gradient $\nabla f$ satisfies $\|\nabla f(x) - \nabla f(y)\| \leq M\|x - y\|$, and $f$ satisfies
$$f(y) \geq f(x) + \nabla f(x)^T(y-x) + \frac{m}{2}\|y-x\|^2.$$
We write $x^\ast$ for the unique minimizer of $f$, and $\kappa := \frac{M}{m}$ for the condition number of $f$.
\end{description}

We will frequently use the following standard result.
\begin{lma}\label{standard_lipschitzg}
If $f$ is $M$-Lipschitz-differentiable, then
$$f(y) \leq f(x) + \nabla f(x)^T(y-x) + \frac{M}{2}\|y - x\|^2.$$
\end{lma}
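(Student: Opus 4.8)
The plan is to bound the gap between $f$ and its first-order Taylor expansion at $x$ by integrating the gradient along the line segment joining $x$ to $y$. The natural tool is the fundamental theorem of calculus applied to the scalar function $\phi(t) = f(x + t(y-x))$ on $[0,1]$, whose derivative is $\phi'(t) = \nabla f(x + t(y-x))^T(y-x)$.

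First I would write $f(y) - f(x) = \phi(1) - \phi(0) = \int_0^1 \nabla f(x + t(y-x))^T(y-x)\,dt$. Subtracting the linear term in the form $\nabla f(x)^T(y-x) = \int_0^1 \nabla f(x)^T(y-x)\,dt$ then recasts the quantity of interest as the single integral $f(y) - f(x) - \nabla f(x)^T(y-x) = \int_0^1 [\nabla f(x + t(y-x)) - \nabla f(x)]^T(y-x)\,dt$, which isolates exactly the deviation that the Lipschitz hypothesis is meant to control.

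The central step is to estimate this integrand. Applying Cauchy--Schwarz gives the bound $[\nabla f(x + t(y-x)) - \nabla f(x)]^T(y-x) \leq \|\nabla f(x + t(y-x)) - \nabla f(x)\|\,\|y-x\|$, and then $M$-Lipschitz-differentiability bounds the first factor by $M\,\|t(y-x)\| = Mt\,\|y-x\|$. Combining these yields an integrand bounded by $Mt\,\|y-x\|^2$, and since $\int_0^1 Mt\,dt = \frac{M}{2}$, integrating produces the claimed $\frac{M}{2}\|y-x\|^2$.

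There is no substantive obstacle, as the result is entirely standard. The only points demanding care are the correct ordering of the two estimates --- Cauchy--Schwarz first, then the Lipschitz bound --- and keeping the factor of $t$ inside the norm, so that the $t$-integral contributes the factor $\frac{1}{2}$ rather than $1$.
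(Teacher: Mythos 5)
Your proof is correct: the fundamental-theorem-of-calculus argument along the segment, followed by Cauchy--Schwarz and the Lipschitz bound on the gradient (with the factor $t$ correctly retained so the integral yields $\tfrac{M}{2}$), is the standard derivation of this descent lemma. The paper does not prove the lemma itself but simply cites eq.\ (4.3) of Bottou--Curtis--Nocedal, and your argument is exactly the one given there.
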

\begin{proof}
See \cite[eq. (4.3)]{BCN2018SIAMREV}.
\end{proof}

\begin{lma}\label{basic_inequality}
Let $f$ be $m$-strongly convex, and let $x^\ast$ be the minimizer of $f$. Then
\begin{equation}\label{eq:pl_inequality}
f(w) - f(x^\ast) \leq \frac{1}{2m}\|\nabla f(w)\|^2
\end{equation}
and
\begin{equation}\label{eq:lb_f}
f(w) - f(x^\ast) \geq \frac{m}{2}\|w - x^\ast\|^2
\end{equation}
\end{lma}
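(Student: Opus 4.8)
The plan is to establish the two inequalities separately; both are standard consequences of $m$-strong convexity, and neither presents a genuine obstacle. I will use only the strong convexity inequality from Assumption 1, namely $f(y) \geq f(x) + \nabla f(x)^T(y-x) + \frac{m}{2}\|y-x\|^2$, together with the first-order optimality condition $\nabla f(x^\ast) = 0$.

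First I would prove the lower bound \eqref{eq:lb_f}. I apply the strong convexity inequality with the base point taken to be $x^\ast$ and the free point taken to be $w$, obtaining $f(w) \geq f(x^\ast) + \nabla f(x^\ast)^T(w - x^\ast) + \frac{m}{2}\|w - x^\ast\|^2$. Since $x^\ast$ minimizes $f$, the middle term vanishes, and rearranging immediately yields $f(w) - f(x^\ast) \geq \frac{m}{2}\|w - x^\ast\|^2$.

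For the Polyak--\L ojasiewicz inequality \eqref{eq:pl_inequality}, I would instead fix the base point at $w$ and minimize the strong convexity lower bound over the free variable $y$. The key step is to observe that the quadratic $y \mapsto f(w) + \nabla f(w)^T(y - w) + \frac{m}{2}\|y - w\|^2$ is minimized at $y = w - \frac{1}{m}\nabla f(w)$, where it attains the value $f(w) - \frac{1}{2m}\|\nabla f(w)\|^2$. Because strong convexity guarantees that $f(y)$ dominates this quadratic for every $y$, it in particular dominates the quadratic's minimum, so $f(y) \geq f(w) - \frac{1}{2m}\|\nabla f(w)\|^2$ holds for all $y$. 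Evaluating at $y = x^\ast$ and rearranging gives $f(w) - f(x^\ast) \leq \frac{1}{2m}\|\nabla f(w)\|^2$, completing the proof. The only point requiring any care is the explicit minimization of the quadratic, which is a routine completion-of-the-square computation; there is no substantive difficulty to flag.
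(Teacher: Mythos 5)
Your proof is correct and matches the paper's approach: the paper simply cites \eqref{eq:pl_inequality} as the well-known Polyak--\L{}ojasiewicz inequality and notes that \eqref{eq:lb_f} follows from strong convexity with $\nabla f(x^\ast)=0$, and your derivations are exactly the standard arguments behind those citations. No gaps.
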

\begin{proof}
\Cref{eq:pl_inequality} is the well-known Polyak-\L{}ojasiewicz inequality. \Cref{eq:lb_f} follows from the definition of strong convexity, and $\nabla f(x^\ast) = 0$.
\end{proof}

\begin{lma}\label{gd_descent}
Let $f$ be $M$-Lipschitz-differentiable. If the gradient descent step size $\eta < \frac{2}{M}$, then $\|\nabla f(x)\|^2 \leq \alpha (f(x) - f(x^+))$, where $\alpha = \frac{2}{\eta(2 - \eta M)}$.
\end{lma}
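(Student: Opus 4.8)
The plan is to derive this directly from the descent lemma (\Cref{standard_lipschitzg}), which is the only tool needed. The gradient descent step is $x^+ = x - \eta \nabla f(x)$, so $x^+ - x = -\eta \nabla f(x)$. First I would instantiate the inequality $f(y) \leq f(x) + \nabla f(x)^T(y-x) + \frac{M}{2}\|y-x\|^2$ at the specific point $y = x^+$.

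Carrying out that substitution, the linear term becomes $\nabla f(x)^T(-\eta \nabla f(x)) = -\eta\|\nabla f(x)\|^2$ and the quadratic term becomes $\frac{M}{2}\|\eta \nabla f(x)\|^2 = \frac{M\eta^2}{2}\|\nabla f(x)\|^2$. Collecting these gives
\begin{equation*}
f(x^+) \leq f(x) - \eta\left(1 - \frac{\eta M}{2}\right)\|\nabla f(x)\|^2 = f(x) - \frac{\eta(2-\eta M)}{2}\|\nabla f(x)\|^2.
\end{equation*}
Rearranging isolates the function decrease on the right: $f(x) - f(x^+) \geq \frac{\eta(2-\eta M)}{2}\|\nabla f(x)\|^2$.

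To finish I would use the step-size hypothesis $\eta < \frac{2}{M}$, which guarantees $2 - \eta M > 0$ and hence that the coefficient $\frac{\eta(2-\eta M)}{2}$ is strictly positive. Dividing through by this positive quantity yields $\|\nabla f(x)\|^2 \leq \frac{2}{\eta(2-\eta M)}(f(x) - f(x^+))$, which is exactly the claim with $\alpha = \frac{2}{\eta(2-\eta M)}$.

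Honestly, there is no real obstacle here: the result is an immediate one-step consequence of the descent lemma, and the only point that requires any care is verifying that the hypothesis $\eta < \frac{2}{M}$ is precisely what makes the dividing coefficient positive so that the inequality may be inverted without flipping its direction. The whole argument is a few lines of algebra with no case analysis or limiting procedure.
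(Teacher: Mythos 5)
Your proposal is correct and follows exactly the paper's own argument: apply \Cref{standard_lipschitzg} at $y = x^+ = x - \eta\nabla f(x)$, collect the coefficient $\frac{\eta}{2}(2-\eta M)$ of $\|\nabla f(x)\|^2$, and divide using the positivity guaranteed by $\eta < \frac{2}{M}$. Nothing is missing.
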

\begin{proof}
By \Cref{standard_lipschitzg},
\begin{align*}
f(x_+) &\leq f(x) - \eta \|\nabla f(x)\|^2 + \frac{\eta^2}{2}M \|\nabla f(x)\|^2 \\
&= f(x) - \frac{\eta}{2}(2 - \eta M) \|\nabla f(x)\|^2
\end{align*}
Rearranging yields the desired result.
\end{proof}

\subsection{Proofs from \Cref{subsub:rates}}

\begin{lma}[One-Step Descent]\label{main_descent}
Let $f$ satisfy Assumption 1. Let $\wt{g}(x)$ be an unbiased estimator for $\nabla f(x)$ with $\opn{Var}(\wt{g}(x)) \leq \sigma^2 + \nu \|\nabla f(x)\|^2$. Let $x$ be the current iterate, and let $z$ be another point, with $\delta := x - z$. The LSGD step $x_+ = x - \eta(\wt{g}(x) + \lambda(x -z))$ satisfies:
\begin{align}\label{eq:main_descent}
\EX f(x_+) &\leq  f(x) - \frac{\eta}{2} (1 - \eta M(\nu+1))\|\nabla f(x)\|^2 - \frac{\eta}{4}\lambda(m - 2\eta M\lambda) \|\delta\|^2 \\
\nonumber &\bump -\frac{\eta\sqrt{\lambda}}{\sqrt{2}}( \sqrt{m} - \eta M\sqrt{2\lambda})\|\nabla f(x)\|\|\delta\| - \eta\lambda(f(x) - f(z)) + \frac{\eta^2 }{2} M\sigma^2
\end{align}
where the expectation is with respect to $\wt{g}(x)$, and conditioned on the current point $x$. Hence, for sufficiently small $\eta,\lambda$ with $\eta \leq (2M(\nu+1))^{-1}$ and $\eta\lambda \leq (2\kappa)^{-1}, \eta\sqrt{\lambda} \leq (\kappa\sqrt{2m})^{-1}$,
\begin{equation}\label{eq:main_descent_simple}
\EX f(x_+) - f(x^\ast) \leq (1-m\eta)(f(x) - f(x^\ast)) - \eta\lambda( f(x) - f(z)) + \frac{\eta^2 M}{2}\sigma^2
\end{equation}
\end{lma}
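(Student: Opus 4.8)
The plan is to obtain \eqref{eq:main_descent} from the descent inequality for Lipschitz-differentiable functions (\Cref{standard_lipschitzg}) applied at the LSGD point $x_+ = x - \eta(\wt{g}(x) + \lambda\delta)$, where $\delta = x - z$. This gives $f(x_+) \le f(x) + \nabla f(x)^T(x_+ - x) + \frac{M}{2}\|x_+ - x\|^2$. I would then substitute the step and take expectation over $\wt{g}(x)$, using unbiasedness $\EX\wt{g}(x) = \nabla f(x)$ for the linear term, and $\EX\|\wt{g}(x)\|^2 = \|\nabla f(x)\|^2 + \opn{Var}(\wt{g}(x)) \le (1+\nu)\|\nabla f(x)\|^2 + \sigma^2$ together with $\EX[\wt{g}(x)^T\delta] = \nabla f(x)^T\delta$ for the quadratic term. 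Collecting, this yields a bound with a $\|\nabla f(x)\|^2$ term of coefficient $-\eta + \frac{\eta^2 M(1+\nu)}{2}$, an inner-product term $\nabla f(x)^T\delta$ of coefficient $-\eta\lambda + \eta^2 M\lambda$, a $\|\delta\|^2$ term of coefficient $\frac{\eta^2 M\lambda^2}{2}$, and the noise term $\frac{\eta^2 M}{2}\sigma^2$.

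The two contributions to the cross term $\nabla f(x)^T\delta$ are treated separately. For the first-order part $-\eta\lambda\,\nabla f(x)^T\delta$, I would use $m$-strong convexity (Assumption 1) in the form $f(z) \ge f(x) - \nabla f(x)^T\delta + \frac{m}{2}\|\delta\|^2$, i.e. $\nabla f(x)^T\delta \ge f(x) - f(z) + \frac{m}{2}\|\delta\|^2$; since the coefficient is negative this is exactly what produces the leader term $-\eta\lambda(f(x)-f(z))$ along with a contribution $-\frac{\eta\lambda m}{2}\|\delta\|^2$. For the second-order part $\eta^2 M\lambda\,\nabla f(x)^T\delta$ I would apply Cauchy--Schwarz to bound it by $\eta^2 M\lambda\|\nabla f(x)\|\|\delta\|$. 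To recover the stated \emph{negative} cross term, I would reserve $-\frac{\eta}{2}\|\nabla f(x)\|^2$ from the gradient coefficient and $-\frac{\eta\lambda m}{4}\|\delta\|^2$ from the $\|\delta\|^2$ coefficient and merge them by the weighted AM--GM inequality $\frac12 a^2 + \frac{\lambda m}{4}b^2 \ge \sqrt{\tfrac{\lambda m}{2}}\,ab$ with $a = \|\nabla f(x)\|$, $b = \|\delta\|$, giving $-\eta\sqrt{\tfrac{\lambda m}{2}}\|\nabla f(x)\|\|\delta\|$. Reassembling and rewriting the retained coefficients in the factored forms $-\frac{\eta}{2}(1-\eta M(\nu+1))$, $-\frac{\eta\lambda}{4}(m - 2\eta M\lambda)$, and $-\frac{\eta\sqrt\lambda}{\sqrt2}(\sqrt m - \eta M\sqrt{2\lambda})$ reproduces \eqref{eq:main_descent} exactly.

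For the simplified bound \eqref{eq:main_descent_simple}, the strategy is a sign analysis of the three quadratic coefficients. The condition $\eta\lambda \le (2\kappa)^{-1}$ makes $m - 2\eta M\lambda \ge 0$, so the $\|\delta\|^2$ term is non-positive; the condition $\eta\sqrt\lambda \le (\kappa\sqrt{2m})^{-1}$ makes $\sqrt m - \eta M\sqrt{2\lambda} \ge 0$, so the cross term is non-positive; and $\eta \le (2M(\nu+1))^{-1}$ keeps $1 - \eta M(\nu+1)$ bounded away from $0$. I would discard the first two non-positive terms, then apply the Polyak--\L{}ojasiewicz inequality (\Cref{basic_inequality}, $\|\nabla f(x)\|^2 \ge 2m(f(x)-f(x^\ast))$) to the retained gradient term, turning it into a geometric contraction of the stated form, and finally subtract $f(x^\ast)$ from both sides.

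The delicate part is the coefficient bookkeeping: choosing the reserved amounts so that AM--GM reproduces the precise factor $-\eta\sqrt{\tfrac{\lambda m}{2}}$, and checking that each of the three step-size bounds is exactly what signs one coefficient. The main obstacle I anticipate is the passage to the contraction factor in \eqref{eq:main_descent_simple}: since the retained gradient coefficient carries the $(1-\eta M(\nu+1))$ correction, one must track the interplay between this $\nu$-dependent factor and the step-size conditions carefully to confirm that the Polyak--\L{}ojasiewicz step delivers the claimed $(1-m\eta)$ rate rather than a weaker constant.
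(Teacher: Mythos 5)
Your proposal matches the paper's proof essentially step for step: the same application of \Cref{standard_lipschitzg}, the same expectation/variance bookkeeping, the same use of strong convexity on $-\eta\lambda\,\nabla f(x)^T\delta$ to extract the leader term, the same Cauchy--Schwarz bound on the $\eta^2 M\lambda\,\nabla f(x)^T\delta$ term, and the same reservation of $\tfrac{\eta}{2}\|\nabla f(x)\|^2$ and $\tfrac{\eta\lambda m}{4}\|\delta\|^2$ merged by AM--GM into the $-\tfrac{\eta\sqrt{\lambda}}{\sqrt{2}}(\sqrt{m}-\eta M\sqrt{2\lambda})\|\nabla f(x)\|\|\delta\|$ term. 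The obstacle you flag at the end is real but harmless: because half of the gradient term is spent on the cross term, the conditions as stated literally yield a contraction factor $1-\tfrac{m\eta}{2}$ rather than $1-m\eta$ (the paper elides this by citing the standard SGD theorems), which changes only the constant and none of the asymptotic conclusions.
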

\begin{proof}
The proof is similar to the convergence analysis of SGD. We apply \Cref{standard_lipschitzg} to obtain
\begin{align*}
f(x_+) &\leq f(x) - \eta \nabla f(x)^T(\wt{g}(x) + \lambda\delta) + \frac{\eta^2}{2} M \|\wt{g}(x) + \lambda\delta\|^2.
\end{align*}
Taking the expectation and using $\EX \wt{g}(x) = \nabla f(x)$,
\begin{align*}
\EX f(x_+) &\leq f(x) - \eta \|\nabla f(x)\|^2 - \eta\lambda \nabla f(x)^T\delta + \frac{\eta^2\lambda^2}{2} M\|\delta\|^2 + \eta^2\lambda M \nabla f(x)^T\delta + \frac{\eta^2}{2} M \EX \lb \wt{g}(x)^T\wt{g}(x) \rb
\end{align*}
Using the definition of $m$-strong convexity, we have $f(z) \geq f(x) - \nabla f(x)^T\delta + \frac{m}{2}\|\delta\|^2$, 
from which we deduce that $-\nabla f(x)^T\delta \leq -(f(x) - f(z) + \frac{m}{2}\|\delta\|^2)$. Substituting this above, and splitting both the terms $\eta \|\nabla f(x)\|^2, \frac{\eta}{2}m\lambda\|\delta\|^2$ in half, we obtain 
\begin{align*}
\EX f(x_+) &= f(x) - \frac{\eta}{2} \|\nabla f(x)\|^2 + \frac{\eta^2}{2} M \EX \lb \wt{g}(x)^T\wt{g}(x) \rb \\
&\bump - \frac{\eta}{4} m\lambda\|\delta\|^2 + \frac{\eta^2}{2}\lambda^2 M \|\delta\|^2 \\
&\bump - \frac{\eta}{2} \|\nabla f(x)\|^2 - \frac{\eta }{4} m\lambda \|\delta\|^2 + \eta^2\lambda M \nabla f(x)^T\delta \\
&\bump - \eta\lambda(f(x) - f(z))
\end{align*}

We proceed to bound each line. For the first line, the standard bias-variance decomposition yields
$$ \EX \lb \wt{g}(x)^T\wt{g}(x) \rb  \leq (\nu+1) \|\nabla f(x)\|^2 + \sigma^2$$
and so we have
$$- \frac{\eta}{2} \|\nabla f(x)\|^2 + \frac{\eta^2}{2} M \EX \lb \wt{g}(x)^T\wt{g}(x) \rb \leq -\frac{\eta}{2} (1 - \eta M(\nu+1))\|\nabla f(x)\|^2 + \frac{\eta^2}{2} M\sigma^2.$$

For the second line, we obtain
$$-\frac{\eta}{4}m\lambda\|\delta\|^2 + \frac{\eta^2}{2}\lambda^2 M \|\delta\|^2 \leq -\frac{\eta}{4}\lambda( m - 2\eta M\lambda) \|\delta\|^2.$$
For the third line, we apply the inequality $a^2 + b^2 \geq 2ab$ to obtain
$$\frac{\eta}{2} \|\nabla f(x)\|^2 + \frac{\eta}{4}m\lambda \|\delta\|^2 \geq \frac{\eta}{\sqrt{2}} \sqrt{m\lambda} \|\nabla f(x)\|\|\delta\|.$$
Using the Cauchy-Schwarz inequality, we then obtain
$$-\frac{\eta}{2} \|\nabla f(x)\|^2 - \frac{\eta}{4}m\lambda \|\delta\|^2 + \eta^2\lambda \nabla Mf(x)^T\delta \leq -\frac{\eta\sqrt{\lambda}}{\sqrt{2}}(\sqrt{m} - \eta M\sqrt{2\lambda})\|\nabla f(x)\|\|\delta\|.$$

Combining these inequalities yields the desired result.
\end{proof}

\begin{thm}\label{main_descent_standard_rates}
Let $f$ satisfy Assumption 1. Suppose that the leader $z_k$ is always chosen so that $f(z_k) \leq f(x_k)$. If $\eta,\lambda$ are fixed so that $\eta \leq (2M(\nu+1))^{-1}$ and $\eta\lambda \leq (2\kappa)^{-1}, \eta\sqrt{\lambda} \leq (\kappa\sqrt{2m})^{-1}$, then $\limsup\limits_{k \rightarrow \infty} \EX f(x_k) - f(x^\ast) \leq \frac{1}{2}\eta\kappa\sigma^2$. If $\eta$ decreases at the rate $\eta_k = \Theta(\frac{1}{k})$, then $\EX f(x_k) - f(x^\ast) = O(\frac{1}{k})$.
\end{thm}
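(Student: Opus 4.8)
The plan is to reduce the theorem to a scalar recursion and then analyze that recursion in the two step-size regimes. First I would invoke the one-step descent inequality \eqref{eq:main_descent_simple} from \Cref{main_descent}, which under the stated conditions on $\eta,\lambda$ gives
$$\EX f(x_{k+1}) - f(x^\ast) \leq (1-m\eta)(f(x_k) - f(x^\ast)) - \eta\lambda(f(x_k) - f(z_k)) + \frac{\eta^2 M}{2}\sigma^2.$$
Because the leader is chosen so that $f(z_k) \leq f(x_k)$, the middle term is nonpositive and may simply be discarded. Taking total expectations and writing $a_k := \EX f(x_k) - f(x^\ast)$, I obtain the clean recursion $a_{k+1} \leq (1-m\eta)a_k + \frac{\eta^2 M}{2}\sigma^2$. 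I would then record the elementary fact that $0 < 1 - m\eta < 1$: the hypothesis $\eta \leq (2M(\nu+1))^{-1}$ together with $M \geq m$ and $\nu+1 \geq 1$ yields $m\eta \leq \frac{1}{2\kappa(\nu+1)} \leq \frac{1}{2}$, so the recursion is a genuine contraction with additive noise.

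For the fixed step-size claim, I would identify the fixed point of the recursion, namely $a^\ast = \frac{\eta^2 M\sigma^2/2}{m\eta} = \frac{1}{2}\eta\kappa\sigma^2$ (using $\kappa = M/m$), and subtract it off. Setting $b_k = a_k - a^\ast$, the recursion collapses to $b_{k+1} \leq (1-m\eta)b_k$, so that $a_k \leq a^\ast + (1-m\eta)^k(a_0 - a^\ast)$. Since $1-m\eta \in (0,1)$, the geometric term vanishes, giving $\limsup_{k\to\infty} a_k \leq a^\ast = \frac{1}{2}\eta\kappa\sigma^2$, exactly the stated bound.

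For the diminishing step-size claim the recursion reads $a_{k+1} \leq (1-m\eta_k)a_k + \frac{\eta_k^2 M}{2}\sigma^2$, and I would prove $a_k \leq Q/k$ by induction for a suitable constant $Q$. Writing $\eta_k = \Theta(1/k)$ as $\eta_k \asymp c/k$ and substituting the inductive hypothesis, the surviving terms are of order $1/k^2$; comparing them via the estimate $\frac{1}{k+1} \geq \frac{1}{k} - \frac{1}{k^2}$ reduces the inductive step to the scalar inequality $Q(mc - 1) \geq \frac{c^2 M\sigma^2}{2}$, which is solvable for $Q$ provided the step-size constant satisfies $mc > 1$. One then enlarges $Q$ if necessary to cover the base case.

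The main obstacle is the diminishing step-size case, where two subtleties must be handled. First, the induction only closes when the constant $c$ in $\eta_k = \Theta(1/k)$ exceeds $1/m$; otherwise the contraction factor $1-m\eta_k$ decays too slowly and the $O(1/k)$ rate is lost, so the statement must be read with an adequately large leading constant. Second, since $\Theta(1/k)$ only controls the tail, the induction must be started at an index $k_0$ large enough that $1 - m\eta_k > 0$. By contrast, the reduction to the scalar recursion and the entire fixed step-size analysis are routine once the nonpositive leader term is discarded.
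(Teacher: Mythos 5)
Your proposal is correct and follows essentially the same route as the paper: it starts from the one-step descent bound \eqref{eq:main_descent_simple}, drops the nonpositive leader term using $f(z_k)\leq f(x_k)$, and then runs the standard strongly convex SGD recursion analysis for fixed and diminishing step sizes. The paper simply delegates that last recursion analysis to Theorems 4.6 and 4.7 of \cite{BCN2018SIAMREV}, whereas you carry it out explicitly (correctly flagging the usual requirement that the $\Theta(1/k)$ step size have a large enough leading constant).
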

\begin{proof}
This result follows \eqref{eq:main_descent_simple} and Theorems 4.6 and 4.7 of \cite{BCN2018SIAMREV}.
\end{proof}

\subsection{Proofs from \Cref{subsub:comm}}

\begin{thm}
Let $f$ satisfy Assumption 1. Suppose that $\eta,\lambda$ are small enough that $\eta\lambda \leq 1$ and $\eta \leq (2M(\nu+1))^{-1}, \eta\lambda \leq (2\kappa)^{-1}, \eta\sqrt{\lambda} \leq (\kappa\sqrt{2m})^{-1}$. If $f(x) \leq f(z)$, then $\EX f(x_+) \leq  f(z) + \frac{1}{2}\eta^2 M\sigma^2$.
\end{thm}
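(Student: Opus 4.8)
The plan is to reduce the statement to the one-step descent inequality \eqref{eq:main_descent_simple} of \Cref{main_descent}, whose hypotheses on $\eta,\lambda$ are exactly those assumed here. First I would invoke \eqref{eq:main_descent_simple}, which gives
\begin{equation*}
\EX f(x_+) - f(x^\ast) \leq (1-m\eta)(f(x) - f(x^\ast)) - \eta\lambda(f(x) - f(z)) + \frac{\eta^2 M}{2}\sigma^2.
\end{equation*}
The remaining task is purely algebraic: show that the first two terms on the right are bounded above by $f(z) - f(x^\ast)$ whenever $f(x) \leq f(z)$, since then the noise term $\frac{1}{2}\eta^2 M\sigma^2$ is exactly what survives.

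To carry this out I would introduce the shorthand $a := f(x) - f(x^\ast) \geq 0$ and $b := f(z) - f(x^\ast) \geq 0$; both are nonnegative by definition of $x^\ast$, and the hypothesis $f(x) \leq f(z)$ is precisely $a \leq b$. Regrouping the right-hand side (excluding the noise term) as
\begin{equation*}
(1-m\eta)a - \eta\lambda(a - b) = (1 - m\eta - \eta\lambda)a + \eta\lambda b,
\end{equation*}
I would then use $a \leq b$ to replace $a$ by $b$ in the first summand, which is valid provided the coefficient $1 - m\eta - \eta\lambda$ is nonnegative. This collapses the expression to $(1 - m\eta - \eta\lambda)b + \eta\lambda b = (1-m\eta)b \leq b$, using $m\eta \geq 0$ and $b \geq 0$. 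Substituting back and adding $f(x^\ast)$ to both sides yields $\EX f(x_+) \leq f(z) + \frac{1}{2}\eta^2 M\sigma^2$, as required.

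The one point that needs verification is the sign of the coefficient $1 - m\eta - \eta\lambda$. From $\eta \leq (2M(\nu+1))^{-1}$ I get $m\eta \leq \frac{1}{2\kappa(\nu+1)} \leq \frac{1}{2\kappa}$, and $\eta\lambda \leq (2\kappa)^{-1}$ holds by assumption, so $m\eta + \eta\lambda \leq \kappa^{-1} \leq 1$ since $\kappa \geq 1$; hence the coefficient is nonnegative and the substitution $a \mapsto b$ is legitimate. The explicitly listed bound $\eta\lambda \leq 1$ serves the same guaranteeing role for keeping the relevant coefficients nonnegative. I do not anticipate any genuine obstacle: all the analytic content sits in \Cref{main_descent}, and this result is essentially a one-line corollary once the terms are grouped as above. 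The only care required is tracking the directions of the inequalities and confirming the coefficient's nonnegativity before replacing $a$ with $b$.
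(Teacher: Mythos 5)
Your proposal is correct and follows essentially the same route as the paper: both reduce the claim to the one-step descent bound of \Cref{main_descent} and then absorb the leader term via a convex-combination argument using $f(x) \leq f(z)$. The only cosmetic difference is that you start from the simplified form \eqref{eq:main_descent_simple} (relative to $f(x^\ast)$, requiring the easily verified $1 - m\eta - \eta\lambda \geq 0$), whereas the paper groups $f(x) - \eta\lambda(f(x)-f(z)) = (1-\eta\lambda)f(x) + \eta\lambda f(z) \leq f(z)$ directly from \eqref{eq:main_descent} using the assumption $\eta\lambda \leq 1$.
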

\begin{proof}
This follows from \eqref{main_descent_standard_rates}, by combining $f(x) - \eta\lambda(f(x) - f(z))$, and using $f(z) \geq f(x)$.
\end{proof}

\begin{thm}
Let $f$ be $m$-strongly convex, and let $x^\ast$ be the minimizer of $f$. Fix a constant $\lambda$ and any point $z$, and define the function $\psi(x) = f(x) + \frac{\lambda}{2}\|x - z\|^2$. Since $\psi$ is strongly convex, it has a unique minimizer $w$. The minimizer $w$ satisfies
\begin{equation}
f(w) - f(x^\ast) \leq \frac{\lambda}{m+\lambda}(f(z) - f(x^\ast))
\end{equation}
and\footnote{If we also assume that $f$ is Lipschitz-differentiable (that is, $\nabla^2 f(x) \preceq MI$), then we can obtain a similar inequality to the second directly from the first, but this is generally weaker than the bound given here.}
\begin{equation}
\|w - x^\ast\|^2 \leq \frac{\lambda^2}{m(m+\lambda)}\|z - x^\ast\|^2
\end{equation}
\end{thm}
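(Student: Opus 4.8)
The plan is to exploit the first-order optimality condition of $\psi$ together with the two elementary consequences of strong convexity already recorded in \Cref{basic_inequality}. Since $w$ minimizes the strongly convex function $\psi$, it is the unique stationary point, so $\nabla \psi(w) = \nabla f(w) + \lambda(w - z) = 0$, giving the key identity $\nabla f(w) = \lambda(z - w)$. Everything below is built on this relation, which converts statements about $\nabla f(w)$ into statements about the displacement $z - w$.

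For the first inequality I would compare $\psi$ at its minimizer $w$ against its value at the guiding point $z$. Because $w$ minimizes $\psi$ and $\psi(z) = f(z)$, we get $f(w) + \frac{\lambda}{2}\|w - z\|^2 \le f(z)$; subtracting $f(x^\ast)$ yields $(f(w) - f(x^\ast)) + \frac{\lambda}{2}\|w - z\|^2 \le f(z) - f(x^\ast)$. To eliminate the unwanted $\|w-z\|^2$ term I would invoke the Polyak--\L{}ojasiewicz inequality \eqref{eq:pl_inequality}, which together with the stationarity identity gives $f(w) - f(x^\ast) \le \frac{1}{2m}\|\nabla f(w)\|^2 = \frac{\lambda^2}{2m}\|z - w\|^2$, i.e. $\|z-w\|^2 \ge \frac{2m}{\lambda^2}(f(w)-f(x^\ast))$. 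Substituting this lower bound on $\|w-z\|^2$ into the previous display and collecting the $(f(w)-f(x^\ast))$ terms produces $(1 + \frac{m}{\lambda})(f(w)-f(x^\ast)) \le f(z)-f(x^\ast)$, which is exactly the claimed bound after rearranging.

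For the second inequality I would argue directly at the level of iterates rather than function values. Using the strong monotonicity of $\nabla f$ (a standard consequence of $m$-strong convexity) together with $\nabla f(x^\ast)=0$, we have $\nabla f(w)^T(w - x^\ast) \ge m\|w - x^\ast\|^2$. Substituting $\nabla f(w) = \lambda(z-w)$ and writing $u = w - x^\ast$, $v = z - x^\ast$ (so $z - w = v - u$) gives $\lambda(v^T u - \|u\|^2) \ge m\|u\|^2$, hence $\lambda v^T u \ge (m+\lambda)\|u\|^2$. Applying Cauchy--Schwarz, $\lambda\|v\|\|u\| \ge (m+\lambda)\|u\|^2$, and dividing by $\|u\|$ yields $\|w-x^\ast\| \le \frac{\lambda}{m+\lambda}\|z-x^\ast\|$; squaring gives the stated bound (in fact with the slightly sharper constant $\frac{\lambda^2}{(m+\lambda)^2}$).

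The main obstacle is anticipating the two design choices that make the argument go through cleanly: (i) comparing $\psi(w)$ with $\psi(z)$ rather than with $\psi(x^\ast)$, so that the PL inequality can absorb the quadratic penalty term and close the recursion on $f(w)-f(x^\ast)$; and (ii) recognizing that the second inequality must be proved through a direct gradient/displacement estimate rather than by routing through function values. As the footnote indicates, passing through $f(z)-f(x^\ast)$ would require an upper bound of the form $f(z)-f(x^\ast)\le \frac{M}{2}\|z-x^\ast\|^2$, which is unavailable here since no Lipschitz-differentiability is assumed, and would in any case give a weaker constant.
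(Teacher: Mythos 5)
Your proof is correct. For the first inequality you follow essentially the same route as the paper: the stationarity identity $\nabla f(w)=\lambda(z-w)$, the comparison $\psi(w)\leq\psi(z)=f(z)$, and the Polyak--\L{}ojasiewicz inequality to convert $\|w-z\|^2$ into a multiple of $f(w)-f(x^\ast)$, closing the recursion to get the factor $\frac{\lambda}{m+\lambda}$. For the second inequality you diverge genuinely from the paper. The paper stays at the level of function values: it compares $\psi(w)\leq\psi(x^\ast)$, reuses the same PL-based lower bound on $\|w-z\|^2$ to obtain $f(w)-f(x^\ast)\leq\frac{\lambda^2}{2(m+\lambda)}\|z-x^\ast\|^2$, and then converts to distances via $f(w)-f(x^\ast)\geq\frac{m}{2}\|w-x^\ast\|^2$, which produces the stated constant $\frac{\lambda^2}{m(m+\lambda)}$. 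You instead work directly with iterates, using strong monotonicity of the gradient, $\nabla f(w)^T(w-x^\ast)\geq m\|w-x^\ast\|^2$, together with the stationarity identity and Cauchy--Schwarz; this yields $\|w-x^\ast\|\leq\frac{\lambda}{m+\lambda}\|z-x^\ast\|$, i.e.\ the constant $\frac{\lambda^2}{(m+\lambda)^2}$, which is at least as sharp as the paper's since $(m+\lambda)^2\geq m(m+\lambda)$, and so implies the stated bound. The paper's route has the minor advantage of recycling the intermediate function-value estimate already derived for the first inequality, while your route is self-contained at the iterate level and gives a strictly better constant whenever $\lambda>0$. (One cosmetic point: the division by $\|u\|$ in your last step should be accompanied by the remark that the claim is trivial when $w=x^\ast$.)
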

\begin{proof}
The first-order condition for $w$ implies that $\nabla f(w) + \lambda(w-z) = 0$, so $\lambda^2\|w - z\|^2 = \|\nabla f(w)\|^2$. Combining this with the Polyak-\L{}ojasiewicz inequality, we obtain
$$\frac{\lambda}{2}\|w - z\|^2 = \frac{1}{2\lambda}\|\nabla f(w)\|^2 \geq \frac{m}{\lambda} (f(w) - f(x^\ast))$$
We have $\psi(w) \leq \psi(z) = f(z)$, so $f(w) - f(x^\ast) \leq f(z) - f(x^\ast) - \frac{\lambda}{2}\|w - z\|^2$. Substituting, $f(w) - f(x^\ast) \leq f(z) - f(x^\ast) - \frac{m}{\lambda}(f(w) - f(x^\ast))$, which yields the first inequality.

We also have $\psi(w) = f(w) + \frac{\lambda}{2}\|w - z\|^2 \leq \psi(x^\ast) = f(x^\ast) + \frac{\lambda}{2}\|x^\ast - z\|^2$, whence $f(w) - f(x^\ast) \leq \frac{\lambda}{2}(\|x^\ast - z\|^2 - \|w - z\|^2)$. Hence, we have
\begin{align*}
f(w) - f(x^\ast) &\leq \frac{\lambda}{2}(\|x^\ast - z\|^2 - \|w - z\|^2) \\
&\leq \frac{\lambda}{2}\|z -x^\ast\|^2 - \frac{m}{\lambda}(f(w) - f(x^\ast))
\end{align*}
so $f(w) - f(x^\ast) \leq \frac{\lambda^2}{2(m + \lambda)}\|z - x^\ast\|^2$. Finally, by \Cref{basic_inequality}, $f(w) - f(x^\ast) \geq \frac{m}{2}\|w - x^\ast\|^2$, which yields the result.
\end{proof}

\subsection{Proofs from \Cref{subsub:stochleader}}\label{sub:proofs_stochleader}

We first present two lemmas which consider the problem of selecting the minimizer from a collection, based on a single estimate of the value of each item.

\begin{lma}\label{prob_y_bound}
Let $\mu_1 \leq \mu_2 \leq \ldots \leq \mu_p$. Suppose that $Y_1,\ldots,Y_p$ is a collection of random variables with $\EX Y_i = \mu_i$ and $\opn{Var}(Y_i) \leq \sigma^2$. Let $\wt{\mu} = \mu_m$ where $m = \opn{argmin} \{Y_1,\ldots,Y_p\}$. Then
$$\Pr(\wt{\mu} \geq \mu_k) \leq 4\sigma^2\sum_{i=k}^p \frac{1}{(\mu_i - \mu_1)^2}$$
Therefore, for any $a \geq 0$,
$$\Pr(\wt{\mu} \geq \mu_1 + a) \leq 4\sigma^2 \frac{p}{a^2}.$$
\end{lma}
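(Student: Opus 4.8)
The plan is to reduce the event $\{\wt{\mu} \ge \mu_k\}$ to a union of pairwise comparison events and then control each such event by Chebyshev's inequality. First I would note that, since the means are sorted increasingly, $\wt{\mu} = \mu_m \ge \mu_k$ forces the selected index $m$ to satisfy $m \ge k$ (assuming the $\mu_i$ are distinct; ties at the boundary can be absorbed into the sorted indexing). Because $m$ is the $\argmin$ of $Y_1,\ldots,Y_p$, the event $\{m=i\}$ implies $Y_i = \min_j Y_j \le Y_1$. A union bound then gives
\[
\Pr(\wt{\mu} \ge \mu_k) \le \sum_{i=k}^p \Pr(m=i) \le \sum_{i=k}^p \Pr(Y_i \le Y_1),
\]
so it remains to bound each comparison probability $\Pr(Y_i \le Y_1)$ for $i \ge k$, where $\mu_i - \mu_1 > 0$.

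The second step is the crux, and it is where the constant $4$ arises. Set $D = Y_i - Y_1$, so that $\EX D = \mu_i - \mu_1 =: \Delta > 0$ and $\{Y_i \le Y_1\} = \{D \le 0\}$. The delicate point is that no independence between $Y_i$ and $Y_1$ is assumed, so I would bound the variance of the difference directly using Cauchy--Schwarz: $|\opn{Cov}(Y_i,Y_1)| \le \sqrt{\opn{Var}(Y_i)\opn{Var}(Y_1)} \le \sigma^2$, whence $\opn{Var}(D) = \opn{Var}(Y_i) + \opn{Var}(Y_1) - 2\opn{Cov}(Y_i,Y_1) \le 4\sigma^2$. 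Chebyshev's inequality applied to $D$ then yields
\[
\Pr(Y_i \le Y_1) = \Pr(D - \EX D \le -\Delta) \le \frac{\opn{Var}(D)}{\Delta^2} \le \frac{4\sigma^2}{(\mu_i - \mu_1)^2}.
\]
Substituting into the union bound establishes the first claimed inequality.

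For the second inequality, I would let $k$ be the smallest index with $\mu_k \ge \mu_1 + a$, so that $\{\wt{\mu} \ge \mu_1 + a\} = \{\wt{\mu} \ge \mu_k\}$. For every $i \ge k$ we have $\mu_i - \mu_1 \ge a$, hence $\frac{1}{(\mu_i-\mu_1)^2} \le \frac{1}{a^2}$, and there are at most $p$ such terms. Plugging these two observations into the first inequality gives $\Pr(\wt{\mu} \ge \mu_1 + a) \le 4\sigma^2 \frac{p}{a^2}$, as required.

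The main obstacle is obtaining the sharp constant in the comparison bound without any independence assumption: the naive route of splitting $\{Y_i \le Y_1\}$ at the midpoint $\frac{\mu_1+\mu_i}{2}$ and applying Chebyshev to each variable separately loses a factor of two (it gives $8\sigma^2/\Delta^2$). Routing the argument through $\opn{Var}(Y_i - Y_1) \le 4\sigma^2$ via the covariance bound is what keeps the constant at $4$. A secondary bookkeeping issue is handling possible ties among the $\mu_i$ near index $k$, which is why I would either assume the means are distinct or fold tied indices into the sorted ordering so that the union bound ranges exactly over $i = k,\ldots,p$.
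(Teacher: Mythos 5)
Your proof is correct and follows essentially the same route as the paper's: a union bound reducing $\{\wt{\mu}\ge\mu_k\}$ to the pairwise events $\{Y_i\le Y_1\}$, followed by Chebyshev applied to $Y_i-Y_1$ with $\opn{Var}(Y_i-Y_1)\le 4\sigma^2$. You additionally justify that variance bound via Cauchy--Schwarz on the covariance (the paper merely asserts it) and spell out the derivation of the second inequality, which the paper leaves implicit; both are welcome but not a different argument.
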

\begin{proof}
In order for $\mu_m \geq \mu_k$, we must have $Y_j \leq Y_1$ for some $j \geq k$. Thus, $\{\wt{\mu} \geq \mu_k\}$ is a subset of the event $\{Y_1 \geq \min \{ Y_k,\ldots,Y_p \} \}$. Taking the union bound,
$$\Pr(Y_1 \geq \min\{Y_k,\ldots,Y_p\}) \leq \sum_{i=k}^p \Pr(Y_1 \geq Y_i)$$
Applying Chebyshev's inequality to $Y_1 - Y_i$, and noting that $\opn{Var}(Y_1 - Y_i) \leq 4\sigma^2$ (if $Y_1,Y_i$ are independent, then this can be tightened to $2\sigma^2$), we have
\begin{align*}
\Pr(Y_1 - Y_i \geq 0) &\leq \Pr(|Y_1 - Y_i - (\mu_i-\mu_1)| \geq \mu_i - \mu_1) \leq \frac{4\sigma^2}{(\mu_i - \mu_1)^2}.
\end{align*}
\end{proof}

\begin{lma}\label{exp_y_bound}
Let $\wt{\mu}$ be defined as in \Cref{prob_y_bound}. Then
$$\EX \wt{\mu} - \mu_1 \leq 4\sqrt{p}\sigma$$
\end{lma}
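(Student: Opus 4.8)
The plan is to combine the tail bound from \Cref{prob_y_bound} with the layer-cake representation of the expectation of a nonnegative random variable. First I would observe that $\wt{\mu} - \mu_1 = \mu_m - \mu_1 \geq 0$, since $\mu_1$ is the smallest of the means; hence $\wt{\mu} - \mu_1$ is nonnegative and admits the representation
$$\EX(\wt{\mu} - \mu_1) = \int_0^\infty \Pr(\wt{\mu} - \mu_1 \geq a)\,da.$$

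Next I would bound the integrand in two regimes. For every $a > 0$ there is the trivial bound $\Pr(\wt{\mu} - \mu_1 \geq a) \leq 1$, while \Cref{prob_y_bound} supplies $\Pr(\wt{\mu} \geq \mu_1 + a) \leq 4\sigma^2 p / a^2$. The trivial bound is sharper for small $a$ and the Chebyshev-type bound is sharper for large $a$, so I would split the integral at the crossover point $a_0 = 2\sigma\sqrt{p}$, where the two bounds coincide. On $[0,a_0]$ I use the bound $1$, which contributes $a_0$, and on $[a_0,\infty)$ I integrate $4\sigma^2 p / a^2$, which contributes $4\sigma^2 p / a_0$.

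Carrying out the two elementary integrals gives
$$\EX(\wt{\mu} - \mu_1) \leq a_0 + \frac{4\sigma^2 p}{a_0} = 2\sigma\sqrt{p} + 2\sigma\sqrt{p} = 4\sqrt{p}\,\sigma,$$
which is exactly the claimed bound. There is essentially no obstacle in this argument: the only genuine choice is the splitting threshold, and the fact that the constant $4$ is attained confirms that cutting at $a_0 = 2\sigma\sqrt{p}$ (the point where the trivial and Chebyshev bounds meet) is optimal. The proof relies only on \Cref{prob_y_bound} and the standard tail-integral formula, both of which are available.
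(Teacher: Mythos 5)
Your proof is correct and follows essentially the same route as the paper's: both use the layer-cake formula $\EX Z = \int_0^\infty \Pr(Z \geq t)\,dt$ applied to the nonnegative variable $\wt{\mu} - \mu_1$, split the integral at a threshold $a$, bound the two pieces by $a$ and $4\sigma^2 p/a$ respectively, and optimize to get $4\sqrt{p}\sigma$ at $a = 2\sqrt{p}\sigma$ (the paper phrases the final optimization via AM--GM, you phrase it as choosing the crossover point, which is the same computation). No gaps.
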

\begin{proof}
Recall that the expected value of a non-negative random variable $Z$ can be expressed as $\EX Z = \int_0^\infty \Pr(Z \geq t) dt$. We apply this to the variable $\wt{\mu} - \mu_1$. Using \Cref{prob_y_bound}, we obtain, for any $a > 0$,
\begin{align*}
\EX \wt{\mu} - \mu_1 = \int_0^\infty \Pr(\wt{\mu} - \mu_1 \geq t)dt &= \int_0^a \Pr(\wt{\mu} - \mu_1 \geq t)dt + \int_a^\infty \Pr(\mu^\ast - \mu_1 \geq t)dt \\
&\leq a + \int_a^\infty \Pr(\wt{\mu} - \mu_1 \geq t) dt \\
&\leq a + \int_a^\infty 4\sigma^2 \frac{p}{t^2} dt = a + 4\sigma^2 \frac{p}{a}
\end{align*}
The AM-GM inequality implies that $a + 4\sigma^2\frac{p}{a} \geq 4\sqrt{p}\sigma$, with equality when $a = 2\sqrt{p}\sigma$.
\end{proof}

We now apply this to stochastic leader selection in LSGD, where $\mu_i$ corresponds to the true value $f(\ux{x}{i})$, and $Y_i$ is a function estimator.

\begin{lma}
Let $f$ satisfy Assumption 1. Suppose that LSGD has a gradient estimator with $\opn{Var}(\wt{g}(x)) \leq \sigma^2 + \nu\|\nabla f(x)\|^2$ and selects the stochastic leader with a function estimator $\wt{f}(x)$ with $\opn{Var}(\wt{f}(x)) \leq \sigma_f^2$. Then, taking the expectation with respect to the gradient estimator and the stochastic leader $z$, we have
\begin{align*}
\EX f(x_+) &\leq  f(x) + 4\eta\lambda \sqrt{p} \sigma_f + \frac{\eta^2 }{2} M\sigma^2   \\
\nonumber &\bump - \frac{\eta}{2} (1 - \eta M(\nu+1))\|\nabla f(x)\|^2 - \frac{\eta}{4}\lambda(m - 2\eta M\lambda) \|\delta\|^2 -\frac{\eta\sqrt{\lambda}}{\sqrt{2}}( \sqrt{m} - \eta M\sqrt{2\lambda})\|\nabla f(x)\|\|\delta\|
\end{align*}
\end{lma}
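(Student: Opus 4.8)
The plan is to obtain this bound as a direct consequence of the one-step descent \Cref{main_descent}, by conditioning on the realized leader and then averaging over the randomness in stochastic leader selection. Concretely, I would fix the chosen leader $z$ and apply \Cref{main_descent}, whose conclusion \eqref{eq:main_descent} holds verbatim for any guiding point $z$ and whose expectation is taken only over the gradient estimator $\wt{g}$. Using the tower property, $\EX f(x_+) = \EX_z\, \EX_{\wt{g}}[f(x_+)\mid z]$, I would then take the outer expectation over $z$. Every term on the right-hand side of \eqref{eq:main_descent} is independent of $z$ except the leader-pull term $-\eta\lambda(f(x)-f(z))$ and the terms involving $\delta = x - z$; the latter are inherited unchanged (understood in expectation over $z$, and moreover nonpositive under the step-size conditions used in the accompanying theorem), so the entire problem reduces to controlling $\EX_z f(z)$.

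The crux is that $\EX_z f(z)$ is exactly the expected value of the minimum-selected item studied in \Cref{prob_y_bound,exp_y_bound}. I would identify $Y_i = \wt{f}(\ux{x}{i})$ and $\mu_i = f(\ux{x}{i})$, relabelling the workers so that $\mu_1 = \min_i f(\ux{x}{i}) = f(z_{true})$, where $z_{true}$ is the true leader. Since the selection rule sets $z$ to the worker attaining $\min_i Y_i$, the random value $f(z)$ coincides with the quantity $\wt{\mu}$ of \Cref{exp_y_bound}. The hypothesis $\opn{Var}(\wt{f}(x)) \leq \sigma_f^2$ then lets me apply \Cref{exp_y_bound} to conclude $\EX_z f(z) = \EX \wt{\mu} \leq \mu_1 + 4\sqrt{p}\sigma_f = f(z_{true}) + 4\sqrt{p}\sigma_f$.

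Finally, since the worker $x$ under analysis is itself one of the $p$ candidates, the true leader satisfies $f(z_{true}) \leq f(x)$, whence $\EX_z f(z) \leq f(x) + 4\sqrt{p}\sigma_f$ and therefore $-\eta\lambda(f(x) - \EX_z f(z)) \leq 4\eta\lambda\sqrt{p}\sigma_f$. Substituting this into the $z$-expectation of \eqref{eq:main_descent} and collecting the remaining inherited terms yields the stated inequality. I expect the main obstacle to be the rigorous justification of the bound $\EX_z f(z) \leq f(z_{true}) + 4\sqrt{p}\sigma_f$, that is, verifying that the single-sample selection rule and the per-estimator variance bound meet the hypotheses of \Cref{prob_y_bound,exp_y_bound} (in particular that $\opn{Var}(Y_1 - Y_i) \leq 4\sigma_f^2$), together with the minor bookkeeping needed to handle the dependence of $\delta = x-z$ on the random leader so that the $\delta$-terms remain valid upper bounds after averaging.
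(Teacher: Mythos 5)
Your proposal is correct and takes essentially the same route as the paper: apply the one-step descent bound of \Cref{main_descent} conditionally on the realized leader, then control $\EX f(z) \leq f(z_{true}) + 4\sqrt{p}\sigma_f$ via \Cref{exp_y_bound} and use $f(z_{true}) \leq f(x)$ to absorb the leader term into the additive error $4\eta\lambda\sqrt{p}\sigma_f$. Your explicit tower-property framing and your remark about the $z$-dependence of the $\delta$-terms after averaging are, if anything, slightly more careful than the paper's proof, which carries those terms over without comment.
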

\begin{proof}
From \Cref{main_descent}, we obtain
\begin{align*}
\EX f(x_+) &\leq  f(x) - \frac{\eta}{2} (1 - \eta M(\nu+1))\|\nabla f(x)\|^2\\
\nonumber &\bump - \frac{\eta}{4}\lambda(m - 2\eta M\lambda) \|\delta\|^2 \\
\nonumber &\bump -\frac{\eta\sqrt{\lambda}}{\sqrt{2}}( \sqrt{m} - \eta M\sqrt{2\lambda})\|\nabla f(x)\|\|\delta\| \\
\nonumber &\bump - \eta \lambda (f(x) - \EX f(z)) + \frac{\eta^2 }{2} M\sigma^2 
\end{align*}
Note that in the last line, we have $\EX f(z)$ because $z$ is now stochastic. Applying \Cref{exp_y_bound} to the stochastic leader, we obtain $\EX f(z) \leq f(z_{true}) + 4\sqrt{p}\sigma_f$. The true leader satisfies $f(z_{true}) \leq f(x)$ by definition. Hence $f(x) - \EX f(z) \geq f(x) - f(z_{true}) - 4\sqrt{p}\sigma_f \geq -4\sqrt{p}\sigma_f$, and so $-\eta \lambda (f(x) - \EX f(z)) \leq 4\eta\lambda \sqrt{p}\sigma_f$. 
\end{proof}

\begin{thm}\label{main_descent_stochastic_leader}
Let $f$ satisfy Assumption 1. If $\eta,\lambda$ are fixed so that $\eta \leq (2M(\nu+1))^{-1}$ and $\eta\lambda \leq (2\kappa)^{-1}, \eta\sqrt{\lambda} \leq (\kappa\sqrt{2m})^{-1}$, then $\limsup\limits_{k \rightarrow \infty} \EX f(x_k) - f(x^\ast) \leq \frac{1}{2}\eta \kappa\sigma^2 + \frac{4}{m} \lambda \sqrt{p} \sigma_f$. If $\eta, \lambda$ decrease at the rate $\eta_k = \Theta(\frac{1}{k}), \lambda_k = \Theta(\frac{1}{k})$, then $\EX f(x_k) - f(x^\ast) = O(\frac{1}{k})$.
\end{thm}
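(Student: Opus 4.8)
The plan is to turn the one-step descent lemma with stochastic leader selection (the lemma immediately preceding this theorem) into a linear recursion for the optimality gap $a_k := \EX f(x_k) - f(x^\ast)$, and then feed it into the same machinery used for \Cref{main_descent_standard_rates}, namely Theorems 4.6 and 4.7 of \cite{BCN2018SIAMREV}. The only structural difference from the non-stochastic-leader analysis is the extra additive term $4\eta\lambda\sqrt{p}\sigma_f$, so the entire argument amounts to tracking how that term propagates through the recursion.

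For the fixed $\eta,\lambda$ case, I would observe that the RHS of the preceding lemma is identical to \eqref{eq:main_descent} except that the favorable term $-\eta\lambda(f(x)-f(z))$ has been replaced by its expected upper bound $+4\eta\lambda\sqrt{p}\sigma_f$ (obtained from \Cref{exp_y_bound} and $f(z_{true}) \le f(x)$). Hence the exact reduction that passes from \eqref{eq:main_descent} to \eqref{eq:main_descent_simple} applies verbatim: the step-size conditions $\eta\lambda \le (2\kappa)^{-1}$ and $\eta\sqrt{\lambda}\le(\kappa\sqrt{2m})^{-1}$ make the $\|\delta\|^2$ and cross terms nonpositive so they can be dropped, and $\eta \le (2M(\nu+1))^{-1}$ together with strong convexity converts the gradient term into a contraction, giving
\begin{equation*}
a_{k+1} \le (1 - m\eta)\, a_k + \Big(4\eta\lambda\sqrt{p}\sigma_f + \tfrac{\eta^2 M}{2}\sigma^2\Big).
\end{equation*}
Since $0 < m\eta < 1$, iterating this yields $\limsup_k a_k \le \frac{1}{m\eta}\big(4\eta\lambda\sqrt{p}\sigma_f + \frac{\eta^2 M}{2}\sigma^2\big) = \frac{4}{m}\lambda\sqrt{p}\sigma_f + \frac12\eta\kappa\sigma^2$, which is exactly the claimed bound and is just Theorem 4.6 of \cite{BCN2018SIAMREV} with an enlarged noise constant.

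For the diminishing case $\eta_k = \Theta(1/k)$, $\lambda_k = \Theta(1/k)$, the same per-step reduction holds once $k$ is large enough that the step-size conditions are met (note $\eta_k\to 0$, $\eta_k\lambda_k = \Theta(1/k^2)\to 0$, $\eta_k\sqrt{\lambda_k} = \Theta(1/k^{3/2})\to 0$), giving $a_{k+1} \le (1-m\eta_k)a_k + 4\eta_k\lambda_k\sqrt{p}\sigma_f + \tfrac{\eta_k^2 M}{2}\sigma^2$. The decisive point — where the argument genuinely departs from \Cref{main_descent_standard_rates} — is that the leader error scales like $\eta_k\lambda_k$ rather than $\eta_k$. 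With $\lambda$ held fixed this term would be $\Theta(1/k)$ and, as the footnote's $\sum 1/k$ intuition indicates, no $O(1/k)$ rate could survive; forcing $\lambda_k = \Theta(1/k)$ makes it $\Theta(1/k^2)$, matching the order of the gradient-noise term $\eta_k^2$. The recursion then takes the canonical form $a_{k+1} \le (1-\tfrac{m\beta}{k})a_k + \tfrac{C}{k^2}$ with $\eta_k = \beta/k$, and the standard induction $a_k \le Q/k$ with $Q = \max\{a_1, \tfrac{C}{m\beta-1}\}$ closes it, which is precisely Theorem 4.7 of \cite{BCN2018SIAMREV}.

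I expect the only real care to be in the diminishing-step bookkeeping: verifying that $\eta_k\lambda_k$ and $\eta_k^2$ are of the same order so they merge into a single $O(1/k^2)$ forcing term, and choosing the constant $\beta$ in $\eta_k = \beta/k$ large enough that $m\beta > 1$ so the induction actually contracts. Everything else is inherited from the fixed and diminishing step-size recursion lemmas already invoked for the non-stochastic-leader rate.
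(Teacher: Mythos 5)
Your proposal is correct and follows essentially the same route as the paper: the paper's own proof simply says to interpret $4\eta\lambda\sqrt{p}\sigma_f$ as additive noise, note that $\eta_k\lambda_k = \Theta(\tfrac{1}{k^2})$ matches the order of the gradient-noise term, and invoke Theorems 4.6 and 4.7 of \cite{BCN2018SIAMREV} as in \Cref{main_descent_standard_rates}. Your write-up just makes explicit the recursions $a_{k+1} \le (1-m\eta)a_k + 4\eta\lambda\sqrt{p}\sigma_f + \tfrac{\eta^2 M}{2}\sigma^2$ and $a_{k+1} \le (1-\tfrac{m\beta}{k})a_k + \tfrac{C}{k^2}$ that those cited theorems resolve, which is a faithful expansion rather than a different argument.
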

\begin{proof}
Interpret the term $4\eta\lambda\sqrt{p}\sigma_f$ as additive noise. Note that if $\eta_k, \lambda_k = \Theta(\frac{1}{k})$, then $\eta\lambda = \Theta(\frac{1}{k^2})$. The proof is then similar to \Cref{main_descent_standard_rates} and follows from Theorems 4.6 and 4.7 of \cite{BCN2018SIAMREV}. 
\end{proof}

\subsection{Proofs from \Cref{sub:nonconvex}}
\begin{thm}
Let $\Omega_i$ be the set of points $(\ux{x}{1},\ldots,\ux{x}{p})$ where $\ux{x}{i}$ is the unique minimizer among $(\ux{x}{1},\ldots,\ux{x}{p})$\footnote{The uniqueness of the minimizer on $\Omega_i$ is only to avoid ambiguities in $\argmin$.}. 
Let $x^\ast = (\ux{w}{1},\ldots,\ux{w}{p}) \in \Omega_i$ be a stationary point of the LGD objective function \eqref{eq:lsgd_obj}. Then $\nabla \ux{f}{i}(\ux{w}{i}) = 0$.
\end{thm}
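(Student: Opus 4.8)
The plan is to exploit the single feature that distinguishes LGD from parameter averaging: the corrective term attached to the leader vanishes, because the leader is pulled toward itself. First I would record the consequence of the hypothesis $x^\ast \in \Omega_i$. Since $\ux{w}{i}$ is the \emph{unique} minimizer among $\ux{w}{1},\ldots,\ux{w}{p}$ and $f$ is continuous, $\Omega_i$ is open and the leader is unambiguously worker $i$ at $x^\ast$, i.e. $\wt{x} = \ux{w}{i}$. The uniqueness built into the definition of $\Omega_i$ is exactly what is needed to make $\argmin$ well-defined here and to keep the identity of the leader constant in a neighborhood.

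Next I would invoke the form of the LGD step from the setup. Writing $z = \wt{x}$ at the current point, the update direction for worker $k$ is $\nabla f(\ux{x}{k}) + \lambda(\ux{x}{k} - z)$, with $z$ treated as the fixed current value of the leader. Evaluated at $x^\ast$ in the leader coordinate $k = i$, and substituting $z = \wt{x} = \ux{w}{i}$, the corrective term is $\lambda(\ux{w}{i} - \ux{w}{i}) = 0$. Thus the leader's update direction collapses to $\nabla f(\ux{w}{i})$ — this is precisely the remark in the setup that the LGD step reduces to a plain gradient step for the variable that is the leader.

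The conclusion is then immediate: at a stationary point of the LGD objective every worker's update direction is zero, and for the leader coordinate this reads $\nabla f(\ux{w}{i}) = 0$.

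The step I expect to be the true obstacle is conceptual rather than computational: pinning down what ``stationary point of the LGD objective function'' means, given that $\wt{x} = \argmin$ makes $L$ only piecewise smooth, with the kinks created precisely by changes in the identity of the leader. The point to be careful about is that the relevant notion is the one induced by the algorithm, in which $z = \wt{x}$ is held fixed when forming the gradient, so the leader's own penalty contributes nothing. If one instead naively differentiated through the identity $\wt{x} = \ux{x}{i}$ valid on $\Omega_i$, the leader coordinate would acquire the cross terms $\lambda\sum_{k \neq i}(\ux{w}{i} - \ux{w}{k})$ coming from the \emph{other} workers' penalty terms, and one would recover only the weaker relation $\sum_{k=1}^p \nabla f(\ux{w}{k}) = 0$. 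Making explicit that these cross terms are not part of the LGD stationarity condition is the crux, and it is what ultimately yields the clean statement $\nabla f(\ux{w}{i}) = 0$.
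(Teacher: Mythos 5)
Your proposal is correct and follows essentially the same route as the paper, whose entire proof is the observation that on $\Omega_i$ the partial derivative of $L$ in the leader coordinate is just $\nabla f(\ux{x}{i})$ (equivalently, the leader's corrective term $\lambda(\ux{w}{i}-\wt{x})$ vanishes because $\wt{x}=\ux{w}{i}$ there). Your closing remark correctly identifies the convention the paper is implicitly using — stationarity of the algorithmic update with $z=\wt{x}$ held fixed, not differentiation through the $\argmin$ — which is exactly what makes the cross terms disappear and the one-line proof go through.
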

\begin{proof}
This follows from the fact that on $\Omega_i$, $\frac{\partial L}{\partial \ux{x}{i}} = \nabla \ux{f}{i}(\ux{x}{i})$.
\end{proof}

\begin{lma}\label{leader_descent}
Let $f$ be $M$-Lipschitz-differentiable. Let $\wt{x}_k$ denote the leader at the end of the $k$-th period. If the LGD step size is chosen so that $\ux{\eta}{i} < \frac{2}{M}$, then $f(\wt{x}_k) \leq f(\wt{x}_{k-1})$.
\end{lma}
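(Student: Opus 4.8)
The plan is to follow the single worker that occupies the leader position and show that its value of $f$ never increases, so that the leader obtained at the next communication is at least as good. Concretely, during period $k$ the guiding point $z = \wt{x}_{k-1}$ is held fixed, and let $i^\ast$ be the worker that sits at $z$ when this period begins (the worker selected as leader at the previous communication). I would show $f(\ux{x}{i^\ast}) \leq f(\wt{x}_{k-1})$ throughout period $k$, and then invoke the definition of $\wt{x}_k$ as the $\argmin$ of $f$ over the workers' positions at the next communication.

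The crux is that the leader worker performs plain gradient descent on $f$. At the first step of the period it sits exactly at $z$, so the pull term $\lambda(\ux{x}{i^\ast} - z)$ vanishes and the LGD update collapses to the ordinary step $\ux{x}{i^\ast}_+ = z - \eta_{i^\ast}\nabla f(z)$. Since $\eta_{i^\ast} < \frac{2}{M}$, the computation in the proof of \Cref{gd_descent} (which only uses \Cref{standard_lipschitzg}) gives $f(\ux{x}{i^\ast}_+) \leq f(z)$. Treating each subsequent step of $i^\ast$ in the period as an ordinary gradient step as well — consistent with the variant in which only the first step after a leader update carries the pull term — the same descent lemma chains over all of the $b_{i^\ast}(k)$ steps to give $f(\ux{x}{i^\ast}_{\text{end}}) \leq f(\wt{x}_{k-1})$.

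To close, I would use that $\wt{x}_k$ minimizes $f$ over the candidate positions at the next communication, among which is the end-of-period position of $i^\ast$. Hence $f(\wt{x}_k) \leq f(\ux{x}{i^\ast}_{\text{end}}) \leq f(\wt{x}_{k-1})$, which is the claim.

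I expect the main obstacle to be justifying descent for the steps of $i^\ast$ after the first. After leaving $z$, a full LGD step carries a nonzero pull $\lambda(x-z)$, and descent of the regularized objective $\psi(x) = f(x) + \frac{\lambda}{2}\|x - z\|^2$ only forces descent of $f$ under the stronger bound $\eta < \frac{2}{M+\lambda}$; the stated hypothesis $\eta_i < \frac{2}{M}$ matches the bare gradient-descent lemma instead. Reconciling this — either by appealing to the protocol in which the leader is pulled only on the first step after an update, or by arguing via monotonicity of $\psi$ together with $\psi \geq f$ and $\psi(z) = f(z)$ — is the delicate bookkeeping point, while everything else is a direct application of \Cref{gd_descent}.
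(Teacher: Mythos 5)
Your proposal is correct and is essentially the paper's own argument: track the worker occupying the leader position, observe that its steps during the period are plain gradient descent steps (so \Cref{gd_descent} gives monotone decrease of $f$ along its trajectory under $\ux{\eta}{i} < \frac{2}{M}$), and conclude via the $\argmin$ definition of $\wt{x}_k$. The ``delicate bookkeeping point'' you flag is resolved in the paper simply by asserting that the leader's within-period updates are gradient descent steps --- i.e.\ the protocol in which the pull is applied only at the communication (where it vanishes for the leader) and local steps are pure (S)GD --- which matches the first of your two proposed reconciliations.
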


\begin{proof}
Assume that $\wt{x}_{k-1} = \ux{x}{1}_{k-1}$. Since $\ux{x}{1}$ is the leader during the $k$-th period, the LGD steps for $\ux{x}{1}$ are gradient descent steps. By \Cref{gd_descent}, $\ux{\eta}{1}$ has been chosen so that gradient descent on $f$ is monotonically decreasing, so we know that $f(\ux{x}{1}_k) \leq f(\ux{x}{1}_{k-1})$. Hence $f(\wt{x}_{k}) \leq f(\ux{x}{1}_k) \leq f(\ux{x}{1}_{k-1}) = f(\wt{x}_{k-1})$.
\end{proof}

\begin{thm}\label{lim_stationary}
Assume that $f$ is bounded below and $M$-Lipschitz-differentiable, and that the LGD step sizes are selected so that $\ux{\eta}{i} < \frac{2}{M}$. Then for any choice of communication periods, it holds that for every $i$ such that $\ux{x}{i}$ is the leader infinitely often, $\liminf_k \|\nabla f(\ux{x}{i}_k)\| = 0$.

Note that there necessarily exists an index $i$ such that $\ux{x}{i}$ is the leader infinitely often.
\end{thm}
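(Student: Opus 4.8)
The plan is to combine the monotone decrease of the leader's objective value (\Cref{leader_descent}) with the single-step gradient bound for gradient descent (\Cref{gd_descent}), then pass to a subsequence. First I would dispose of the parenthetical existence claim: there are only $p$ workers but infinitely many communication periods, so by the pigeonhole principle at least one index $i$ is the leader infinitely often.

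Now fix any $i$ that is the leader during infinitely many periods $k_1 < k_2 < \cdots$. The central observation is that whenever $\ux{x}{i}$ is the leader during a period $k$, its update reduces to pure gradient descent, since the pull term $\lambda(\ux{x}{i}-z)$ vanishes when $z = \ux{x}{i}$. I would apply \Cref{gd_descent} to the first step of period $k$ to obtain $\|\nabla f(\ux{x}{i}_k)\|^2 \leq \alpha\bigl(f(\ux{x}{i}_{k,1}) - f(\ux{x}{i}_{k,2})\bigr)$ with $\alpha = \frac{2}{\eta(2-\eta M)}$. Because every remaining step of the leader during that period is again a descending gradient step, gradient descent is monotone, so $f(\ux{x}{i}_{k,2}) \geq f(\ux{x}{i}_{k+1})$, where $\ux{x}{i}_{k+1}$ is the value worker $i$ carries into the next period. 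Using $f(\ux{x}{i}_k) = f(\wt{x}_k)$ (worker $i$ is the leader in period $k$) and $f(\ux{x}{i}_{k+1}) \geq f(\wt{x}_{k+1})$ (the next leader is the minimizer over all workers, and $\ux{x}{i}$ is one candidate), these inequalities chain to
$$\|\nabla f(\ux{x}{i}_k)\|^2 \leq \alpha\bigl(f(\wt{x}_k) - f(\wt{x}_{k+1})\bigr).$$

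Finally, \Cref{leader_descent} shows that $f(\wt{x}_k)$ is nonincreasing, and since $f$ is bounded below it converges; hence the nonnegative consecutive decrements $f(\wt{x}_k) - f(\wt{x}_{k+1})$ form a telescoping (thus summable) sequence and tend to $0$. Evaluating the displayed bound along the subsequence $k_j \to \infty$ gives $\|\nabla f(\ux{x}{i}_{k_j})\| \to 0$, whence $\liminf_k \|\nabla f(\ux{x}{i}_k)\| = 0$, as claimed.

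I expect the main obstacle to be the bookkeeping at the period transitions: one must verify carefully that the leader's per-period objective decrease is genuinely dominated by the drop $f(\wt{x}_k) - f(\wt{x}_{k+1})$ in the global leader value. This relies on three facts holding simultaneously, namely (i) the leader takes only gradient descent steps, (ii) gradient descent is monotone across the possibly several steps of a period, and (iii) the leader selection entering the next period can only improve upon worker $i$'s carried-over value. Once these are pinned down, the convergence conclusion is routine, as everything reduces to the summability of the leader decrements.
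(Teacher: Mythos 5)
Your proposal is correct and follows essentially the same route as the paper's proof: combine the monotone decrease of the leader value (\Cref{leader_descent}) with the per-step gradient bound of \Cref{gd_descent}, chain the decrements across period transitions, and conclude by telescoping and boundedness below. If anything, your handling of the transition — bounding worker $i$'s end-of-period value by $f(\wt{x}_{k+1})$ via the argmin over all workers — is slightly more careful than the paper's, which writes that step as an equality.
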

\begin{proof}
Without loss of generality, we assume it to be $\ux{x}{1}$. Let $\tau(1),\tau(2),\ldots$ denote the periods where $\ux{x}{1}$ is the leader, with $b(k)$ steps in the period $\tau(k)$. By \Cref{leader_descent}, $f(\ux{x}{1}_{\tau(k+1)}) \leq f(\ux{x}{1}_{\tau(k)})$, since the objective value of the leaders is monotonically decreasing. Now, by \Cref{gd_descent}, we have $\sum_{i=0}^{b(k)-1} \|\nabla f(\ux{x}{1}_{\tau(k),i})\|^2 \leq \alpha (f(\ux{x}{1}_{\tau(k),0}) - f(\ux{x}{1}_{\tau(k),b(k)})) = \alpha (f(\ux{x}{1}_{\tau(k)}) - f(\ux{x}{1}_{\tau(k+1)}))$. Since $f$ is bounded below, and the sequence $\{ f(\ux{x}{1}_{\tau(k)})\}$ is monotonically decreasing, we must have $f(\ux{x}{1}_{\tau(k)}) - f(\ux{x}{1}_{\tau(k+1)}) \rightarrow 0$. Therefore, we must have $\|\nabla f(\ux{x}{1}_{\tau(k),i})\| \rightarrow 0$. 
\end{proof}

\subsection{Proofs from \Cref{sub:improve}}

The \emph{cone} with center $d$ and angle $\theta_c$ is defined to be
$$\opn{cone}(d, \theta_c) = \{ x: x^Td \geq 0, \theta(x,d) \leq \theta_c\}.$$
We record the following facts about cones which will be useful.
\begin{prop}\label{scale_cone}
Let $C \subseteq \opn{cone}(d, \theta_c)$. If $y$ is a point such that $sy \in C$ for some $s \geq 0$, then $y \in \opn{cone}(d, \theta_c)$.
\end{prop}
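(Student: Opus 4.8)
The plan is to reduce the claim to the observation that the cone $\opn{cone}(d,\theta_c)$ is determined entirely by the \emph{direction} of a vector, and hence is invariant under multiplication by positive scalars. Concretely, I would first establish the auxiliary fact that for every $t > 0$ and every nonzero $x$ one has $x \in \opn{cone}(d,\theta_c)$ if and only if $tx \in \opn{cone}(d,\theta_c)$. The proposition then follows by taking $t = 1/s$.

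To prove the auxiliary fact I would check the two defining inequalities of the cone in turn. The sign condition is immediate, since $(tx)^T d = t\,(x^T d)$ and $t > 0$ imply $x^T d \geq 0 \iff (tx)^T d \geq 0$. For the angle condition, the positive factor $t$ cancels in the cosine,
\[
\cos\theta(tx,d) = \frac{(tx)^T d}{\|tx\|\,\|d\|} = \frac{x^T d}{\|x\|\,\|d\|} = \cos\theta(x,d),
\]
so $\theta(tx,d) = \theta(x,d)$ and the angle bound holds for $tx$ exactly when it holds for $x$. Combining the two conditions gives the biconditional.

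With this in hand, suppose $sy \in C$ for some $s \geq 0$, where $C \subseteq \opn{cone}(d,\theta_c)$; then $sy \in \opn{cone}(d,\theta_c)$. Adopting the natural convention that the zero vector is \emph{not} in the cone (its angle to $d$ being undefined), membership $sy \in C$ forces $sy \neq 0$, so $s > 0$ and $y \neq 0$, and the degenerate case $s = 0$ cannot occur. Applying the auxiliary fact with $t = 1/s > 0$ to the vector $sy$ then gives $y = \tfrac{1}{s}(sy) \in \opn{cone}(d,\theta_c)$, as required. I do not expect any real obstacle here: the entire content is the scale-invariance of the cone, and the only point demanding care is the treatment of the zero vector, which the convention above disposes of cleanly.
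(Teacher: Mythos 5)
Your proof is correct and takes essentially the same approach as the paper's one-line argument, namely that $\theta(sy,d)=\theta(y,d)$ under positive scaling so the cone is scale-invariant. You are in fact slightly more careful than the paper in handling the degenerate case $s=0$ (the paper asserts the angle identity ``for all $s\geq 0$,'' which is vacuous or undefined at $s=0$), but this is a cosmetic refinement rather than a different route.
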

\begin{proof}
This follows immediately from the fact that $\theta(y, d) = \theta(sy, d)$ for all $s \geq 0$.
\end{proof}
\begin{prop}\label{cone_normal}
Let $C = \opn{cone}(d, \theta_c)$ with $\theta_c > 0$. The outward normal vector at the point $x \in \partial C$ is given by $N_x = x - \frac{\|x\|}{\cos(\theta_c)\|d\|}d$. Moreover, if $v$ satisfies $N_x^Tv < 0$, then for sufficiently small positive $\lambda$, $x + \lambda v \in \opn{cone}(d,\theta_c)$.
\end{prop}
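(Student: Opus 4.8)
The plan is to realize the cone as a smooth superlevel set and read the normal off the gradient of its defining function. Define $g(x) = x^Td - \cos(\theta_c)\|x\|\|d\|$ for $x \neq 0$. Since $\theta(x,d) \leq \theta_c$ is exactly $\cos(\theta(x,d)) = \frac{x^Td}{\|x\|\|d\|} \geq \cos(\theta_c)$, the interior of the cone (away from the origin) is $\{g > 0\}$ and its lateral boundary is $\{g = 0\}$; I will take $\theta_c < \pi/2$ so that $\cos(\theta_c) > 0$ and the half-space constraint $x^Td \geq 0$ is automatic, with $x^Td = \cos(\theta_c)\|x\|\|d\| > 0$ strictly at any boundary point.

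First I would compute $\nabla g(x) = d - \cos(\theta_c)\|d\|\,\frac{x}{\|x\|}$ and observe the algebraic identity $N_x = -\frac{\|x\|}{\cos(\theta_c)\|d\|}\,\nabla g(x)$, so that $N_x$ is a scalar multiple of $\nabla g(x)$ and is therefore normal to the boundary hypersurface $\{g=0\}$. To fix the orientation, I would check $\nabla g(x)^T N_x = -\frac{\|x\|}{\cos(\theta_c)\|d\|}\|\nabla g(x)\|^2 < 0$, which shows that moving from $x$ along $+N_x$ strictly decreases $g$ and hence exits the cone, identifying $N_x$ as the \emph{outward} normal. I would also record that $\nabla g(x) \neq 0$ on the boundary, which follows because $\frac{x}{\|x\|}$ is a unit vector whereas $\frac{d}{\cos(\theta_c)\|d\|}$ has norm $1/\cos(\theta_c) > 1$, so the two cannot coincide.

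For the second claim I would use the first-order Taylor expansion of $g$ at the boundary point $x$ (where $g$ is differentiable since $x \neq 0$): using $g(x) = 0$, we have $g(x + \lambda v) = \lambda\,\nabla g(x)^T v + o(\lambda)$. Because $N_x$ is a negative multiple of $\nabla g(x)$, the hypothesis $N_x^T v < 0$ is equivalent to $\nabla g(x)^T v > 0$, so $g(x + \lambda v) > 0$ for all sufficiently small $\lambda > 0$. Combined with $(x + \lambda v)^Td > 0$ for small $\lambda$ (by continuity, since $x^Td > 0$), this yields $x + \lambda v \in \opn{cone}(d, \theta_c)$.

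The gradient computation and the Taylor estimate are routine; the only real care is in correctly orienting the normal — proving $N_x$ is the outward and not the inward normal, which I handle through the sign of $\nabla g(x)^T N_x$ — and in dispatching the degeneracies at the origin and the redundant half-space constraint $x^Td \geq 0$. Since the cone need not be convex when $\theta_c > \pi/2$, I deliberately avoid any appeal to supporting hyperplanes and argue purely from the smooth boundary hypersurface $\{g = 0\}$, which is what makes the gradient argument robust.
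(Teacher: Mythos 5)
Your proof is correct and is essentially the paper's argument in a cleaner package: the paper differentiates $\cos\theta(x+\lambda v,d)$ in $\lambda$ directly, and the first-order coefficient it obtains, $\|x\|^2 v^Td - (x^Tv)(x^Td)$, equals $\|x\|^2\,\nabla g(x)^Tv$ for your level-set function $g$ (using $x^Td=\cos(\theta_c)\|x\|\|d\|$ on the boundary), so the two computations coincide up to a positive factor. The only substantive difference is that you justify the \emph{outward} orientation of $N_x$ via the sign of $\nabla g(x)^TN_x$ rather than via convexity and the supporting hyperplane theorem as the paper does, which is a perfectly valid (and arguably more self-contained) route in the regime $\theta_c<\pi/2$ that you, like the paper's applications, work in.
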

\begin{proof}
The first statement follows from the second, by the supporting hyperplane theorem.

Write $\gamma = \cos(\theta_c)$. Let $N_x = x - \frac{\|x\|}{\gamma \|d\|}d$, and let $v$ be a unit vector with $N_x^Tv = x^Tv - \frac{\|x\|}{\gamma \|d\|}d^Tv < 0$. The angle satisfies
$$\cos(\theta(x + \lambda v, d)) = \frac{d^T(x+ \lambda v)}{ \|d\|\|x + \lambda v\|} = \frac{d^Tx + \lambda d^Tv}{ \|d\|\sqrt{\|x\|^2 + \lambda^2 \|v\|^2 + 2\lambda x^Tv }}$$
Differentiating, the numerator $g(\lambda)$ of $\frac{\partial}{\partial \lambda} \cos(\theta(x+\lambda v, d))$ is given by
$$g(\lambda) = \|x\|^2 v^Td - x^Tv x^Td 
+ \lambda \cdot (2 v^Td x^Td + \|v\|^2(\lambda v-x)^Td - \lambda \|v\|^2 v^Td - x^Tv v^Td)$$
Evaluating at $\lambda = 0$ and using $x^Tv - \frac{\|x\|}{\gamma \|d\|} d^Tv < 0$, we obtain
\begin{align*}
g(0) = \|x\|^2 v^Td - x^Tv x^Td &= \|x\|^2 v^Td - x^Tv(\gamma \|x\| \|d\|) \\
&= \|x\|( \|x\|v^Td - \gamma \|d\| x^Tv) > 0.
\end{align*}
Therefore, for small positive $\lambda$, we have $\cos(\theta(x + \lambda v, d)) > \cos(\theta(x,d)) = \gamma$, so $x + \theta v \in \opn{cone}(d, \theta_c)$.
\end{proof}

\begin{prop}\label{imp_small_force}
Let $x$ be any point such that $\theta_x = \theta(d_G(x),d_N(x)) > 0$, and let $E = \{ z: f(z) \leq f(x)\}$. Let $C = \opn{cone}(-x, \theta_x)$, and let $N_x$ be the outward normal $-\nabla f(x) + \frac{\|\nabla f(x)\|}{\cos(\theta_x)\|x\|}x$ of the cone $C$ at the point $-\nabla f(x)$. Then
\begin{equation}\label{eq:angle_improv_char}\bigcup_{\lambda > 0} I_\theta(x,\lambda) \supseteq E \cap \{z: N_x^Tz <N_x^Tx\}\end{equation}
and consequently, $\lim_{\lambda \rightarrow 0} \opn{Vol}(I_\theta(x,\lambda)) \geq \frac{1}{2} \opn{Vol}(E)$.
\end{prop}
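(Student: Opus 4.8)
The plan is to first establish the set inclusion \eqref{eq:angle_improv_char}, and then derive the volume bound from it by combining the monotonicity of $I_\theta(x,\lambda)$ in $\lambda$ with a symmetry argument for the ellipsoid $E$. The conceptual hinge is that, since $f(x) = \tfrac12 x^TAx$ gives $\nabla f(x) = Ax$ and $d_N(x) = -x$, the angle-improvement condition $\theta(d_z, d_N(x)) \le \theta_x$ is exactly membership of $d_z$ in the cone $C = \opn{cone}(-x, \theta_x)$; the two are equivalent because $\cos\theta_x = \tfrac{x^TAx}{\|Ax\|\|x\|} > 0$ forces $\theta_x < \tfrac\pi2$, so the sign constraint $d_z^T(-x) \ge 0$ in the definition of the cone is automatic.

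For the inclusion, I would rewrite the LGD direction as a perturbation of the gradient direction off the boundary of $C$: since $d_z = -(\nabla f(x) + \lambda(x-z)) = d_G(x) + \lambda(z-x)$ and $d_G(x) = -\nabla f(x)$ satisfies $\theta(d_G(x), -x) = \theta_x$, the point $d_G(x)$ lies on $\partial C$ and $d_z$ is reached from it by moving in direction $v = z - x$. Then I invoke \Cref{cone_normal} with boundary point $d_G(x)$ and outward normal $N_x$: whenever $N_x^T v < 0$, equivalently $N_x^T z < N_x^T x$, there is a threshold below which every small $\lambda > 0$ yields $d_z \in C$, hence $z \in I_\theta(x,\lambda)$. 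Since $z \in E$ by hypothesis, such $z$ lie in $\bigcup_{\lambda>0} I_\theta(x,\lambda)$, giving \eqref{eq:angle_improv_char}. The degenerate case $z = x$ is harmlessly excluded since it fails the strict inequality $N_x^T z < N_x^T x$, and for $z \neq x$ one normalizes $v$ so that \Cref{cone_normal} applies verbatim.

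For the volume statement, the nesting $I_\theta(x,\lambda_1) \supseteq I_\theta(x,\lambda_2)$ for $\lambda_1 \le \lambda_2$ makes $\{I_\theta(x,1/n)\}_n$ increasing with union $\bigcup_{\lambda>0} I_\theta(x,\lambda)$, so continuity of Lebesgue measure gives $\lim_{\lambda\to 0}\opn{Vol}(I_\theta(x,\lambda)) = \opn{Vol}(\bigcup_{\lambda>0} I_\theta(x,\lambda))$. By \eqref{eq:angle_improv_char} it then suffices to show $\opn{Vol}(E \cap \{z: N_x^T z < N_x^T x\}) \ge \tfrac12 \opn{Vol}(E)$. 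Since $E = \{z: z^TAz \le x^TAx\}$ is an ellipsoid centered at the origin, it is symmetric under $z \mapsto -z$; writing $c = N_x^T x$, the reflection sends $\{N_x^T z \ge c\}$ to $\{N_x^T z \le -c\}$, so these two slices of $E$ have equal volume, and when $c \ge 0$ the slice $\{N_x^T z \le -c\}$ is contained in $\{N_x^T z < c\}$, forcing the latter half to carry at least half the total volume.

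The main obstacle is confirming that the center of $E$ sits on the correct side of the separating hyperplane, i.e. that $c = N_x^T x \ge 0$. Expanding $N_x = -\nabla f(x) + \tfrac{\|\nabla f(x)\|}{\cos\theta_x \|x\|}x$ and substituting $\cos\theta_x = \tfrac{x^TAx}{\|Ax\|\|x\|}$ produces $N_x^T x = -x^TAx + \tfrac{\|Ax\|^2\|x\|^2}{x^TAx}$, which is strictly positive exactly because the Cauchy--Schwarz inequality $x^TAx \le \|Ax\|\|x\|$ is strict when $\theta_x > 0$ (so $Ax$ and $x$ are not parallel). This sign is the linchpin of the whole argument: were $c$ negative, the same symmetry computation would return \emph{less} than half the volume, so the hypothesis $\theta_x > 0$ enters precisely here.
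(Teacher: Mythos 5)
Your proof is correct and follows essentially the same route as the paper's: the inclusion \eqref{eq:angle_improv_char} via \Cref{cone_normal} applied at the boundary point $-\nabla f(x)$ of $C$, monotonicity of $I_\theta(x,\lambda)$ in $\lambda$, and the observation that a half-space whose boundary excludes the center of the centrally symmetric ellipsoid $E$ captures at least half its volume. Your explicit Cauchy--Schwarz verification that $N_x^Tx>0$ is a detail the paper merely asserts, but it does not change the argument.
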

\begin{proof}
First, note that if $\lambda_2 \leq \lambda_1$, then for all $z$ with $-\nabla f(x) + \lambda_1 z \in C$, we also have $-\nabla f(x) + \lambda_2 z \in C$ by the convexity of $C$. Therefore $I_\theta(x, \lambda_2) \supseteq I_\theta(x,\lambda_1)$, so $\lim_{\lambda \rightarrow 0} \opn{Vol}(I_\theta(x,\lambda))$ exists. We first prove the second statement. For any normal vector $h$ and $\beta > 0$, $\opn{Vol}(E \cap \{ z: h^Tz < \beta\}) \geq \frac{1}{2}\opn{Vol}(E)$, since the center $0 \in \{z: h^Tz < \beta\}$. The result follows because $N_x^Tx > 0$.

To prove (\ref{eq:angle_improv_char}), observe that $z \in I_\theta(x,\lambda)$ if equivalent to $-\nabla f(x) + \lambda (z - x) \in \opn{cone}(-x, \theta_c)$. By \Cref{cone_normal}, there exists $\lambda > 0$ with $-\nabla f(x) + \lambda(z-x) \in \opn{cone}(-x,\theta_c)$ if $N_x^T(z-x) < 0$. Hence, it follows that every point in $E \cap \{z: N^Tz < N^Tx\}$ is contained in $I_\theta(x,\lambda)$ for some $\lambda > 0$.
\end{proof}

\begin{lma}
There exists a direction $x$ such that $\cos(\theta(d_G(x), d_N(x))) = 2(\sqrt{\kappa}+\sqrt{\kappa^{-1}})^{-1}$. Thus, for all $r \geq 2$, there exists a direction $x$ with $\cos(\theta(d_G(x), d_N(x))) \leq \frac{r}{\sqrt{\kappa}}$.
\end{lma}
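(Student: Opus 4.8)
The plan is to exhibit one explicit direction and verify the claimed cosine value by direct computation, then read off the ``thus'' clause as an elementary consequence. Since $d_G(x) = -Ax$ and $d_N(x) = -x$ for the quadratic $f(x) = \frac12 x^TAx$, and $A$ is positive definite, the relevant cosine is
\[
\cos(\theta(d_G(x), d_N(x))) = \frac{(-Ax)^T(-x)}{\|Ax\|\,\|x\|} = \frac{x^TAx}{\|Ax\|\,\|x\|},
\]
which is scale-invariant in $x$, so it suffices to name a single direction. Writing $\lambda_{\max} = M$ and $\lambda_{\min} = m$ (so that $\kappa = M/m$), I would search for the extremal direction inside the two-dimensional span of unit eigenvectors $v_1, v_2$ for $\lambda_{\max}, \lambda_{\min}$, since the angle between $Ax$ and $x$ is largest when $x$ mixes the two extreme eigendirections.

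To locate the correct mixing ratio, I would set $x = v_1 + t v_2$ and compute $\tan(\theta(Ax,x)) = \frac{t(\kappa-1)}{\kappa + t^2}$, where I normalize $\lambda_{\min}=1,\lambda_{\max}=\kappa$ without loss of generality (replacing $A$ by $cA$ leaves the angle unchanged). Maximizing the right-hand side over $t$ gives $t = \sqrt\kappa$. I would then take $x = v_1 + \sqrt\kappa\, v_2$ and verify the three pieces directly: $x^TAx = 2\lambda_{\max}$, $\|Ax\|^2 = \lambda_{\max}(\lambda_{\max}+\lambda_{\min})$, and $\|x\|^2 = 1 + \kappa$. Substituting and simplifying via $\lambda_{\max} = \kappa\lambda_{\min}$ yields
\[
\cos(\theta(d_G(x),d_N(x))) = \frac{2\sqrt\kappa}{\kappa+1} = \frac{2}{\sqrt\kappa + \sqrt{\kappa^{-1}}} = 2(\sqrt\kappa + \sqrt{\kappa^{-1}})^{-1},
\]
which establishes the first claim. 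The optimization step is only a heuristic to \emph{find} the direction; the proof itself needs only this closed-form verification, which is routine algebra.

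For the second statement, I would note that the same direction serves every $r \ge 2$ at once. Since $\sqrt\kappa + \sqrt{\kappa^{-1}} \ge \sqrt\kappa$, we have
\[
2(\sqrt\kappa + \sqrt{\kappa^{-1}})^{-1} \le \frac{2}{\sqrt\kappa} \le \frac{r}{\sqrt\kappa}
\]
whenever $r \ge 2$, so $x = v_1 + \sqrt\kappa\, v_2$ already satisfies $\cos(\theta(d_G(x),d_N(x))) \le r/\sqrt\kappa$. The only substantive step is pinning down the maximizing ratio $t=\sqrt\kappa$ in the second paragraph; once this is identified, every remaining step is a short computation, so I anticipate no real obstacle. (If the stronger statement that $\frac{2\sqrt\kappa}{\kappa+1}$ is the globally smallest attainable cosine were wanted, I would additionally reduce an arbitrary direction to the extreme eigenplane and argue the angle cannot increase off it, but this is not needed for the lemma as stated.)
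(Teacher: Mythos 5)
Your proof is correct and follows essentially the same route as the paper: the paper's witness $x = \sqrt{\tfrac{\alpha_n}{\alpha_1+\alpha_n}}\,e_1 + \sqrt{\tfrac{\alpha_1}{\alpha_1+\alpha_n}}\,e_n$ is exactly your $v_1 + \sqrt{\kappa}\,v_2$ after normalization, and the paper simply asserts the cosine computation that you carry out explicitly (along with the routine bound $2(\sqrt{\kappa}+\sqrt{\kappa^{-1}})^{-1} \leq 2/\sqrt{\kappa} \leq r/\sqrt{\kappa}$ for the second claim). Your additional derivation of the mixing ratio $t=\sqrt{\kappa}$ by maximizing $\tan\theta$ over the extreme eigenplane is a nice motivation but, as you note, not needed for the verification itself.
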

\begin{proof}
Take $x = \sqrt{\frac{\alpha_n}{\alpha_1+\alpha_n}} e_1 + \sqrt{\frac{\alpha_1}{\alpha_1+\alpha_n}}e_n$. It is easy to verify that $\cos(\theta(d_G, d_N)) = 2(\sqrt{\kappa}+\sqrt{\kappa^{-1}})^{-1}$.
\end{proof}

\begin{prop}\label{orth_big_angle}
For any $x$, let $\theta_x = \theta(d_G(x), d_N(x))$. We have
$$\max \{ \|z\|^2: f(z) \leq f(x), z^Tx = 0\} \leq \kappa \cos(\theta_x) \|x\|^2$$
\end{prop}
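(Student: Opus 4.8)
The plan is to prove the bound by \emph{relaxing} the orthogonality constraint $z^Tx=0$, because the right-hand side $\kappa\cos(\theta_x)\|x\|^2$ turns out to already dominate the maximum of $\|z\|^2$ over the full ellipsoid $\{z: f(z)\leq f(x)\}$. First I would record the relevant quantities for the quadratic $f(x)=\frac12 x^TAx$: writing $\alpha_1 = m \leq \cdots \leq \alpha_n = M$ for the eigenvalues of $A$, the constraint $f(z)\leq f(x)$ is $z^TAz \leq x^TAx$, and since $d_G(x)=-Ax$, $d_N(x)=-x$ we have $\cos(\theta_x)=\frac{x^TAx}{\|Ax\|\,\|x\|}$ (note $x^TAx>0$, so $\theta_x\in[0,\pi/2)$ and this cosine is positive).

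Next I would discard the constraint $z^Tx=0$. The feasible set $\{z: z^TAz\leq x^TAx,\ z^Tx=0\}$ is contained in the ellipsoid $\{z: z^TAz\leq x^TAx\}$, so it suffices to bound $\|z\|^2$ over the ellipsoid. There the smallest-eigenvalue estimate $z^TAz \geq m\|z\|^2$ gives at once $\|z\|^2 \leq \frac1m z^TAz \leq \frac1m x^TAx$, the extreme point being attained along the eigenvector $e_1$.

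The remaining step is to verify $\frac1m x^TAx \leq \kappa\cos(\theta_x)\|x\|^2$, which is where I would invoke $\|Ax\|\leq M\|x\|$ (as $\|Ax\|^2 = x^TA^2x \leq M^2\|x\|^2$). Substituting the formula for the cosine,
\[
\kappa\cos(\theta_x)\|x\|^2 = \frac{M}{m}\cdot\frac{x^TAx}{\|Ax\|\,\|x\|}\cdot\|x\|^2 = \frac{M}{m}\cdot\frac{x^TAx\,\|x\|}{\|Ax\|} \geq \frac{M}{m}\cdot\frac{x^TAx}{M} = \frac{x^TAx}{m},
\]
the single inequality being exactly $\|Ax\|\leq M\|x\|$. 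Chaining this with the ellipsoid bound closes the argument.

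Since each step is a one-line estimate, I do not anticipate a serious technical obstacle; the only points that require care are logical rather than computational. The orthogonality hypothesis is simply dropped (it only shrinks the feasible set and hence the maximum), and the inequality $\|Ax\|\leq M\|x\|$ must be applied in the correct direction, so that it weakens $\kappa\cos(\theta_x)\|x\|^2$ \emph{downward} to $\frac1m x^TAx$ and not the reverse. I would remark in passing that this shows the stated bound in fact holds even without the constraint $z^Tx=0$; that constraint is presumably kept because the estimate is later applied to the component of a candidate leader orthogonal to $x$.
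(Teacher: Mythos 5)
Your proof is correct, and it converges on the same two-step skeleton as the paper's: first establish $\|z\|^2 \leq \frac{1}{m}x^TAx$, then rewrite $x^TAx = \cos(\theta_x)\|x\|\|Ax\|$ and apply $\|Ax\| \leq M\|x\|$ to reach $\kappa\cos(\theta_x)\|x\|^2$. The difference is in how the first step is obtained. The paper sets up the constrained maximization formally and invokes the KKT conditions: from $z - \mu_1 Az - \mu_2 x = 0$ and $z^Tx = 0$ it gets $z^Tz = \mu_1 z^TAz$ with $\frac{1}{M} \leq \mu_1 \leq \frac{1}{m}$, hence $z^Tz \leq \frac{1}{m}x^TAx$. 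You instead drop the orthogonality constraint entirely and use the elementary eigenvalue estimate $z^TAz \geq m\|z\|^2$ on the whole ellipsoid. Your route is a strict simplification: the multiplier bound $\mu_1 \leq \frac{1}{m}$ in the paper is itself derived from the same eigenvalue inequality, so the KKT machinery is a detour that buys nothing here, and your observation that the stated bound holds without $z^Tx = 0$ is accurate. The constraint is indeed only relevant downstream, in \Cref{imp_big_angle}, where the estimate is applied to the set $\{z \in E_2 : z^Tx = 0\}$; neither proof exploits orthogonality to sharpen the constant. One could in principle use $z^Tx=0$ to improve the bound (the true constrained maximum can be smaller than $\frac{1}{m}x^TAx$ when $x$ is close to the $e_1$ eigendirection), but the proposition does not claim this and your argument fully establishes what is stated.
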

\begin{proof}
Form the maximization problem
$$
\left\{ \begin{array}{cl} \max\limits_z & z^Tz \\
& z^TAz \leq x^TAx \\
& z^Tx = 0
\end{array} \right.$$
The KKT conditions for this problem imply that the solution satisfies $z - \mu_1 Az - \mu_2 x = 0$, for Lagrange multipliers $\mu_1 \geq 0, \mu_2$. Since $z^Tx = 0$, we obtain $z^Tz = \mu_1 z^TAz$, and thus $\frac{1}{M} \leq \mu_1 \leq \frac{1}{m}$. Since $f(z) \leq f(x)$, we find that $z^Tz \leq \frac{1}{m} x^TAx$. Using $\cos(\theta_x) = \frac{x^TAx}{\|x\|\|Ax\|}$, we obtain
$$z^Tz \leq \frac{1}{m} \cos(\theta_x) \|x\|\|Ax\| \leq \kappa \cos(\theta_x) \|x\|^2.$$
\end{proof}

\begin{thm}\label{imp_big_angle}
Let $R_\kappa = \{r: \frac{r}{\sqrt{\kappa}} + \frac{r^{3/2}}{\kappa^{1/4}} \leq 1\}$. Let $x \in S_r$ for $r \in R_\kappa$, and let $E = \{y: f(y) \leq f(x)\}$, $E_2 = \{z \in E: z^Tx \leq 0 \}$, $\theta_x = \theta(d_G(x), d_N(x))$. Then for all $z \in E_2$ and any $\lambda \geq 0$, the LGD direction $d_z = -(\nabla f(x) + \lambda(x-z))$ satisfies $\theta(d_z, d_N(x)) \leq \theta_x$. Thus, $E_2 \subseteq I_\theta(x, \lambda)$, and therefore $\opn{Vol}(I_\theta(x,\lambda)) \geq \opn{Vol}(E_2) = \frac{1}{2}\opn{Vol}(E)$.
\end{thm}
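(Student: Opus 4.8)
The plan is to reduce the angle statement to a geometric containment and then exploit the convexity of the relevant cone. Writing the LGD direction as $d_z = -\nabla f(x) + \lambda(z-x) = d_G(x) + \lambda(z-x)$, I would work with the cone $C = \opn{cone}(d_N(x),\theta_x) = \opn{cone}(-x,\theta_x)$, which is convex because $\cos\theta_x = r/\sqrt\kappa > 0$ forces $\theta_x < \pi/2$, and in which $d_G(x) = -Ax$ sits exactly on the boundary by the definition of $\theta_x$. Since a convex cone is closed under addition and nonnegative scaling, it then suffices to show that $z - x \in C$ for every $z \in E_2$: that makes $d_z = d_G(x) + \lambda(z-x)$ a nonnegative combination of two members of $C$, hence a member of $C$, which is precisely $\theta(d_z,d_N(x)) \le \theta_x$. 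The final volume bound is then immediate, since $E_2$ is exactly the half of the centrally symmetric ellipsoid $E$ carved out by the hyperplane $z^Tx = 0$.

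To establish $z - x \in C$ I would normalize $\|x\| = 1$ and set $s = z^Tx \le 0$, $z_\perp = z - sx$. The membership splits into a trivial half-space condition $(z-x)^T(-x) = 1 - s \ge 1 > 0$ and an angle condition $\cos\theta(z-x,-x) \ge \cos\theta_x$. Using $\|z-x\|^2 = (1-s)^2 + \|z_\perp\|^2$ and squaring, the angle condition becomes
\[
(\kappa - r^2)(1-s)^2 \ge r^2\,\|z_\perp\|^2,
\]
so everything comes down to a uniform upper bound on the ratio $\|z_\perp\|^2/(1-s)^2$ over $E_2$.

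The hard part is bounding $\|z_\perp\|$, and the main obstacle is that $x$ need not be an eigenvector of $A$: the naive hope that the perpendicular extent is largest on the central slice $s=0$ is false, since the maximum of $\|z_\perp\|$ over all of $E$ is attained at a point $A$-conjugate to $x$ with $s \ne 0$. My resolution is to control the $A$-norm of $z_\perp$ first. Expanding the constraint $z^TAz \le x^TAx =: c$ with $z = sx + z_\perp$ gives $z_\perp^TAz_\perp \le c(1-s^2) - 2s\,x^TAz_\perp$; bounding the cross term by the $A$-weighted Cauchy--Schwarz inequality $|x^TAz_\perp| \le \sqrt{c}\,\sqrt{z_\perp^TAz_\perp}$ and solving the resulting quadratic in $\sqrt{z_\perp^TAz_\perp}$ should yield the clean bound $z_\perp^TAz_\perp \le c(1-s)^2$. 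Because $z_\perp \perp x$, I can then apply \Cref{orth_big_angle} to the rescaled vector $z_\perp/(1-s)$ to obtain $\|z_\perp\|^2 \le (1-s)^2\,\kappa\cos\theta_x\,\|x\|^2 = (1-s)^2\, r\sqrt\kappa$, so the ratio is uniformly at most $r\sqrt\kappa$. Substituting back, it remains to verify $r\sqrt\kappa \le (\kappa-r^2)/r^2$, equivalently $\kappa \ge r^2 + r^3\sqrt\kappa$; this follows from $r \in R_\kappa$, since squaring the defining inequality $\sqrt\kappa \ge r + r^{3/2}\kappa^{1/4}$ produces $\kappa \ge r^2 + 2r^{5/2}\kappa^{1/4} + r^3\sqrt\kappa \ge r^2 + r^3\sqrt\kappa$. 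This closes the reduction and proves the theorem.
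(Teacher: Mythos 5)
Your proof is correct, and while it rests on the same two pillars as the paper's argument --- the convexity and scale-invariance of $\opn{cone}(-x,\theta_x)$, and the bound of \Cref{orth_big_angle} on the orthogonal slice --- the technical core is genuinely different. The paper first uses \Cref{scale_cone} to reduce the claim $D_2 \subseteq \opn{cone}(-x,\theta_x)$ to the slice $\{z \in E_2 : z^Tx = 0\}$ (a step that quietly relies on convexity of $E$ to keep the rescaled point $s\lambda z + (1-s\lambda)x$ inside $E$), and then finishes on that slice with the triangle inequality $\|x-z\| \leq \|x\| + \|z\|$ and the unsquared defining inequality of $R_\kappa$. You instead handle every $z \in E_2$ at once via the decomposition $z = sx + z_\perp$, control $z_\perp^TAz_\perp \leq c(1-s)^2$ through the $A$-weighted Cauchy--Schwarz inequality and a quadratic in $\sqrt{z_\perp^TAz_\perp}$, and only then invoke \Cref{orth_big_angle} on the rescaled vector $z_\perp/(1-s)$; all steps check out, including the observation that $s\lambda \leq 1$ issues never arise because you never rescale into $E$. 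Your route is more computational but exact where the paper uses the triangle inequality, and as a result your final sufficient condition $\kappa \geq r^2 + r^3\sqrt{\kappa}$ is strictly weaker than the squared form $\kappa \geq r^2 + 2r^{5/2}\kappa^{1/4} + r^3\sqrt{\kappa}$ of the paper's $R_\kappa$ condition --- so your argument in fact establishes the conclusion for a slightly larger range of $r$ than the theorem states, at the cost of a longer verification.
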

\begin{proof}
Define $D_2 = \{z - x : z \in E_2\}$\footnote{Note the sign change from $x - z$ to $z - x$ here.}. The set of possible LGD directions with $z \in E_2$ is given by $D_3 = \{ -\nabla f(x) + \lambda \delta: \delta \in D_2, \lambda \geq 0 \}$. 
Since $d_N(x) = -x$, our desired result is equivalent to $D_3 \subseteq \opn{cone}(-x, \theta_x)$.

Define the subset $D_2' = \{z - x: z \in E_2, x^Tz = 0\}$. We claim that it suffices to prove that $D_2' \subseteq \opn{cone}(-x, \theta_x)$. To see this, consider any $\lambda \delta$ for $\lambda \geq 0$ and $\delta \in D_2$. We have $x^T(\lambda \delta) = \lambda x^T(z - x) \leq -\lambda x^Tx < 0$, so there exists a scalar $s$ with $x^T(s\lambda \delta) = -x^Tx$, whence $s\lambda \delta \in D_2' \subseteq \opn{cone}(-x, \theta_x)$. By \Cref{scale_cone}, $\lambda \delta \in \opn{cone}(-x, \theta_x)$. Since $-\nabla f(x) \in \opn{cone}(-x, \theta_x)$, convexity implies that $-\nabla f(x) + \lambda \delta \in \opn{cone}(-x, \theta_x)$. Thus, $D_2' \subseteq \opn{cone}(-x, \theta_x)$ implies that $D_3 \subseteq \opn{cone}(-x, \theta_x)$.

To complete the proof, let $\delta = z - x \in D_2'$ and observe that $\cos(\theta(\delta, d_N(x))) = \frac{x^T(x - z)}{\|x\|\|x - z\|}$. By \Cref{orth_big_angle} and the definition of $S_r$,
$$\max\{ \|z\|: z \in E_2, z^Tx = 0\} \leq \sqrt{\kappa} \sqrt{\cos(\theta_x)} \|x\| = \sqrt{r}\kappa^{1/4}\|x\|$$
We compute that
\begin{align*}
x^T(x-z) - \frac{r}{\sqrt{\kappa}}\|x\|\|x - z\| &\geq \|x\|^2 - \frac{r}{\sqrt{\kappa}}(\|x\|^2 + \|x\|\|z\|) \\
&\geq \|x\|^2 - \frac{r}{\sqrt{\kappa}}\|x\|^2 - \frac{r}{\sqrt{\kappa}}\|x\|(\sqrt{r}\kappa^{1/4} \|x\|) \\
&\geq \left( 1 - \frac{r}{\sqrt{\kappa}} - \frac{r^{3/2}}{\kappa^{1/4}} \right) \|x\|^2 \geq 0
\end{align*}
By the definition of $R_\kappa$, this is non-negative, and thus $\theta(\delta, d_N(x)) \leq \theta_x$. This completes the proof.
\end{proof}

\section{Low-Rank Matrix Completion Experiments}

Low-rank matrix completion problem is an example of a non-convex learning problem whose landscape exhibits numerous symmetries. We consider the positive semi-definite case, where the objective is to find a low-rank matrix minimizing
$$\min_X \left\{ F(X) = \frac{1}{4}\|M - XX^T\|_F^2: X \in \RR^{d \times r}\right\}$$

It is routine to calculate that $\nabla F(X) = (XX^T - M)X$. The EAGD and LGD updates for $X$ can be expressed as
$$X_+ = (1 - \eta \lambda) X + \eta\lambda Z - \eta \nabla F(X).$$
For EAGD, $Z = \wt{X}$, and $\wt{X}$ is updated by
$$\wt{X}_+ = (1 - p\eta\lambda)\wt{X} + p\eta\lambda\left( \frac{1}{p} \sum_{i=1}^p X^i \right).$$
For LGD, $Z = \argmin \{F(X^1),\ldots,F(X^p)\}$, and is updated at the beginning of every communication period $\tau$.

The parameters were set to:
$$\eta = \texttt{5e-4}, \lambda = \frac{1}{5}, p = 8, \tau = 1$$
The learning rate $\eta = \texttt{5e-4}$ was selected from a set $\{\texttt{1e-1}, \texttt{5e-2}, \texttt{1e-3},\ldots\}$ by evaluating on a sample problem until a value was found for which both methods exhibited monotonic decrease.

The dimension was $d = 1000$, and the ranks $r \in \{1,10,50,100\}$ were tested. For each rank, there were 10 random trials performed. In each trial, $M$ and starting points $\{X^i_0\}$ are sampled. $M$ is generated by sampling $U \in \RR^{d \times r}$ with i.i.d entries from $N(0,1)$, and taking $M = UU^T$. Initial points for each worker node $X^i$ were also sampled from $N(0,1)$. The same starting points were used for EAGD and LGD.

Code for this experiment is available at \url{https://github.com/wgao-res/lsgd_matrix_completion}.

\section{Experimental Setup}
\subsection{Data preprocessing}
For CIFAR-$10$ experiments we use the original images of size $3 \times 32 \times 32$. We then normalize each image by mean $(0.4914,0.4822,0.4465)$ and standard deviation $(0.2023,0.1994,0.2010)$. We also augment the training data by horizontal flips with a probability of $0.5$.

For CNN$7$ and ResNet$20$, we extract random crops of size $3 \times 28 \times 28$ and present these to the network in batches of size $128$. The test loss and test error are only computed from the center patch $(3 \times 28 \times 28)$ of test images. 

For VGG$16$ we pad the images to $3 \times 40 \times 40$, extract random crops of size $3 \times 32 \times 32$ and present these to the network in batches of size $128$. The test loss and test error are computed from the test images.

For ImageNet experiments we normalize each image by mean $(0.485, 0.456, 0.406)$ and standard deviation $(0.229, 0.224, 0.225)$. We sample the training data in the same way as~\cite{InceptionNet}. For each image, a crop of random size (chosen from $8\%$ to $100\%$ evenly) of the original size and a random aspect ratio (chosen from $3/4$ to $4/3$ evenly) of the original aspect ratio is made. Then we resize the crop to $3 \times 224 \times 224$. We also augment the training data by horizontal flips with a probability of $0.5$.
Finally we present these to the network in the batches of size $32$. The test images are resized so that the smaller edge of each image is $256$. The test loss and test error are only computed from the center patch $(3 \times 224 \times 224)$ of test images. 

\subsection{Data prefetching}
We use the dataloader and distributed data sampler\footnote{https://pytorch.org/docs/stable/data.html} from PyTorch. Each worker loads a subset of the original data set that is exclusive to that worker for every epoch. If the size of data set is not divisible by the batch size, the last incomplete batch will be dropped.

\subsection{Hyperparameters}
In Table \ref{tab:cifar_lr} we summarize the learning rates and other hyperparameters explored for each method in the experiments on CIFAR-$10$. The setting of $\beta$ for EASGD was obtained from the original paper (its authors use this setting for all their experiments). We do learning rate drop at $500$ seconds by a factor of $0.1$ for all the methods.
\begin{table}[H]
\caption{Hyperparameters: CNN$7$/VGG$16$/ResNet$20$ experiment on CIFAR-$10$.}
\label{tab:cifar_lr}
\centering
\begin{tabular}{|c||c|c|c|c| }
 \hline
 Name & Learning Rates &Comm. Period &Batch Sizes &\\
 \hline
 DataParallel & $\{0.1, 0.01, 0.001, 0.0001\}$ &$\tau=1$ &$\{4, 16, 64, 128\}$ &\\
 \hline
 LARS & $\{100, 10, 1.0, 0.1, 0.01\}$ &$\tau=1$ &$\{4, 16, 64, 128\}$ &\\
 \hline
 EASGD    & $\{0.1, 0.01, 0.001\}$ &$\tau = \{1,4,16,64\}$ &128 &$\beta = 0.43$ \\
 \hline
 LSGD     & $\{0.1, 0.01, 0.001\}$ &\makecell{$\tau = \{1,4,16,64\}$, \\ $\tau_G = \{16, 64\}$} &128 &\makecell{$\lambda = \lambda_G =$ \\ $\{0.1, 0.05\}$, \\ $\gamma = \gamma_G =$ \\ $\{0.1, 0.01\}$} \\
 \hline
\end{tabular}
\end{table}

In Table \ref{tab:imagenet_lr} we summarize the learning rates and other hyperparameters explored for each method in experiments on ImageNet. We do learning rate drop for every $30$ epochs by a factor of $0.1$ for all the methods.
\begin{table}[H]
\caption{Hyperparameters: ResNet$50$ experiment on ImageNet.}
\label{tab:imagenet_lr}
\centering
\begin{tabular}{|c||c|c|c|c| }
 \hline
 Name & Learning Rates &Comm. Period & Batch Sizes &\\
 \hline
 DataParallel &$0.1$ &$\tau=1$ &$32$ &\\
 \hline
 LARS  &$\{10, 1.0, 0.1\}$ &$\tau=1$ &$32$ &\\
 \hline
 EASGD &$0.1$ &$\tau = \{1,4,16,64\}$ &$32$ &$\beta = 0.43$ \\
 \hline
 LSGD  &$0.1$ &$\tau = \{1,4,16,64\}$, &$32$ & $\lambda = \lambda_G = \{0.2, 0.1\}$, \\
 & &$\tau_G = \{4, 16, 64\}$ & &$\gamma=\gamma_G=\{0.1, 0.0\}$ \\
 \hline
\end{tabular}
\end{table}

\subsection{Implementation Details}
\label{sec:package}
To take advantage of both the efficiency of collective communication and the flexibility of peer-to-peer communication, we incorporate two backends, namely NCCL and GLOO\footnote{https://github.com/facebookincubator/gloo}, for GPU processors and CPU processors, respectively. 

The global and local servers (running on CPU processors) control the training process and the workers (running on GPU processors) perform the actual computations. For each iteration each worker has only one of the following two choices:
\begin{enumerate}
\item Local Training: Each worker is trained with one batch of the training data;
\item Distributed Training: Each worker communicates with other workers and updates its parameters based on the pre-defined distributed training method.
\end{enumerate}
To minimize the cost of communication over Ethernet, the global server is running on the first GPU node instead of a separate machine. Also, for a fair comparison, the center variable is being maintained and updated by the first GPU node as well\footnote{In the original implementation of~\cite{EASGD} and~\cite{DOWNPOUR}, an individual parameter server is used for updating the center variable based on the peer-to-peer communication scheme. However, there is no need to use an individual parameter server under collective communication scheme as it will only induce extra communication cost.}. 

\begin{figure}[H]
\centering
\includegraphics[width=.7\linewidth]{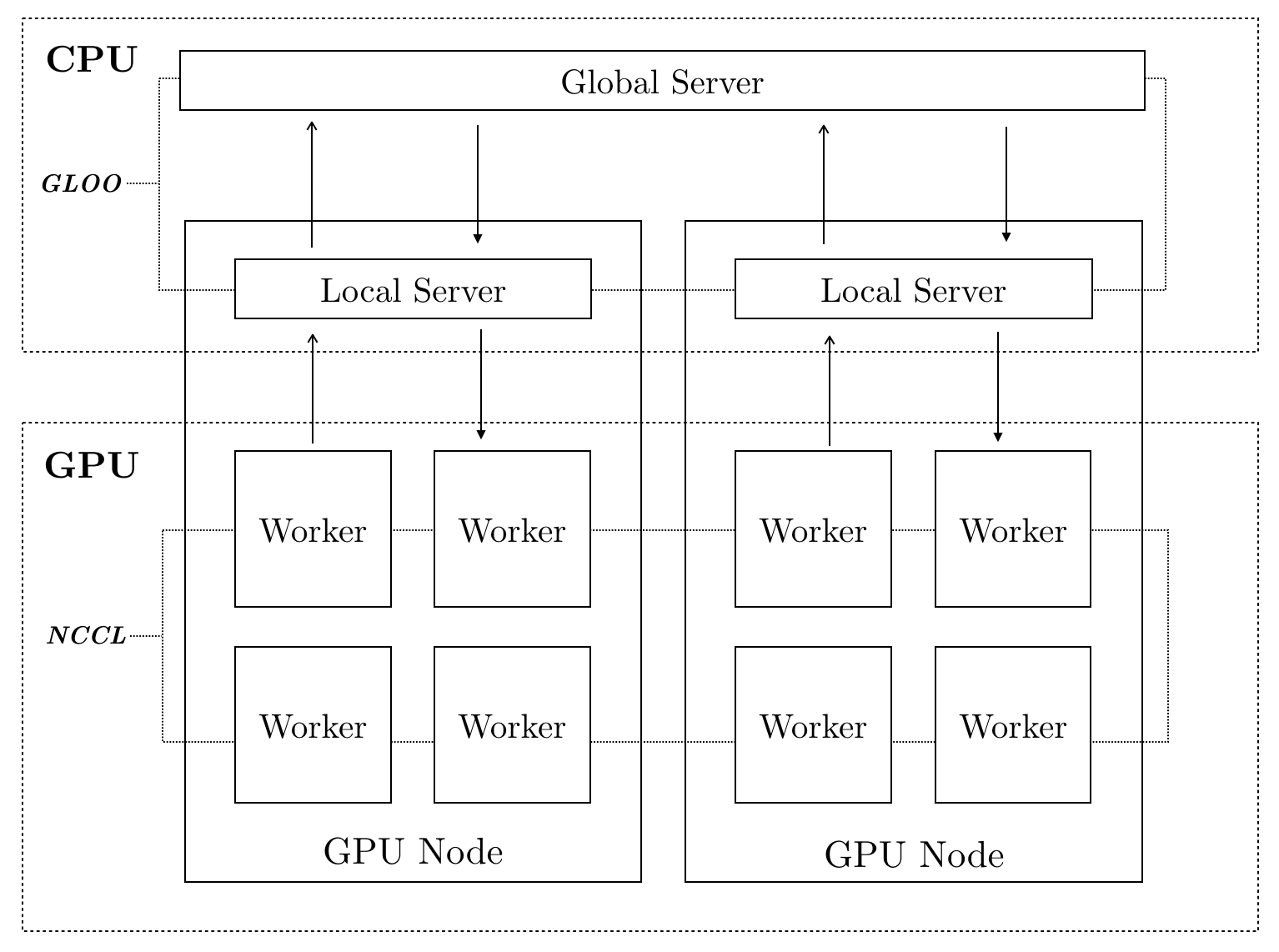}
\caption{At the beginning of each iteration, the local worker sends out a request to its local server and then the local server passes on the worker's request to the global server. The global server checks the current status and replies to the local server. The local server passes on the global server's message to the worker. Finally, depending on the message from the global server, the worker will choose to follow the local training or distributed training scheme.}
\end{figure}

\section{More results from Section \ref{sec:ExpA}}
\label{sec:ExpC}
\vspace{-0.35in}
\begin{figure}[H]
\centering
\includegraphics[width=0.48\linewidth]{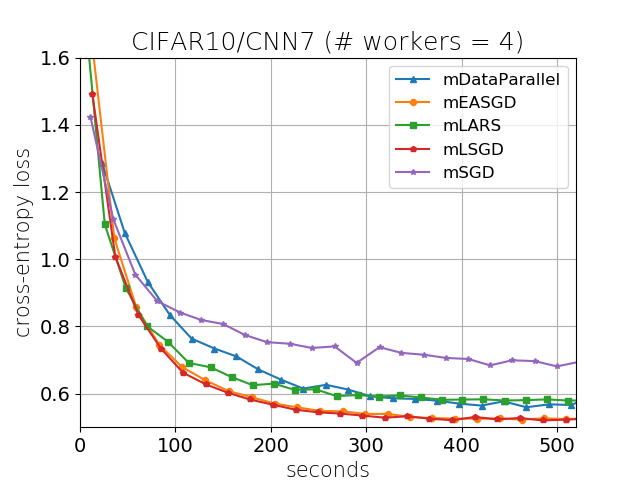}
\includegraphics[width=0.48\linewidth]{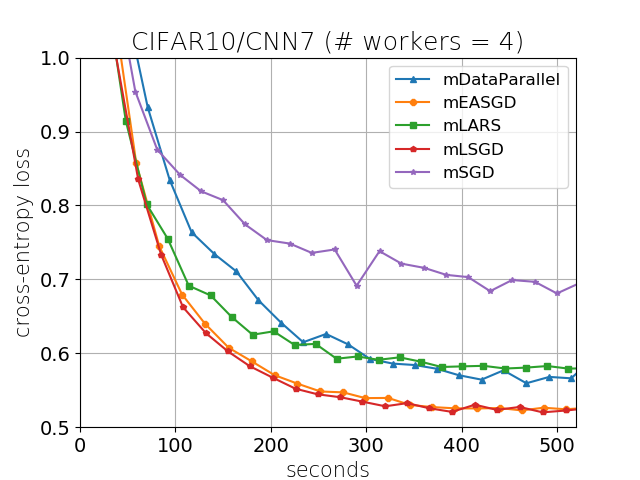}
\includegraphics[width=0.48\linewidth]{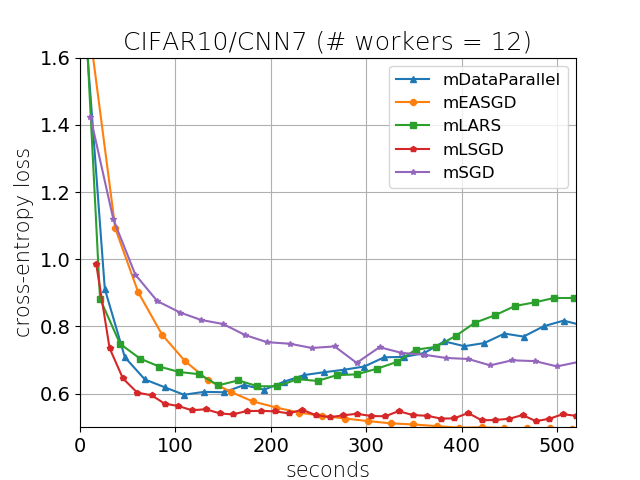}
\includegraphics[width=0.48\linewidth]{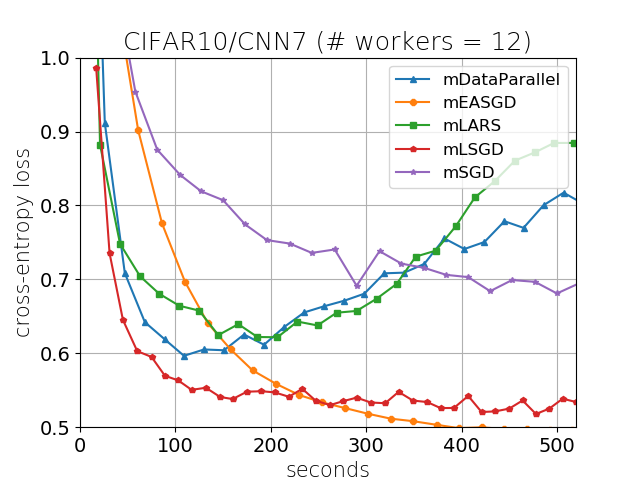}
\caption{CNN$7$ on CIFAR-$10$. Test loss for the center variable versus wall-clock time (original plot on the left and zoomed on the right).}
\label{fig:CNN7testloss}
\end{figure}

\vspace{-0.35in}
\begin{figure}[H]
\centering
\includegraphics[width=0.48\linewidth]{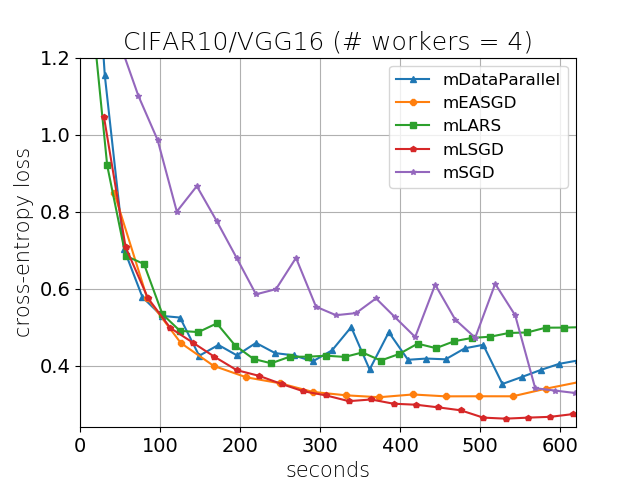}
\includegraphics[width=0.48\linewidth]{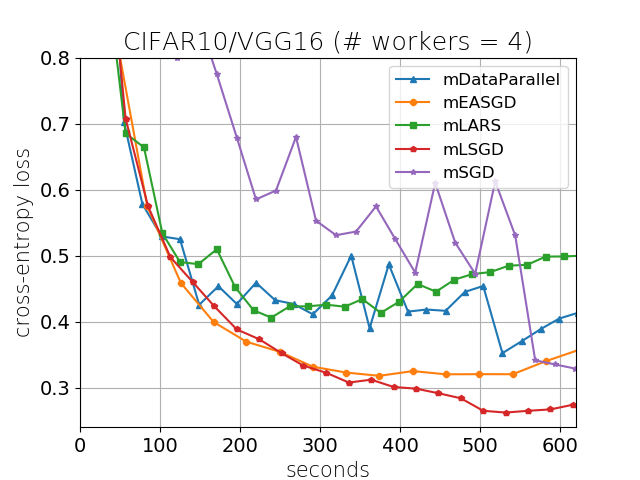}
\includegraphics[width=0.48\linewidth]{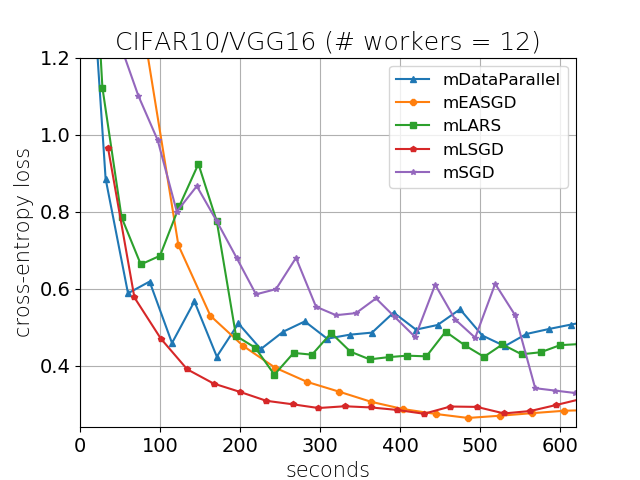}
\includegraphics[width=0.48\linewidth]{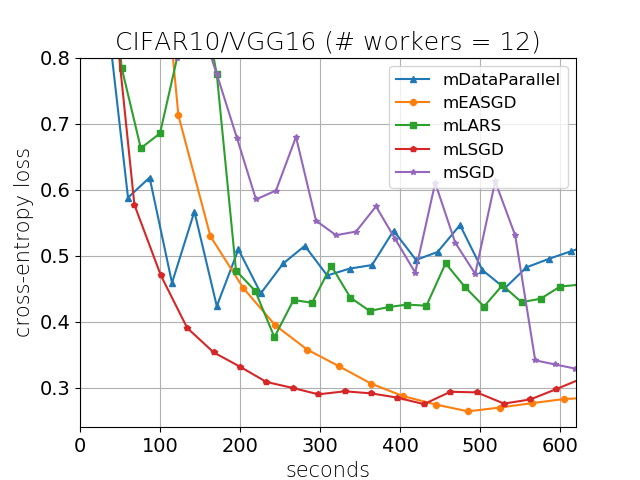}
\caption{VGG$16$ on CIFAR-$10$. Test loss for the center variable versus wall-clock time (original plot on the left and zoomed on the right).}
\label{fig:VGG16testloss}
\end{figure}

\vspace{-0.35in}
\begin{figure}[H]
\centering
\includegraphics[width=0.48\linewidth]{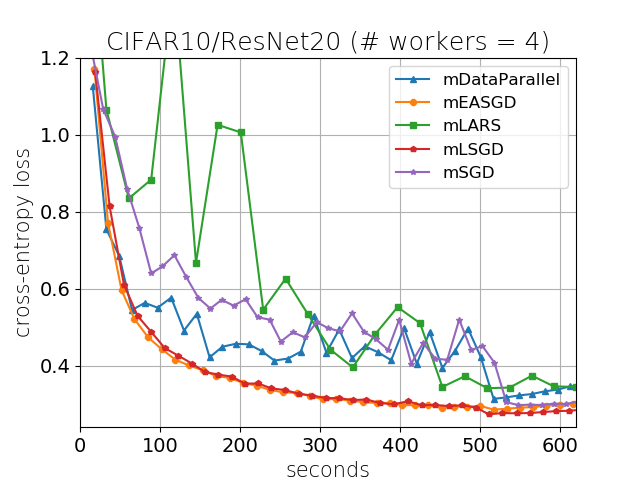}
\includegraphics[width=0.48\linewidth]{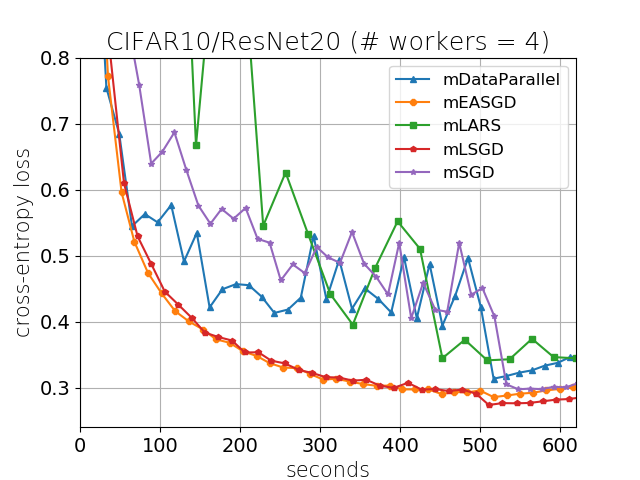}
\includegraphics[width=0.48\linewidth]{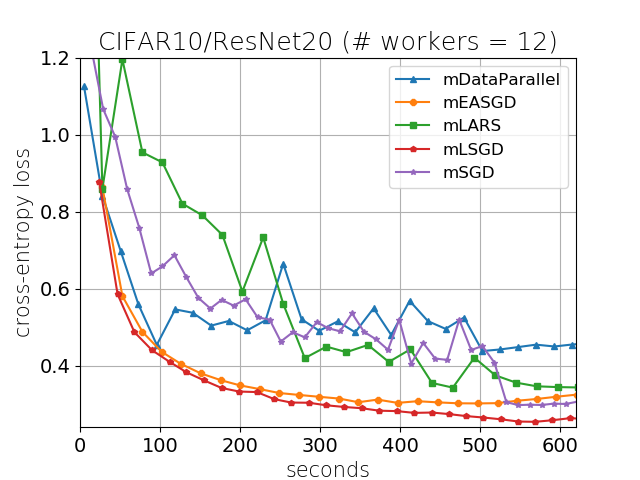}
\includegraphics[width=0.48\linewidth]{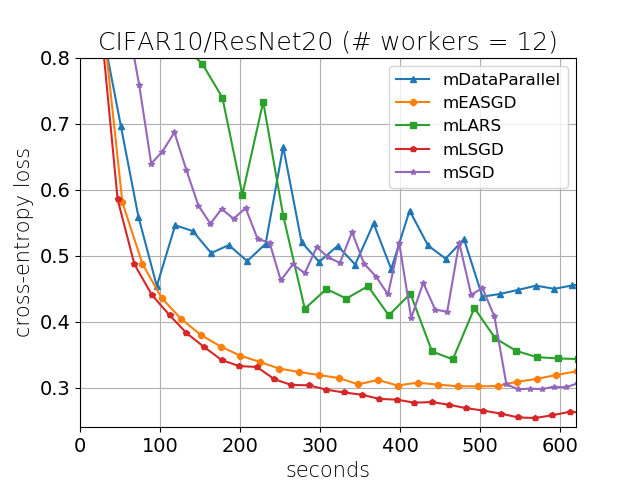}
\caption{ResNet$20$ on CIFAR-$10$. Test loss for the center variable versus wall-clock time (original plot on the left and zoomed on the right).}
\label{fig:ResNet20testloss}
\end{figure}

\vspace{-0.35in}
\begin{figure}[H]
\centering
\includegraphics[width=0.48\linewidth]{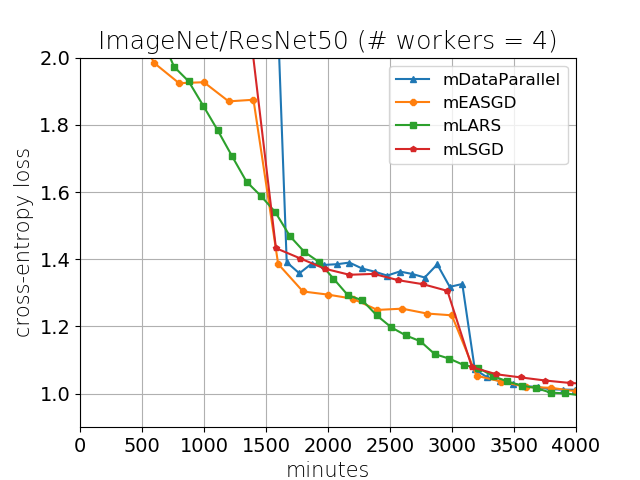}
\includegraphics[width=0.48\linewidth]{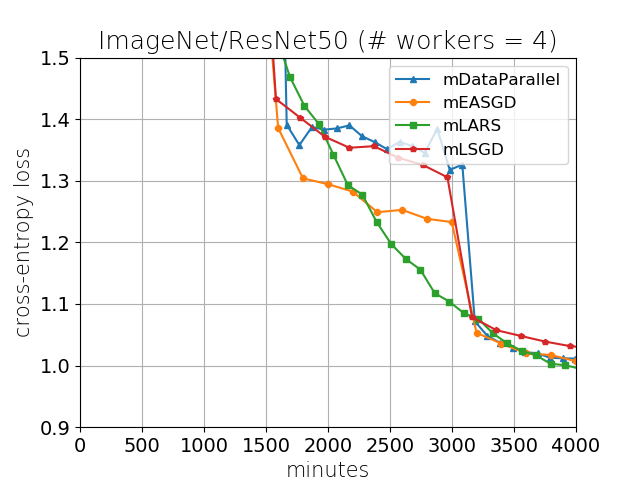}
\includegraphics[width=0.48\linewidth]{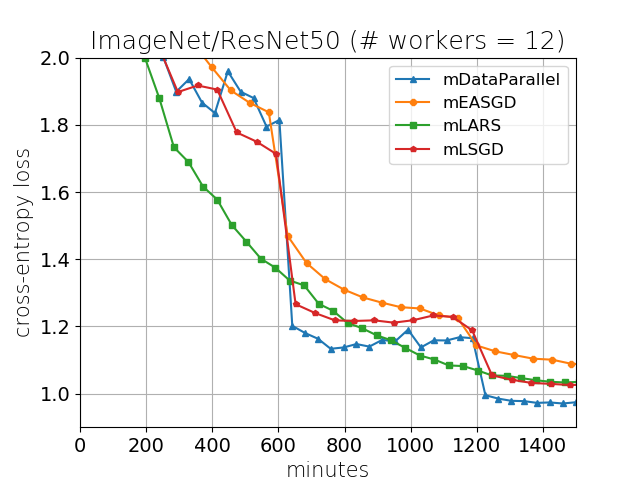}
\includegraphics[width=0.48\linewidth]{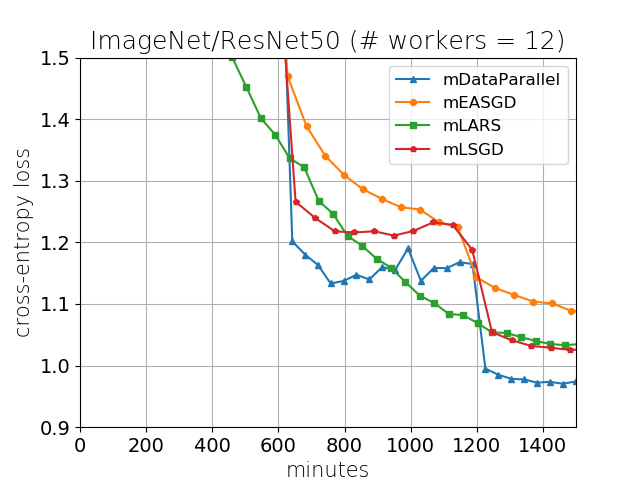}
\caption{ResNet$50$ on ImageNet. Test loss for the center variable versus wall-clock time (original plot on the left and zoomed on the right).}
\label{fig:ResNet50testloss}
\end{figure}
\end{document}